\documentclass[12pt]{article}
\usepackage[utf8]{inputenc}
\usepackage{cite}
\usepackage{amsfonts}
\usepackage{amsmath}
\usepackage{amsthm}
\usepackage{enumitem}
\usepackage{amssymb}
\usepackage{caption}
\usepackage{graphicx}
\usepackage{algorithm}
\usepackage{algpseudocode}
\usepackage{multirow}
\usepackage{xcolor}
\usepackage{setspace}
\usepackage{booktabs}
\usepackage{latexsym, mathdots, array, extarrows}
\usepackage{float}
\usepackage{lipsum}
\usepackage{pifont}
\usepackage{diagbox}

\usepackage{amsmath,amsfonts,amsthm,amssymb}
\usepackage{bm,dsfont}
\usepackage{booktabs}
\usepackage{graphicx}
\usepackage{array}
\usepackage{xcolor}
\usepackage{sidecap}
\usepackage{listings}
\usepackage{times}
\usepackage{url}
\usepackage{hyperref}
\usepackage{subcaption}
\usepackage{makecell} 

\usepackage{algorithm}      

\usepackage{cleveref}

\usepackage{pgfplots}
\pgfplotsset{compat=1.18}
\usepackage{subcaption}

\usepackage{geometry}
\newtheorem{theorem}{Theorem}
\newtheorem{proposition}{Proposition}
\newtheorem{lemma}{Lemma}
\newtheorem{assumption}{Assumption}
\newtheorem{corollary}{Corollary}

\newcommand{\ba}{\bm{a}}

\newcommand{\bG}{\bm{G}}
\newcommand{\bD}{\bm{D}}

\newcommand{\bQ}{\bm{Q}}
\newcommand{\bH}{\bm{H}}

\newcommand{\bx}{\bm{x}}
\newcommand{\bU}{\bm{U}}
\newcommand{\bz}{\bm{z}}
\newcommand{\bA}{\bm{A}}
\newcommand{\bB}{\bm{B}}
\newcommand{\bu}{\bm{u}}

\newcommand{\bb}{\bm{b}}

\newcommand{\bW}{\bm{W}}

\newcommand{\bv}{\bm{v}}
\newcommand{\bg}{\bm{g}}

\newcommand{\bSig}{\bm{\Sigma}}

\newcommand{\bxi}{\bm{\xi}}
\newcommand{\br}{\bm{r}}
\newcommand{\bzero}{\bm{0}}
\newcommand{\bJ}{\bm{J}}
\newcommand{\be}{\bm{e}}
\newcommand{\bI}{\bm{I}}

\newcommand{\bM}{\bm{M}}
\newcommand{\mJ}{\mathcal{J}}
\newcommand{\bd}{\bm{d}}
\newcommand{\bR}{\bm{R}}
\newcommand{\bE}{\bm{E}}
\newcommand{\bP}{\bm{P}}
\newcommand{\bone}{\bm{1}}
\newcommand{\bw}{\bm{w}}
\newcommand{\bmm}{\bm{m}}
\newcommand{\bC}{\bm{C}}
\newcommand{\bN}{\bm{N}}

\newcommand{\bdelta}{\bm{\delta}}
\newcommand{\bq}{\bm{q}}

\allowdisplaybreaks[4]

\title{
Statistical Inference for Rank Allocation\\
in Low-Rank Adaptation
}

\author{Yihang Gao\thanks{Department of Mathematics, National University of Singapore. Email: gaoyh@connect.hku.hk}\and Vincent Y. F.  Tan\thanks{Department of Mathematics and Department of Electrical and Computer Engineering, National University of Singapore. Email: vtan@nus.edu.sg}}

\date{}

\begin{document}

\maketitle

\begin{abstract}
Low-rank adaptation (LoRA) has become a widely used parameter-efficient fine-tuning method for large language models, owing to its computational efficiency, empirical effectiveness, and ease of implementation. 
Since different modules and layers may contribute unequally to downstream adaptation, allocating rank resources under a fixed parameter budget is an important problem for balancing efficiency, expressiveness, and generalization. 
Existing adaptive rank methods address this problem mainly through carefully designed importance scores constructed from gradient-derived sensitivity and uncertainty measures, without an explicit statistical interpretation.
In this paper, we formulate LoRA rank allocation as a statistical hypothesis testing problem and propose StatLoRA, a statistical inference-based rank allocation method. 
StatLoRA associates each LoRA component with a test statistic and uses estimated p-values to determine which components should be retained or pruned under a prescribed rank budget. 
Components with the weakest statistical evidence of contribution are removed first, yielding an allocation rule with explicit uncertainty quantification. 
The proposed testing procedure is supported by our central limit theory for stochastic optimizer trajectories. 
In particular, we establish asymptotic normality for a broad class of commonly used optimizers in deep learning, including AdamW, Adam, and Adafactor, and derive the corresponding asymptotic distributions for the proposed component scores used in hypothesis testing.
We evaluate StatLoRA on LoRA fine-tuning of DeBERTaV3-base, BART-Large, and Qwen2.5-7B across natural language understanding, natural language generation, and question answering tasks. 
Experiments show that StatLoRA achieves comparable or better performance than vanilla LoRA, AdaLoRA, and IGU-LoRA under matched rank budgets.
Sensitivity analyses and empirical diagnostics further support the stability of the proposed hypothesis-testing-based allocation rule and provide empirical evidence for the asymptotic theory of component scores.

\end{abstract}

\section{Introduction}

Large language models (LLMs) and related foundation models have demonstrated remarkable capabilities across a wide range of applications, including natural language processing~\cite{brown2020language,devlin2019bert}, computer vision~\cite{radford2021learning,li2023blip}, and scientific tasks~\cite{singhal2023large,m2024augmenting}. Their successful deployment to downstream tasks, especially in professional domains and specialized communities, often relies on fine-tuning to adapt pretrained models to task-specific data and objectives~\cite{lee2020biobert,chung2024scaling}. 
While full fine-tuning can be effective, updating all model parameters is computationally expensive and memory intensive, particularly for modern large-scale scenarios.
Parameter-efficient fine-tuning methods alleviate this difficulty by introducing a small number of trainable parameters while keeping most pretrained weights fixed, thereby reducing the computational and storage costs of model adaptation~\cite{houlsby2019parameter,li2021prefix}. 
Among these fine-tuning methods, low-rank adaptation (LoRA)~\cite{hu2022lora} has become a widely used approach due to its simplicity, efficiency, and strong empirical performance. LoRA represents task-specific updates through trainable low-rank matrices inserted into selected model modules, reducing the number of trainable parameters while preserving sufficient adaptation capacity.

A central question in LoRA fine-tuning is how to allocate a limited adaptation budget across layers and modules~\cite{zhang2023adaptive,valipour2023dylora}. Different parts of a model may play different roles: some modules may encode general-purpose knowledge that should be largely preserved, whereas others may need stronger adaptation to capture domain-specific information. 
Since the rank of each LoRA matrix controls the adaptation capacity, an effective allocation should assign more resources to components with strong statistical evidence while avoiding unnecessary over-parameterization. 
This creates a high-dimensional resource allocation problem under stochastic training dynamics, where observed component scores serve as noisy measures of component contributions and are affected by mini-batch noise and optimizer-induced randomness. 
From a statistical perspective, adaptive rank allocation should not rely merely on the magnitude of empirical component scores but should also account for the uncertainty in estimating component contributions.
Motivated by this viewpoint, this paper develops a statistically principled resource allocation method for LoRA based on hypothesis testing.

Due to the matrix factorization structure of LoRA, the adaptation in each module can be decomposed into a sum of rank-one components.
Consequently, allocating the rank budget is equivalent to deciding which rank-one components should be retained or removed across layers and modules. 
This perspective naturally leads to a component selection problem: each candidate component represents a potential direction of adaptation, and its empirical magnitude reflects the extent to which the pretrained model is modified along that direction. 
However, during fine-tuning, these quantities are generated by stochastic optimization algorithms and are affected by mini-batch sampling, adaptive step sizes (learning rates), momentum, and other sources of training noise. 
Therefore, deciding whether a LoRA component is useful should not be viewed merely as a deterministic ranking problem. 
Rather, it can be formulated as a statistical inference problem in which one seeks to distinguish genuine adaptation signals from stochastic observations.

Several adaptive-rank methods have been proposed to address LoRA rank allocation. AdaLoRA \cite{zhang2023adaptive} is among the first methods to explicitly identify the importance of adaptive resource allocation and develops a practical procedure based on component-wise importance scores. 
These scores are constructed from gradient-derived sensitivity measures and smoothed magnitude information, with gradients interpreted as indicators of how sensitive the training objective is to each component. 
More recently, IGU-LoRA~\cite{cui2026igulora} further refined this idea by incorporating uncertainty-aware and layer-wise grouping corrections into the component importance measure, aiming to stabilize rank allocation under noisy training dynamics, inspired by signal-to-noise ratio. 
Other automatic rank allocation methods include dynamic rank training~\cite{valipour2023dylora}, gating-based rank allocation~\cite{liu2024alora}, and meta-learning-based rank selection~\cite{zhang2024autolora}.
Although these methods are empirically effective, their importance scores are constructed through carefully designed combinations of gradient-based magnitude and sensitivity. 
Such combinations are intuitively motivated by stochasticity in training dynamics, but they are not derived from an explicit statistical model, an optimization criterion, or a sampling distribution for the component signals.
In particular, they do not provide p-values, error control, or a formal criterion for determining whether an observed component signal is distinguishable from stochastic optimization noise.
Bayesian approaches provide another route to uncertainty-aware fine-tuning by placing prior distributions on model parameters and using posterior inference to quantify uncertainty~\cite{neal2012bayesian,blundell2015weight}. 
Recent Bayesian variants of LoRA adapt this idea to parameter-efficient fine-tuning by treating LoRA parameters or low-rank adaptation components as random variables and approximating their posterior distributions~\cite{yang2024bayesian,duan2026bara}.
However, these methods typically modify the training procedure by introducing priors, variational approximations, and posterior sampling, which may increase computational cost and alter the optimization dynamics of standard LoRA fine-tuning. 
These limitations motivate the development of a framework that preserves the standard training procedure while providing statistical decisions for rank allocation.

\subsection{Overview: Statistical Inference for Rank Allocation}
In this paper, we address LoRA rank allocation through a statistical inference framework.
The central question is to determine which rank-one LoRA components provide sufficient statistical evidence of contribution to downstream adaptation.
Let $s_{\ell,j}^{\star}$ denote the importance score of the $j$-th component in the $\ell$-th module, where $\ell=1,\cdots,L$ and $j=1,\cdots,r$.
This score measures the contribution of the corresponding rank-one update, with larger values indicating stronger evidence for retaining the component.
In practice, $s_{\ell,j}^{\star}$ is not directly observed and must be estimated from the stochastic training trajectory.
Since LoRA fine-tuning involves mini-batch sampling, momentum, and adaptive step sizes, the resulting empirical component scores are random quantities generated by the optimizer dynamics. 
This observation motivates a statistical formulation of rank allocation. Instead of treating empirical component scores as deterministic measures of importance, we view them as noisy estimates of underlying adaptation signals and make retain-or-prune decisions through hypothesis testing.
Specifically, for a pre-specified threshold $\Delta > 0$, we consider the one-sided hypothesis testing problem
\begin{equation}
\mathcal{H}_{\ell,j}^{0}:~s_{\ell,j}^{\star} \geq \Delta
\qquad \text{versus} \qquad
\mathcal{H}_{\ell,j}^{1}:~s_{\ell,j}^{\star} < \Delta .
\end{equation}
Here, the null hypothesis represents the event that the component has a sufficient contribution to be retained, whereas the alternative represents that its contribution falls below the threshold.
Under this formulation, components with strong evidence against $\mathcal{H}_{\ell,j}^{0}$ are removed first until the prescribed rank budget is reached.

The key step in implementing this testing procedure is to characterize the distribution of score estimates for components along the training trajectory. 
Motivated by asymptotic theory for stochastic approximation and stochastic gradient descent~\cite{polyak1992acceleration,kushner2003stochastic}, we establish central limit theory for adaptive optimizers commonly used in LoRA fine-tuning, including AdamW~\cite{loshchilov2018decoupled}, Adam~\cite{kingma2015adam}, and Adafactor~\cite{shazeer2018adafactor}. 
In particular, for the optimizer state or iterate sequence $\{\bz_t\}$, we show that, under suitable regularity conditions,
\begin{equation}
\gamma_t^{-1/2}\left(\bz_t-\bz^{\star}\right)
\overset{d}{\longrightarrow}
\mathcal{N}\left(\bzero,\bSig\right),
\end{equation}
and the Polyak--Ruppert average trajectory satisfies
\begin{equation}
\sqrt{t}\left(\frac{1}{t}\sum_{\tau=1}^{t}\bz_{\tau}-\bz^{\star}\right)
\overset{d}{\longrightarrow}
\mathcal{N}\left(\bzero,\bm{\Omega}\right),
\end{equation}
where $\bz^{\star}$ denotes the limiting point of the optimizer dynamics, $\{\gamma_t\}$ denotes the step-size sequence, and $\bSig,\bm{\Omega}\succeq \bzero$ are covariance matrices determined by the local curvature, stochastic gradient noise, and optimizer-specific dynamics.

We define the empirical component score $s_{\ell,j}(t)$, which is an estimator for $s_{\ell,j}^{\star}$, as a function of the current iterate $\bz_t$ at time step $t$, then the delta method further yields the weak convergence
\begin{equation}
\label{eq_clt_s}
\sqrt{t}\left(
\frac{1}{t}\sum_{\tau=1}^{t}s_{\ell,j}(\tau)
-s_{\ell,j}^{\star}
\right)
\overset{d}{\longrightarrow}
\mathcal{N}\left(0,\bar\sigma_{\ell,j}^{2}\right),
\end{equation}
where $\bar\sigma_{\ell,j}^{2}$ depends on the covariance $\bm{\Omega}$ and the form of the component score function. Based on \eqref{eq_clt_s}, we construct the test statistic in the boundary case
\begin{equation*}
T_{\ell,j}(t)
=
\frac{
\sqrt{t}
\left(
\frac{1}{t}\sum_{\tau=1}^{t}s_{\ell,j}(\tau)-\Delta
\right)
}{\widehat{\bar\sigma}_{\ell,j}},
\end{equation*}
where $\widehat{\bar\sigma}_{\ell,j}$ is estimated from the historical score trajectory using the batch-means estimator. The corresponding one-sided p-value is given by
\begin{equation*}
p_{\ell,j}(t)
=
\Phi\left(T_{\ell,j}(t)\right),
\end{equation*}
where $\Phi(\cdot)$ denotes the cumulative distribution function of the standard normal distribution. 
A small p-value provides evidence against the null hypothesis, indicating that the component has insufficient statistical evidence for retention relative to the threshold $\Delta$.

This leads to our proposed method, StatLoRA, which integrates statistical inference with adaptive LoRA rank allocation. 
Compared with existing methods, StatLoRA provides a statistically interpretable decision rule for component selection while preserving the standard LoRA training procedure.
In particular, it does not require modifying the training loss, changing the model parameterization, or altering the optimizer dynamics. Instead, it uses the stochastic trajectory and score quantities already generated during training to estimate the uncertainty of component contributions and guide retain-or-prune decisions.

\subsection{Related Literature and Contributions}

Our theoretical development is closely related to the asymptotic theory of stochastic approximation and stochastic optimization~\cite{robbins1951stochastic,borkar2008stochastic}.
Classical results on Polyak–Ruppert averaging establish asymptotic normality and optimal covariance properties for averaged stochastic approximation recursions, with averaged stochastic gradient descent as a central example~\cite{polyak1992acceleration,ruppert1988efficient}. 
Subsequent studies have developed central limit theory and asymptotic distributions for stochastic quasi-Newton~\cite{leluc2023asymptotic}, non-smooth~\cite{davis2024asymptotic}, constrained~\cite{duchi2021asymptotic}, and inexact~\cite{na2025statistical} methods. 
These distributional results provide a basis for statistical inference from optimization trajectories, including uncertainty quantification~\cite{toulis2017asymptotic,toulis2014statistical,wang2026inference}, confidence intervals~\cite{fang2018online,chen2024online,li2018statistical}, hypothesis testing~\cite{chen2020statistical,chen2021statistical}, and convergence diagnostics~\cite{chee2018convergence,cowles1996markov}.
However, most existing statistical inference results focus on standard stochastic gradient methods and their averaged variants. 
Modern deep learning, such as LoRA fine-tuning, in contrast, is typically performed with adaptive optimizers such as AdamW~\cite{loshchilov2018decoupled}, Adam~\cite{kingma2015adam}, and Adafactor~\cite{shazeer2018adafactor}, whose momentum terms, coordinate-wise scaling, and adaptive learning rates lead to more complex stochastic dynamics. 
Our work extends this literature by establishing central limit theory for iterates generated by adaptive optimizers and by using the resulting asymptotic distributions to construct hypothesis tests for LoRA rank allocation.

The main contributions of this paper are summarized as follows:
\begin{itemize}
    \item We formulate adaptive rank allocation for LoRA as a component-wise statistical hypothesis testing problem, providing a statistical inference perspective on LoRA resource allocation under stochastic training dynamics.

    \item We establish central limit theory for the trajectories of adaptive optimizers commonly used in deep learning and large-scale LLM fine-tuning, including AdamW, Adam, and Adafactor, under suitable regularity conditions. These results provide asymptotic distributions for optimizer states and, through the delta method, for empirical LoRA component scores.

    \item We propose StatLoRA, a practical inference-based rank allocation algorithm that uses the derived asymptotic distributions to quantify uncertainty in component scores. Under a prescribed total rank budget, StatLoRA retains components with sufficient statistical evidence for contribution and prunes components with weak evidence for retention.

    \item We conduct experiments on LoRA fine-tuning with DeBERTaV3-base, BART-Large, and Qwen2.5-7B across natural language understanding, natural language generation, and question answering tasks. The results show that StatLoRA achieves competitive performance compared with vanilla LoRA, AdaLoRA, and IGU-LoRA under matched rank budgets. We further conduct sensitivity analyses for the statistical hyperparameters, examine the stability of the rank allocation, and provide empirical diagnostics for the asymptotic normality of the empirical component scores.
\end{itemize}

\subsection{Organization and Notation}

The remainder of this paper is organized as follows. Section~2 introduces the background on LoRA fine-tuning and formulates LoRA rank allocation as a hypothesis testing problem on each component. Section~3 develops asymptotic theory for adaptive optimizers commonly used in deep learning, with particular emphasis on AdamW. Section~4 derives central limit results for empirical component scores and presents the proposed StatLoRA algorithm, including practical variance estimation and p-value construction. Section~5 reports the experimental results. Section~6 concludes the paper and discusses possible directions for future work.

Throughout this paper, we use lowercase letters, bold lowercase letters, and bold uppercase letters, such as $a$, $\ba$, and $\bA$, to denote scalars, vectors, and matrices, respectively. The notation $\|\cdot\|$ represents the Euclidean norm for vectors and the spectral norm for matrices, while $\|\cdot\|_{\mathrm{F}}$ denotes the Frobenius norm. 
For two deterministic positive sequences $\{a_t\}$ and $\{b_t\}$, we write $a_t=o(b_t)$ if $a_t/b_t\to 0$, and $a_t=\mathcal{O}(b_t)$ if $a_t/b_t$ is bounded. For random sequences $\{X_t\}$ and positive deterministic sequences $\{a_t\}$, we write $X_t=o_p(a_t)$ if $X_t/a_t\to 0$ in probability, and $X_t=\mathcal{O}_p(a_t)$ if $X_t/a_t$ is bounded in probability.
For vector- or matrix-valued quantities, these asymptotic notations are understood with respect to the norm $\|\cdot\|$.
For vectors $\ba,\bb\in\mathbb{R}^d$, we use $\ba\odot\bb$ to denote the Hadamard product, i.e., element-wise multiplication. In particular, $\ba^{\odot 2}:=\ba\odot\ba$ denotes the element-wise square of $\ba$.
Other vector operations, including $\sqrt{\ba}$, $\ba/\bb$, and $\ba+\epsilon$, are understood entry-wise whenever they appear.
Moreover, for a vector $\ba\in\mathbb{R}^d$, $\operatorname{diag}(\ba)\in\mathbb{R}^{d\times d}$ denotes the diagonal matrix whose diagonal entries are given by the entries of $\ba$.
When stochastic processes are involved, all random variables are defined on a filtered probability space
$    (\Omega,\mathcal F,\{\mathcal F_t\}_{t\ge 0},\mathbb P),$
where $\{\mathcal F_t\}_{t\ge 0}$ is an increasing sequence of sub-$\sigma$-algebras of $\mathcal F$. Here, $\mathcal F_t$ represents the information available before the stochastic update at iteration $t$. The notation $\mathbb E[\cdot\mid\mathcal F_t]$ denotes conditional expectation with respect to $\mathcal F_t$.

\section{LoRA Rank Allocation as a Statistical Decision Problem}

In this section, we first review the basic formulation of LoRA and the associated rank allocation problem. 
We then use the rank-one decomposition of LoRA updates to show that rank allocation can be represented as a component selection problem. 
Unlike existing methods that primarily rely on various designs of importance measures, we formulate component selection as a statistical decision problem and introduce a hypothesis testing framework for LoRA rank allocation.

\subsection{Low-Rank Adaptation and Rank Allocation}

In transfer learning, a fine-tuned parameter matrix $\bW_{\text{ft}}$ is obtained by adapting a pretrained parameter matrix $\bW_{\text{pt}}$:
\begin{equation*}
    \bW_{\text{ft}} = \bW_{\text{pt}} + \Delta\bW,
\end{equation*}
where $\bW_{\text{ft}}, \bW_{\text{pt}},\Delta\bW \in \mathbb{R}^{n_1 \times n_2}$. Standard full fine-tuning directly updates the entire adaptation matrix $\Delta\bW$, providing full flexibility on the adaptation matrix $\Delta\bW$, but incurring substantial computational, memory, and storage costs. 
Low-rank adaptation (LoRA)~\cite{hu2022lora} addresses this issue by imposing a low-rank structure on the adaptation matrix. Specifically, the update is parameterized as
\begin{equation*}
    \Delta\bW = \bB \bA,
\end{equation*}
where $\bA \in \mathbb{R}^{r \times n_2}$, $\bB \in \mathbb{R}^{n_1 \times r}$, and $r \ll \min\{n_1, n_2\}$.
Instead of training a full matrix of size $n_1 n_2$, LoRA only updates the factor matrices $\bA$ and $\bB$, reducing the number of trainable parameters to $r(n_1 + n_2)$. 
Beyond computational efficiency, the low-rank parameterization restricts adaptation to a lower-dimensional subspace, which can serve as an implicit regularization mechanism and improve the stability and generalization of fine-tuning~\cite{zeng2024expressive,zhu2024asymmetry,malladi2023kernel}.

For LoRA-based fine-tuning of a multi-layer neural network $\phi(\cdot;\Theta)$,  such as a transformer-based large language model~\cite{vaswani2017attention}, let $\Theta := \{\bW_1, \bW_2, \cdots, \bW_{L}\}$ denote the collection of parameter matrices to which LoRA is applied. For the $\ell$-th parameter matrix, LoRA takes the form
\begin{equation}
\label{eq_lora}
    \bW_{\text{ft},\ell} = \bW_{\text{pt},\ell} + \Delta\bW_{\ell} = \bW_{\text{pt},\ell} + \bB_{\ell} \bA_{\ell},
\end{equation}
where $\bW_{\text{ft},\ell}$, $\bW_{\text{pt},\ell}$, and $\Delta\bW_{\ell} \in \mathbb{R}^{n \times n}$ denote the fine-tuned parameter matrix, the pretrained parameter matrix, and the adaptation matrix, respectively. 
Here $\bA_{\ell} \in \mathbb{R}^{r \times n}$ and $\bB_{\ell} \in \mathbb{R}^{n \times r}$ are the LoRA factor matrices associated with the $\ell$-th parameter matrix. For notational simplicity, we present the formulation for square matrices of size $n \times n$, and the same discussion extends directly to rectangular matrices.

Since different layers and modules may play different roles in the neural network, a uniform rank allocation across all parameter matrices may be inefficient~\cite{zhang2023adaptive}. 
Some modules may encode general-purpose knowledge that should be largely preserved, whereas others may require stronger adaptation to capture task-specific or domain-specific information.
Consequently, a small rank may be sufficient for some parameter matrices, while a larger rank may be needed for others. 
On the other hand, assigning unnecessarily large ranks can increase computational cost and may introduce excessive flexibility, potentially harming stability and generalization. 
In many standard LoRA implementations, these ranks are treated as fixed hyperparameters, which may lead to suboptimal performance.
This motivates rank allocation, which allows different layers or modules to use different ranks rather than imposing a uniform rank across the model~\cite{valipour2023dylora,cui2026igulora}.

Several methods have been proposed for automatic rank allocation in LoRA fine-tuning~\cite{valipour2023dylora,liu2024alora,zhang2024autolora}. 
AdaLoRA~\cite{zhang2023adaptive} is a pioneering work in this direction, introducing a budget allocation procedure based on importance scores for rank-one components in the LoRA update $\bB_{\ell}\bA_{\ell}$.
Its score construction uses gradient-derived sensitivity and magnitude information to identify components that are likely to contribute more to model adaptation. 
More recently, IGU-LoRA~\cite{cui2026igulora} further refines component scoring by incorporating integrated-gradient information within the layer and uncertainty-aware adjustments, with the goal of stabilizing importance estimation under noisy training dynamics. 
Although these methods are empirically useful, their importance scores are typically constructed using a carefully designed combination of gradient magnitude and sensitivity. Such scoring rules are intuitively motivated by stochasticity in training dynamics, but they are not derived from an explicit statistical model, a statistical inference criterion, or a sampling distribution for component signals. 
As a result, it remains unclear how to interpret the numerical value of such scores, how to compare them across layers and modules, or how to distinguish genuine adaptation signals from optimizer-induced stochastic noise.

\subsection{Statistical Formulation of LoRA Rank Allocation}

We now formulate the LoRA rank allocation as a statistical decision problem. Suppose that, for each layer or module $\ell$, LoRA is initialized with a relatively large rank $r$. The adaptation matrix can then be decomposed into a sum of rank-one components: 
\begin{equation}
    \Delta\bW_{\ell} = \bB_{\ell} \bA_{\ell} = \sum_{j=1}^{r} \bb_{\ell,j} \ba_{\ell,j}^{\top},
\end{equation}
where $\bb_{\ell,j} \in \mathbb{R}^{n}$ denotes the $j$-th column of $\bB_{\ell}$, and $\ba_{\ell,j}^{\top} \in \mathbb{R}^{1 \times n}$ denotes the $j$-th row of $\bA_{\ell}$. Thus, each product $\bb_{\ell,j} \ba_{\ell,j}^{\top}$ represents a rank-one adaptation component.

Under this decomposition, rank allocation can be viewed as selecting which rank-one components should be retained or removed. 
Let $r_{\ell} \leq r$ denote the number of retained components in the $\ell$-th module. Given a total rank budget $R$, the selected ranks are required to satisfy  \begin{equation*}
    \sum_{\ell=1}^{L} r_{\ell} \leq R.
\end{equation*}
Equivalently, among the $rL$ candidate rank-one components $\bb_{\ell,j} \ba_{\ell,j}^{\top}$, for $\ell=1,\cdots,L$ and $j = 1, \cdots, r$, one needs to select at most $R$ components to retain while removing the rest. 
Exhaustively searching over all possible subsets is computationally infeasible, since the search space grows combinatorially with $rL$. 
Moreover, evaluating the downstream performance of many candidate allocations would require repeated fine-tuning or validation, which is computationally expensive for large language models.

To avoid such an exhaustive search, we propose to make component-wise decisions through statistical inference. Let
\begin{equation*}
    \Delta\bW_{\ell,j}^{\star} = \bb_{\ell,j}^{\star}\ba_{\ell,j}^{\star\top}
\end{equation*}
denote the $j$-th rank-one component in the $\ell$-th module.
Here, the superscript $\star$ refers to the limiting value of the corresponding LoRA components under the stochastic fine-tuning dynamics, rather than to an observed finite-iteration estimate.
In this sense, $\Delta\bW_{\ell,j}^{\star}$ plays the role of the population-level target component in the LoRA framework.
We measure the strength of this component and define the population score by the squared Frobenius norm
\begin{equation}
\label{eq_score_population}
    s_{\ell,j}^{\star} = \left\|\Delta\bW_{\ell,j}^{\star} \right\|_{\mathrm{F}}^{2} = \left\|\bb_{\ell,j}^{\star}\ba_{\ell,j}^{\star\top} \right\|_{\mathrm{F}}^{2} = \left\| \bb_{\ell,j}^{\star}\right\|^{2} \cdot \left\| \ba_{\ell,j}^{\star}\right\|^{2}. 
\end{equation}
The quantity $s_{\ell,j}^{\star}$ represents the underlying adaptation signal of the corresponding component. In practice, however, $s_{\ell,j}^{\star}$ is not directly observed. We only observe empirical estimates generated along the stochastic training trajectory, which are affected by mini-batch sampling, optimizer momentum, and adaptive step sizes.

We therefore formulate component selection through the following one-sided hypothesis test. 
For a pre-specified threshold $\Delta>0$, consider
\begin{equation}
\label{hypothesis}
\mathcal{H}_{\ell,j}^{0}:~s_{\ell,j}^{\star} \geq \Delta
\qquad
\text{versus}
\qquad
\mathcal{H}_{\ell,j}^{1}:~s_{\ell,j}^{\star} < \Delta .
\end{equation}
Here, the null hypothesis represents that the component has a sufficiently large adaptation signal and should be retained, while the alternative represents that its contribution falls below the threshold.
This choice of hypotheses is aligned with the pruning objective: a small p-value provides evidence against $\mathcal{H}_{\ell,j}^{0}$, suggesting that the corresponding component has insufficient statistical evidence for retention relative to $\Delta$.
Given the total rank budget $R$, we compute p-values for all candidate components and prune those with the strongest evidence against the null hypothesis until the budget constraint is satisfied.

The remaining key ingredient is the distribution of the empirical component scores, which will be defined later. Since the exact scores $s_{\ell,j}^{\star}$ are inaccessible, they must be estimated from the iterates generated during LoRA fine-tuning. This is nontrivial because the training trajectory is a complex stochastic process affected by mini-batch sampling, momentum, coordinate-wise scaling, and adaptive step sizes.
As a result, the empirical component scores, being functions of these iterates, inherit the uncertainty and temporal dependence induced by the training dynamics. 
If this uncertainty can be characterized through an asymptotic distribution of the empirical score process, then the hypothesis test in \eqref{hypothesis} can be implemented using estimated p-values. 
Thus, the central theoretical question is to characterize the asymptotic distribution of iterates generated by commonly used adaptive optimizers in LoRA fine-tuning, and then to transfer this distributional result to component score estimates.
We address this question in the next section by first establishing central limit theory for stochastic optimizer trajectories and then applying the delta method to obtain asymptotic distributions for empirical component scores.

\section{Asymptotic Theory for Stochastic Optimization}

To implement the hypothesis testing framework in \eqref{hypothesis}, we need to characterize the distributional behavior of the quantities generated during LoRA fine-tuning.
In this section, we study the asymptotic distribution of iterates produced by stochastic optimization algorithms commonly used in LoRA fine-tuning of LLMs, including AdamW, Adam, and Adafactor. 
We first develop a general central limit theory for stochastic approximation recursions. We then show that these adaptive optimizers can be represented within this framework by introducing suitable augmented state variables and optimizer-specific mean fields. These results provide the distributional foundation for the component score inference developed in the next section.

\subsection{General Stochastic Recursion}

We consider the stochastic optimization problem
\begin{equation}
\label{eq0}
\min_{\bx \in \mathbb{R}^d} f(\bx) := \mathbb{E}_{\zeta \sim \mathcal{P}}\left[\mathcal{L}(\bx;\zeta)\right],
\end{equation}
where $\zeta$ denotes a random observation or mini-batch drawn from a sampling distribution $\mathcal{P}$.
The loss function $\mathcal{L}(\bx;\zeta)$ is evaluated at the given sample $\zeta$, and $f:\mathbb{R}^{d}\to\mathbb{R}$ is the population objective, assumed to be continuously differentiable. The formulation in \eqref{eq0} covers a broad class of problems arising in statistical learning and machine learning. In deep learning, $\bx$ denotes the collection of trainable parameters of a neural network,
$\mathcal{L}\left(\bx;\zeta\right)$ is the loss function evaluated at mini-batches of samples,
and $f(\bx)$ is the corresponding population risk or expected training loss~\cite{goodfellow2016deep}. 
In statistical learning, $\bx$ may represent the parameters of a statistical model, $\mathcal{L}\left(\bx;\zeta\right)$ may correspond to a negative log-likelihood or a regularized risk evaluated at a data sample, and $f(\bx)$  is the population objective associated with an $M$-estimator or expected risk minimization~\cite{vapnik1998statistical}.

We use $\bz_t$ to denote the algorithmic state at iteration $t$ for a stochastic optimization algorithm applied to \eqref{eq0}. The parameter vector $\bx_t$ can be recovered from $\bz_t$, while $\bz_t$ may also include auxiliary variables needed to describe the algorithmic dynamics, such as momentum or adaptive second-moment estimates. We consider the following general stochastic approximation recursion:
\begin{equation}
\label{eq_general_recursion}
\bz_{t+1}
=
\bz_t
+
\gamma_t \left( F(\bz_t) + \bxi_{t} + \br_{t} \right),
\end{equation}
where $t \in \mathbb{N}$, $F$ represents the \emph{mean field} (or drift) of the algorithm, $\bxi_{t}$ denotes the stochastic noise in the update direction, and $\br_t$ represents a remainder term, accounting for approximation errors or additional algorithmic effects not captured by the leading mean field $F$. 
The step-size sequence is chosen as
\begin{equation}
\label{eq_step_size}
\gamma_t = \gamma_0 t^{-\kappa}, \quad \kappa \in (1/2,1),
\end{equation}
where $\gamma_0 > 0$. This corresponds to a \emph{polynomially decaying} (or power-law) step size schedule.

To characterize the uncertainty and asymptotic behavior of the stochastic recursion \eqref{eq_general_recursion} under the step-size schedule \eqref{eq_step_size}, we impose the following regularity assumptions on the mean dynamics, stochastic noise, and remainder terms.

\begin{assumption}
\label{assumption1}
The following conditions hold.
\begin{enumerate}[label=(A\arabic*), leftmargin=2.2em]
    \item \textbf{Almost sure convergence to a limit.}
    There exists a point $\bz^\star$ such that
    \[
        F(\bz^\star)=\bzero,
    \]
    and $\bz_t\to\bz^\star$ almost surely as $t\to\infty$.

    \item \textbf{Local differentiability of the mean field.}
    The mean field $F:\mathbb{R}^d\to\mathbb{R}^d$ is continuously differentiable and $\nabla F$ is locally Lipschitz in a neighborhood of $\bz^\star$. Let
    \[
        \bJ:=\nabla F(\bz^\star).
    \]
    Then $F(\bz^\star+\be)=F(\bz^\star)+\bJ\be+\rho(\be)$, where
    \[
        \|\rho(\be)\|=\mathcal{O}(\|\be\|^2),\quad \text{as }~\be\to\bzero.
    \]

    \item \textbf{Martingale-difference noise with bounded $(2+\delta)$-moment.}
    With respect to the pre-update filtration $\{\mathcal F_t\}$, the noise sequence $\{\bxi_t\}$ satisfies
    \[
        \mathbb E[\bxi_t\mid \mathcal F_t]=\bzero.
    \]
    Moreover, for some $\delta>0$,
    \[
        \sup_{t\ge 0}
        \mathbb E\left[\|\bxi_t\|^{2+\delta}\mid\mathcal F_t\right]
        <\infty,\qquad \text{a.s.}
    \]

    \item \textbf{Asymptotic conditional covariance.}
    There exists a positive semidefinite matrix $\bQ$ such that
    \[
        \mathbb E[\bxi_t\bxi_t^\top\mid\mathcal F_t]\to\bQ,
        \qquad \text{a.s.}
    \]

    \item \textbf{Stability of the linearized dynamics.}
    The Jacobian $\bJ$ is Hurwitz; that is, all eigenvalues of $\bJ$ have strictly negative real parts.

    \item \textbf{Negligible remainder.}
    The remainder term $\br_t$ in \eqref{eq_general_recursion} satisfies
    \[
        \mathbb E[\|\br_t\|^2\mid\mathcal F_t]
        =
        \mathcal O(\gamma_t^2),
        \qquad \text{a.s.}
    \]
\end{enumerate}
\end{assumption}

\noindent\textbf{Remarks.}
Assumption~\ref{assumption1} collects standard regularity conditions for deriving asymptotic normality of stochastic approximation recursions. Their roles are summarized as follows:
\begin{enumerate}[
    label=(A\arabic*),
    leftmargin=3.2em,
    labelwidth=2.2em,
    labelsep=0.6em,
    itemsep=0pt,
    parsep=0pt,
    topsep=0pt
]
    \item The convergence condition localizes the analysis around a limiting point $\bz^\star$, so that the local distributional behavior of the iterates can be studied after convergence.

    \item The differentiability condition justifies a local linear approximation of the mean field $F$ near $\bz^\star$. The Jacobian $\bJ=\nabla F(\bz^\star)$ determines the leading deterministic dynamics.

    \item The martingale-difference and moment conditions control the stochastic noise. They are standard requirements for martingale central limit arguments and tail control.

    \item The covariance condition ensures that the conditional covariance of the noise has a well-defined asymptotic limit, which determines the noise level in the limiting distribution.

    \item The stability condition requires the linearized mean dynamics to be locally attracting, ensuring that the deterministic drift pulls the iterates toward the limiting point.

    \item The remainder condition ensures that remainder terms are asymptotically negligible and do not affect the leading mean field and the limiting distribution.
\end{enumerate}
Overall, these assumptions are standard in central limit theory for stochastic approximation and stochastic optimization~\cite{chen2020statistical,chen2024online,duchi2021asymptotic,davis2024asymptotic,na2025statistical}.

\begin{theorem}
[Central limit theorem for the general stochastic recursion]
\label{theorem1}
    Under Assumptions (A1)--(A6) in \Cref{assumption1}, the stochastic approximation sequence $\{\bz_t\}$ generated by \eqref{eq_general_recursion} with step sizes $\gamma_t = \gamma_0 t^{-\kappa}$, $\gamma_0>0$, and $\kappa \in (1/2,1)$, is asymptotically normal around the limiting point $\bz^{\star}$.
More precisely, we have
\begin{equation}
\gamma_t^{-1/2}(\bz_t-\bz^{\star})
\;\xrightarrow{d}\;
\mathcal{N}(\bzero,\bSig),
\end{equation}
where the covariance matrix $\bSig$ is characterized as 
the unique positive semidefinite solution to the Lyapunov equation
\begin{equation}
\bJ\bSig + \bSig \bJ^\top + \bQ = \bzero.
\end{equation}
Moreover, the Polyak--Ruppert averaged iterate $\Bar{\bz}_t = \frac{1}{t}\sum_{\tau=1}^{t} \bz_{\tau}$ satisfies
\begin{equation}
\sqrt{t}(\Bar{\bz}_t-\bz^{\star})
\;\xrightarrow{d}\;
\mathcal{N}(\bzero,\bJ^{-1}\bQ\bJ^{-\top}).
\end{equation}
\end{theorem}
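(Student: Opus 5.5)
The plan is to follow the classical Polyak--Juditsky route: linearize around $\bz^\star$ and analyze the resulting linear recursion with martingale tools. Writing $\be_t=\bz_t-\bz^\star$ and using (A2), the recursion \eqref{eq_general_recursion} becomes
\begin{equation*}
\be_{t+1}=(\bI+\gamma_t\bJ)\be_t+\gamma_t\bxi_t+\gamma_t\bigl(\rho(\be_t)+\br_t\bigr).
\end{equation*}
Because (A1) gives only almost sure convergence, I would localize first. For each $\varepsilon>0$ I would introduce the events $\Omega_{N}=\{\|\be_t\|\le\varepsilon \text{ for all } t\ge N\}$, whose probability tends to one. On $\Omega_N$ the quadratic bound $\|\rho(\be_t)\|\le C\|\be_t\|^2$ holds. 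Distributional limits proved on these events (or for suitably stopped processes) then transfer to the original sequence.

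\textbf{Step 1: a rate bound.} Since $\bJ$ is Hurwitz by (A5), there is $\bP\succ\bzero$ with $\bJ^\top\bP+\bP\bJ\preceq-\bI$. I would use the Lyapunov function $V_t=\be_t^\top\bP\be_t$ together with (A3), (A6) and the localized bound on $\rho$. Taking conditional expectations gives a recursion of the form
\begin{equation*}
\mathbb E[V_{t+1}\mid\mathcal F_t]\le(1-c\gamma_t)V_t+C\gamma_t^2
\end{equation*}
on the localized event. A standard Chung-type lemma then yields $\mathbb E[\|\be_t\|^2\mathbf 1_{\Omega_N}]=\mathcal O(\gamma_t)$. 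Consequently the nonlinear terms satisfy $\gamma_t\|\rho(\be_t)\|=\mathcal O_p(\gamma_t^2)$, and the remainder $\br_t$ contributes at order $\gamma_t^2$ in $L^2$.

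\textbf{Step 2: the last iterate.} Let $\bm\Phi_{t,s}=\prod_{k=s}^{t-1}(\bI+\gamma_k\bJ)$. Unrolling gives
\begin{equation*}
\be_t=\bm\Phi_{t,N}\be_N+\sum_{s=N}^{t-1}\bm\Phi_{t,s+1}\gamma_s\bigl(\bxi_s+\rho(\be_s)+\br_s\bigr).
\end{equation*}
The matrices obey $\|\bm\Phi_{t,s}\|\le C\exp\bigl(-c\sum_{k=s}^{t-1}\gamma_k\bigr)$, so after scaling by $\gamma_t^{-1/2}$:
\begin{itemize}
\item the initial term vanishes;
\item the drift and remainder sums are $\mathcal O_p(\gamma_t^{1/2})$, hence negligible;
\item the martingale sum is handled by the martingale CLT (Lindeberg via the $(2+\delta)$-moment in (A3), conditional variance via (A4)).
\end{itemize}
The conditional variance of the martingale sum is $\gamma_t^{-1}\sum_s\gamma_s^2\bm\Phi_{t,s+1}\bQ\bm\Phi_{t,s+1}^\top$. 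Because $\gamma_{s}/\gamma_{s+1}=1+o(\gamma_s)$ when $\kappa<1$, this converges to $\int_0^\infty e^{u\bJ}\bQ e^{u\bJ^\top}du=\bSig$, which is the unique solution of the Lyapunov equation.

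\textbf{Step 3: the average.} Here I would follow Polyak--Juditsky directly. From the recursion, $\bJ\be_t=\gamma_t^{-1}(\be_{t+1}-\be_t)-\bxi_t-\rho(\be_t)-\br_t$. Summing over $t$ and dividing by $\sqrt t$ gives
\begin{equation*}
\sqrt t\,\bJ\bar\be_t=-\frac1{\sqrt t}\sum_{\tau\le t}\bxi_\tau+\frac1{\sqrt t}\sum_{\tau\le t}\gamma_\tau^{-1}(\be_{\tau+1}-\be_\tau)-\frac1{\sqrt t}\sum_{\tau\le t}\bigl(\rho(\be_\tau)+\br_\tau\bigr).
\end{equation*}
Each term on the right is controlled as follows.
\begin{itemize}
\item The first term converges to $\mathcal N(\bzero,\bQ)$ by the martingale CLT.
\item For the telescoping term, Abel summation gives $\gamma_t^{-1}\be_{t+1}+\sum_\tau(\gamma_\tau^{-1}-\gamma_{\tau-1}^{-1})\be_\tau$. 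Using $\gamma_\tau^{-1}-\gamma_{\tau-1}^{-1}=\mathcal O(\tau^{\kappa-1})$ and $\|\be_\tau\|=\mathcal O_p(\tau^{-\kappa/2})$, this is $\mathcal O_p(t^{\kappa/2})$, so dividing by $\sqrt t$ sends it to zero because $\kappa<1$.
\item The $\rho$ part of the last term is $\sum\mathcal O_p(\tau^{-\kappa})/\sqrt t=\mathcal O_p(t^{1/2-\kappa})\to0$, which uses $\kappa>1/2$.
\item The $\br$ part is handled the same way via (A6).
\end{itemize}
Inverting $\bJ$ then yields the covariance $\bJ^{-1}\bQ\bJ^{-\top}$.

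The main obstacle is Step 1: obtaining the $L^2$ rate $\mathcal O(\gamma_t)$ when only a.s. convergence and local smoothness are available. This requires care with the localization, since stopping times interact with the martingale structure. The plan is to define the stopped error $\be_t\mathbf 1\{\tau_\varepsilon>t\}$ with $\tau_\varepsilon$ the exit time from the $\varepsilon$-ball after time $N$, run the Lyapunov argument on it, and conclude $\mathcal O_p$ statements using $\mathbb P(\tau_\varepsilon<\infty)\to0$ as $N\to\infty$.
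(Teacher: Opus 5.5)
Your proposal is correct and follows essentially the same route as the paper. Both arguments linearize and use the variation-of-constants representation, obtain the rate $\mathbb E[\|\be_t\|^2\mathbf 1\{\tau>t\}]=\mathcal O(\gamma_t)$ from a localized Lyapunov-function argument (\Cref{lemma5}), and prove a martingale CLT whose conditional variance converges to the Lyapunov solution via $\gamma_t/\gamma_{t+1}=1+o(\gamma_t)$ (\Cref{lemma3,lemma4}). For the average, both use the Polyak--Juditsky telescoping/Abel-summation argument; your identification of $\bSig$ as $\int_0^\infty e^{u\bJ}\bQ e^{u\bJ^\top}du$ and your deterministic-$N$ localization with $L^1$ bounds on the summed remainders are only cosmetic (and slightly more careful) variants of the paper's discrete $\bU_t$ recursion and random-$t_0$ stopping time.
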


\noindent\textit{Proof sketch.}
The detailed proof is provided in Appendix~\ref{appendix_proof1}. We briefly summarize the main argument. Let $\be_t=\bz_t-\bz^\star$. By the local expansion of $F$ around $\bz^\star$, the recursion can be written as
\[
\be_{t+1}
=
(\bI+\gamma_t\bJ)\be_t
+\gamma_t\bxi_t
+\underbrace{\gamma_t \mathcal{O}(\|\be_t\|^2)}_{\text{Taylor remainder}}
+\underbrace{\gamma_t\br_t}_{\text{original remainder}}.
\]
Under Assumptions (A1)--(A6), the Taylor remainder term and the original remainder term $\br_t$ are asymptotically negligible.
Hence, the leading term is governed by a stable linear stochastic recursion driven by martingale-difference noise. Applying a martingale central limit theorem to the resulting weighted martingale array gives the asymptotic normality of $\gamma_t^{-1/2}(\bz_t-\bz^\star)$, with covariance characterized by the Lyapunov equation (see Lemma \ref{lemma4}). For the Polyak--Ruppert average, summing the linearized recursion results in the asymptotic representation
\[
    \sqrt{t}(\bar{\bz}_t-\bz^\star)
    =
    -\bJ^{-1}\frac{1}{\sqrt{t}}\sum_{\tau=1}^{t}\bxi_\tau
    +o_p(1).
\]
The martingale central limit theorem, together with Slutsky's theorem, then implies the stated averaged central limit theorem.

\Cref{theorem1} provides the limiting distributions for the stochastic recursion \eqref{eq_general_recursion}. 
Such results are classical in stochastic approximation theory and are closely related to Polyak--Ruppert averaging and asymptotic normality for stochastic recursive algorithms~\cite{polyak1992acceleration,kushner2003stochastic,borkar2008stochastic}.
The theorem shows that, under standard local regularity and stability conditions, both the last iterate and the Polyak--Ruppert averaged iterate admit Gaussian limits after appropriate normalization. These results provide the theoretical foundation for studying the local asymptotic behavior of stochastic optimization algorithms. In the remainder of this section, we show how several adaptive optimizers used in LoRA fine-tuning can be represented in the form of \eqref{eq_general_recursion}, and derive their corresponding asymptotic distributions.

\subsection{Central Limit Theory for AdamW}

We next specialize the general stochastic approximation framework to AdamW, a widely used adaptive optimizer in deep learning and LoRA fine-tuning~\cite{loshchilov2018decoupled}. Consider applying AdamW to the stochastic optimization problem in \eqref{eq0}, where the population gradient is observed through stochastic gradient estimates. The updates are given by
\begin{equation}
\label{eq_adamw}
\begin{split}
\bmm_{t+1} &= (1-\alpha_t)\bmm_t + \alpha_t \bg_t,\\
\bv_{t+1} &= (1-\beta_t)\bv_t + \beta_t \bg_t^{\odot 2},\\
\bx_{t+1} &= \bx_t - \gamma_t\frac{\bmm_{t+1}}{\sqrt{\bv_{t+1}}+\varepsilon} - \gamma_t\lambda \bx_t,
\end{split}
\end{equation}
where $\bg_t:=\nabla \mathcal{L}(\bx_t;\zeta_t)$ denotes the stochastic gradient evaluated at $\bx_t$ with data sample $\zeta_t$, $\alpha_t=\alpha\gamma_t$ and $\beta_t=\beta\gamma_t$ are step sizes for momentum terms with $\alpha, \beta>0$, $\lambda>0$ is the weight decay parameter, $\gamma_t$ is the step size as defined in \eqref{eq_step_size}, and $\varepsilon>0$ is a numerical stabilization constant. Here, $\bmm_t$ and $\bv_t$ denote the first- and second-moment variables, respectively. All vector operations in \eqref{eq_adamw}, including division and square root, are understood element-wise.

We define the augmented state variable
\begin{equation}
\label{eq_augmented_z}
\bz_t := (\bx_t,\bmm_t,\bv_t)\in\mathbb R^{3d}.
\end{equation}
Let $\nabla f(\bx_t)$ denote the population gradient at $\bx_t$. The stochastic gradient is written as
\[
\bg_t = \nabla \mathcal{L}(\bx_{t}; \zeta_t) = \nabla f(\bx_t)+\bxi_t^{(g)},
\]
where $\bxi_t^{(g)} := \nabla \mathcal{L}(\bx_{t}; \zeta_t) - \nabla f(\bx_t)$ is the stochastic gradient noise. 
We assume that the stochastic gradient is conditionally unbiased with respect to the pre-update filtration $\mathcal F_t$, that is,
\begin{equation*}
\mathbb{E}\left[\bxi_t^{(g)}\mid\mathcal{F}_t\right]=\bzero,
    \qquad
    \mathbb{E}\left[\nabla \mathcal{L}(\bx_t;\zeta_t)\mid\mathcal{F}_t\right]
    =
    \nabla f(\bx_t).
\end{equation*}
This corresponds to the standard mini-batch setting in stochastic optimization, where the mini-batch is sampled conditionally independently given the current iterate, such as neural network training and fine-tuning~\cite{goodfellow2016deep}.
We define the second-moment function of the stochastic gradient by
\begin{equation}
\label{eq_q}
    \bq(\bx) := \mathbb{E}\left[\nabla \mathcal{L}(\bx;\zeta)^{\odot 2}\right].
\end{equation}
Then the first- and second-moment updates in \eqref{eq_adamw} can be written as
\begin{align*}
& \bmm_{t+1}
=
\bmm_t+\alpha_t\bigl(\nabla f(\bx_t)-\bmm_t\bigr)+\alpha_t\bxi_t^{(g)},\\
& \bv_{t+1}
= \bv_t + \beta_t\big(\bq(\bx_t) - \bv_t\big)
+ \beta_t\left(2\nabla f(\bx_t)\odot \bxi_{t}^{(g)} + \bxi_{t}^{(g)\odot 2} - \mathbb{E}\left[\bxi_t^{(g)\odot 2}\mid \mathcal{F}_{t}\right]\right).
\end{align*}
The parameter update can be rewritten as
\begin{equation*}
\bx_{t+1}
=
\bx_t
+
\gamma_t\left(
-\frac{\bmm_t}{\sqrt{\bv_t}+\varepsilon}
-\lambda \bx_t
\right)
+
\gamma_t \br_t^{(x)},
\end{equation*}
where
\begin{equation*}
\br_t^{(x)}
=
\frac{\bmm_t}{\sqrt{\bv_t}+\varepsilon}
-
\frac{\bmm_{t+1}}{\sqrt{\bv_{t+1}}+\varepsilon}
\end{equation*}
collects the correction caused by replacing $(\bmm_{t+1},\bv_{t+1})$ with $(\bmm_t,\bv_t)$ in the leading drift.

Therefore, AdamW can be represented in the form of the general stochastic recursion \eqref{eq_general_recursion}. Specifically, for $\bz=(\bx,\bmm,\bv)$, the mean-field mapping $F:\mathbb{R}^{3d}\to\mathbb{R}^{3d}$ is
\begin{equation}
\label{eq_mean_field_adamw}
F(\bz)
=
\begin{pmatrix}
-\dfrac{\bmm}{\sqrt{\bv}+\varepsilon}-\lambda \bx\\[2mm]
\alpha\bigl(\nabla f(\bx)-\bmm\bigr)\\[1mm]
\beta\bigl(\bq(\bx)-\bv\bigr)
\end{pmatrix}.
\end{equation}
The martingale noise is given by
\begin{equation}
\label{eq_noise_adamw}
\bxi_t=
\begin{pmatrix}
\bzero\\
\alpha\bxi_t^{(g)}\\
\beta\Bigl(
2\nabla f(\bx_t)\odot \bxi_t^{(g)}
+
\hat{\bxi}_t^{(g)}
\Bigr)
\end{pmatrix},
\end{equation}
where
\begin{equation}
\label{eq_second_noise}
    \hat{\bxi}_t^{(g)}
=
\bxi_t^{(g)\odot 2}
-
\mathbb E\!\left[\bxi_t^{(g)\odot 2}\mid \mathcal F_t\right],
\end{equation}
and the remainder term is
\begin{equation}
\label{eq_remainder_adamw}
\br_t=
\begin{pmatrix}
\br_t^{(x)}\\
\bzero\\
\bzero
\end{pmatrix}.
\end{equation}
Consequently, the AdamW iteration admits the stochastic approximation representation \eqref{eq_general_recursion}, with mean field, martingale noise, and remainder term given by \eqref{eq_mean_field_adamw}, \eqref{eq_noise_adamw}, and \eqref{eq_remainder_adamw}, respectively.

The following proposition verifies the convergence condition and local stability requirement in Assumption~\ref{assumption1} for the AdamW mean-field dynamics. Due to the decoupled weight decay term in \eqref{eq_adamw}, the limiting point of AdamW does not necessarily coincide with a stationary point of the population objective $f$, which differs from optimizers without decoupled weight decay. We further characterize the Jacobian of the AdamW mean field at the limiting point and provide a sufficient condition under which the local stability condition in Assumption (A5) holds.
The proof is available in Appendix~\ref{appendix_prop2}.

\begin{proposition}
\label{proposition2}
Consider the AdamW algorithm and its mean-field mapping in \eqref{eq_mean_field_adamw}. The following statements hold.

\begin{enumerate}
    \item Let $\bz^\star=(\bx^\star,\bmm^\star,\bv^\star)$ satisfies the AdamW mean-field equation $F(\bz^\star)=\bzero$. Then,
    \begin{equation}
    \label{eq:adamw_eq_conditions}
    \bmm^\star = \nabla f(\bx^\star),
    \qquad
    \bv^\star = \bq(\bx^{\star}),
    \end{equation}
    and $\bx^\star$ satisfies
    \begin{equation}
    \label{eq:adamw_eq_x}
    \frac{\nabla f(\bx^\star)}{\sqrt{\bv^\star}+\varepsilon}
    +
    \lambda \bx^\star
    =
    \bzero.
    \end{equation}

    \item Suppose that the $i$-th entry of the vector $\bv^{\star}$ is nonzero for all $i=1,\dots,d$, i.e., $v_i^\star>0$, and define
    \[
    \bD^\star
    :=
    \operatorname{diag}\!\left(
    \frac{1}{\sqrt{\bv^{\star}}+\varepsilon
    }\right),
    \]
    \[
    \bC^\star
    :=
    \operatorname{diag}\!\left(
    \frac{\bmm^\star}{
    2\sqrt{\bv^\star}(\sqrt{\bv^\star}+\varepsilon)^{\odot 2}
    }\right),
    \]
    where all vector operations are understood entry-wise.
    Then the Jacobian matrix of the AdamW mean field at $\bz^\star$ is
    \begin{equation}
    \label{eq_J_adamw}
    \bJ=
    \begin{pmatrix}
    -\lambda \bI_d & -\bD^\star & \bC^\star\\[2mm]
    \alpha \nabla^2 f(\bx^\star) & -\alpha \bI_d & \bzero\\[2mm]
    \beta\nabla \bq(\bx^{\star})
    & \bzero & -\beta \bI_d
    \end{pmatrix}.
    \end{equation}
    Moreover, suppose that $\nabla^2 f(\bx^\star)\succ \bzero$, then there exists $\epsilon>0$ such that if $\|\nabla f(\bx^\star)\|<\epsilon$, the Jacobian matrix $\bJ$ in \eqref{eq_J_adamw} is Hurwitz.
\end{enumerate}
\end{proposition}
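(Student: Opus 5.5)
Part 1 follows by reading off the three block components of $F(\bz^\star)=\bzero$ in \eqref{eq_mean_field_adamw}. Since $\alpha,\beta>0$, the second and third blocks give $\bmm^\star=\nabla f(\bx^\star)$ and $\bv^\star=\bq(\bx^\star)$. Substituting $\bmm^\star$ into the first block, $-\bmm^\star/(\sqrt{\bv^\star}+\varepsilon)-\lambda\bx^\star=\bzero$, yields \eqref{eq:adamw_eq_x}. The only thing to check is that the first block is well defined, which holds because $\bv^\star=\bq(\bx^\star)\ge \bzero$ entrywise and $\varepsilon>0$.

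For the Jacobian, I will differentiate block by block. The first block has $\partial/\partial\bx=-\lambda\bI_d$ and $\partial/\partial\bmm=-\bD^\star$. For $\partial/\partial\bv$, the entrywise derivative of $-m_i/(\sqrt{v_i}+\varepsilon)$ is $m_i/\bigl(2\sqrt{v_i}(\sqrt{v_i}+\varepsilon)^2\bigr)$. This requires $v_i^\star>0$, which is exactly why that hypothesis is imposed, and it gives $\bC^\star$. The second block contributes $\alpha\nabla^2 f(\bx^\star)$, $-\alpha\bI_d$ and $\bzero$; the third contributes $\beta\nabla\bq(\bx^\star)$, $\bzero$ and $-\beta\bI_d$. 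This gives \eqref{eq_J_adamw}. Along the way I will record that $\bC^\star$ is proportional to $\bmm^\star=\nabla f(\bx^\star)$, so $\|\bC^\star\|\le c\,\|\nabla f(\bx^\star)\|$ with $c=\max_i 1/\bigl(2\sqrt{v_i^\star}(\sqrt{v_i^\star}+\varepsilon)^2\bigr)$.

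For the Hurwitz claim, I will use a perturbation argument around $\bC^\star=\bzero$. Write $\bJ=\bJ_0+\bE$, where $\bE$ contains only the block $\bC^\star$, so that $\|\bE\|=\|\bC^\star\|=\mathcal{O}(\|\nabla f(\bx^\star)\|)$. In $\bJ_0$ the third column block is $(\bzero,\bzero,-\beta\bI_d)^\top$, so $\bJ_0$ is block lower triangular with respect to the splitting $(\bx,\bmm)\,|\,\bv$. Its spectrum is therefore the spectrum of $-\beta\bI_d$, which is $\{-\beta\}$, together with that of
\[
\bM:=\begin{pmatrix}-\lambda\bI_d & -\bD^\star\\ \alpha\nabla^2 f(\bx^\star) & -\alpha\bI_d\end{pmatrix}.
\]
To locate the spectrum of $\bM$, take an eigenpair $\bM(\bu,\bw)=\mu(\bu,\bw)$. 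The first row gives $\bD^\star\bw=-(\mu+\lambda)\bu$, and since $\bD^\star$ is invertible, $\bw=-(\mu+\lambda)(\bD^\star)^{-1}\bu$. Substituting into the second row gives
\[
\alpha\bH\bu=-(\mu+\alpha)(\mu+\lambda)(\bD^\star)^{-1}\bu,\qquad \bH=\nabla^2 f(\bx^\star).
\]
Equivalently, $(\bD^\star)^{1/2}\bH(\bD^\star)^{1/2}$ has eigenvalue $\eta>0$ with $(\mu+\alpha)(\mu+\lambda)=-\alpha\eta$, because that matrix is symmetric positive definite. The resulting quadratic $\mu^2+(\alpha+\lambda)\mu+\alpha(\lambda+\eta)=0$ has positive coefficients, so both roots have strictly negative real part. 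Hence $\bJ_0$ is Hurwitz.

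Finally, eigenvalues depend continuously on the matrix entries. The spectral abscissa of $\bJ_0$ is some $-c_0<0$, so there is $\delta>0$ such that every matrix within distance $\delta$ of $\bJ_0$ is Hurwitz. Note that $\bJ_0$ itself depends on $\bx^\star$ only through $\bH$ and $\bD^\star$, which are fixed at the given point. Choosing $\epsilon=\delta/c$ then gives the claim. I expect the main obstacle to be the spectral analysis of $\bM$, in particular justifying the reduction to the scalar quadratic, including the possibility $\mu=-\lambda$, which forces $\bu=\bzero$ and then $\bw=\bzero$. A second, more minor issue is interpreting the uniformity of $\epsilon$ correctly as a perturbation of $\bC^\star$ at fixed $\bH$, $\bD^\star$ and $\nabla\bq$.
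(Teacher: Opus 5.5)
Your proposal is correct and follows the paper's proof essentially step for step. It has the same block-wise computation of $\bJ$, the same splitting $\bJ=\bJ_0+\bE$ with $\bE$ carrying only $\bC^\star$, the same block-triangular reduction to $\bM$, and the same quadratic $\mu^2+(\alpha+\lambda)\mu+\alpha(\lambda+\theta)=0$, where $\theta>0$ is an eigenvalue of $(\bD^\star)^{1/2}\nabla^2 f(\bx^\star)(\bD^\star)^{1/2}$.

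The differences are minor. You eliminate the second component using the invertibility of $\bD^\star$, which sidesteps the paper's unremarked division by $\alpha+\mu$. You also finish with openness of the set of Hurwitz matrices, whereas the paper uses the Lyapunov matrix $\bP$ of $\bJ_0$ to obtain the explicit sufficient condition $2\|\bP\|\,\|\bC^\star\|<1$.
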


We now specialize Assumption~\ref{assumption1} to the AdamW recursion in \eqref{eq_adamw}. The following conditions provide optimizer-specific regularity assumptions under which AdamW fits into the general stochastic approximation framework.

\begin{assumption}
\label{assumption3}
For the AdamW algorithm \eqref{eq_adamw}, the following conditions hold.
\begin{enumerate}[
    label=(B\arabic*),
    leftmargin=3.2em,
    labelwidth=2.2em,
    labelsep=0.6em,
    itemsep=0.25em,
    topsep=0.25em
]
    \item \textbf{Almost sure convergence.}
    The augmented iterate $\bz_t=(\bx_t,\bmm_t,\bv_t)$ converges almost surely to a point
    \[
        \bz^\star=(\bx^\star,\bmm^\star,\bv^\star),
    \]
    satisfying \eqref{eq:adamw_eq_conditions} and \eqref{eq:adamw_eq_x}. Moreover, each entry of $\bv^\star=\bq(\bx^\star)$ is strictly positive.

    \item \textbf{Local smoothness.}
    The population objective $f$ is twice continuously differentiable, the Hessian $\nabla^2 f$ is Lipschitz continuous, the second-moment function
    \[
        \bq(\bx)
        =
        \mathbb{E}\left[
        \nabla \mathcal{L}(\bx;\zeta)^{\odot 2}
        \right]
    \]
    is continuously differentiable, and $\nabla \bq$ is Lipschitz continuous in a neighborhood of $\bx^\star$.

    \item \textbf{Gradient noise with bounded moments.}
    The stochastic gradient noise $\{\bxi_t^{(g)}\}$ is a martingale-difference sequence with respect to the pre-update filtration $\{\mathcal{F}_t\}$, namely
    \[
        \mathbb{E}\left[\bxi_t^{(g)}\mid\mathcal{F}_t\right]
        =
        \bzero .
    \]
    Moreover, there exists $\delta>0$ such that
    \[
        \sup_{t\geq 1}
        \mathbb{E}\left[
        \|\bxi_t^{(g)}\|^{4+\delta}
        \mid
        \mathcal{F}_t
        \right]
        <
        \infty,
        \qquad \text{a.s.}
    \]

    \item \textbf{Asymptotic conditional covariance.}
    There exist positive semidefinite matrices $\bQ^{(g)}$ and $\widehat{\bQ}^{(g)}$, and a matrix $\bR^{(g)}$, such that
    \[
        \mathbb{E}\left[
        \bxi_t^{(g)}\bxi_t^{(g)\top}
        \mid
        \mathcal{F}_t
        \right]
        \to
        \bQ^{(g)},
        \qquad \text{a.s.},
    \]
    \[
        \mathbb{E}\left[
        \widehat{\bxi}_t^{(g)}
        \widehat{\bxi}_t^{(g)\top}
        \mid
        \mathcal{F}_t
        \right]
        \to
        \widehat{\bQ}^{(g)},
        \qquad \text{a.s.},
    \]
    and
    \[
        \mathbb{E}\left[
        \bxi_t^{(g)}
        \widehat{\bxi}_t^{(g)\top}
        \mid
        \mathcal{F}_t
        \right]
        \to
        \bR^{(g)},
        \qquad \text{a.s.}
    \]
    Here, $\widehat{\bxi}_t^{(g)}$ is defined in \eqref{eq_second_noise}.

    \item \textbf{Stability of the linearized dynamics.}
    The Jacobian matrix $\bJ$ defined in \eqref{eq_J_adamw} is Hurwitz; that is, all eigenvalues of $\bJ$ have strictly negative real parts.
\end{enumerate}
\end{assumption}

\noindent\textbf{Remarks.}
The above conditions are sufficient to verify Assumption~\ref{assumption1} for the AdamW recursion defined by \eqref{eq_mean_field_adamw}, \eqref{eq_noise_adamw}, and \eqref{eq_remainder_adamw}. Their roles are analogous to those discussed after Assumption~\ref{assumption1}, and we therefore do not repeat the full discussion here. We only highlight two AdamW-specific points. First, the bounded $(4+\delta)$-moment condition on the stochastic gradient noise is required because the second-moment update in \eqref{eq_adamw} involves squared stochastic gradients. Second, the conditional covariance assumptions in (B4) require the noise structure, including the second-order gradient noise, to stabilize asymptotically. These conditions allow the local behavior of the AdamW iterates to be characterized through the central limit theory for the augmented stochastic recursion.

\begin{theorem}
[Central limit theorem for AdamW]
\label{theorem3}
Suppose that \Cref{assumption3} holds. Then Assumptions~(A1)--(A6) in \Cref{assumption1} are satisfied for the AdamW iterates~\eqref{eq_augmented_z} generated by \eqref{eq_adamw}. Consequently,
\begin{equation*}
\gamma_t^{-1/2}(\bz_t-\bz^\star)
\;\xrightarrow{d}\;
\mathcal N(\bzero,\bSig),
\end{equation*}
and
\begin{equation*}
    \sqrt{t}\left(\Bar{\bz}_t - \bz^{\star}\right)\;\xrightarrow{d}\;
\mathcal N(\bzero,\bJ^{-1}\bQ\bJ^{-\top}),
\end{equation*}
where $\bSig$ is the unique positive semidefinite solution to the Lyapunov equation
\begin{equation*}
\bJ\bSig+\bSig\bJ^\top+\bQ=\bzero.
\end{equation*}
Here, $\bJ$ is given by \eqref{eq_J_adamw}, and the asymptotic covariance matrix $\bQ$ takes the block form
\begin{equation*}
\bQ=
\begin{pmatrix}
\bzero & \bzero & \bzero\\
\bzero & \bQ_{22} & \bQ_{23}\\
\bzero & \bQ_{32} & \bQ_{33}
\end{pmatrix},
\end{equation*}
with 
\begin{align*}
\bQ_{22}
&= \alpha^2 \bQ^{(g)},\\[1mm]
\bQ_{23}
&= \alpha\beta\bigl(2\bQ^{(g)}\operatorname{diag}\!\bigl(\bmm^{\star}\bigr)+\bR^{(g)}\bigr),\\[1mm]
\bQ_{32}
&= \alpha\beta\bigl(2\operatorname{diag}\!\bigl(\bmm^{\star}\bigr)\bQ^{(g)}+\bR^{(g)\top}\bigr),\\[1mm]
\bQ_{33}
&=\beta^2\Bigl(
4\operatorname{diag}\!\bigl(\bmm^{\star}\bigr)\bQ^{(g)}\operatorname{diag}\!\bigl(\bmm^{\star}\bigr)
+2\operatorname{diag}\!\bigl(\bmm^{\star}\bigr)\bR^{(g)}
+2\bR^{(g)\top}\operatorname{diag}\!\bigl(\bmm^{\star}\bigr)
+\hat{\bQ}^{(g)}
\Bigr).
\end{align*}
\end{theorem}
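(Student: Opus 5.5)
The plan is to verify Assumptions (A1)--(A6) for the AdamW recursion written in the form \eqref{eq_general_recursion}, with mean field \eqref{eq_mean_field_adamw}, noise \eqref{eq_noise_adamw} and remainder \eqref{eq_remainder_adamw}, and then invoke \Cref{theorem1}. The covariance formula then reduces to computing the limit of $\mathbb E[\bxi_t\bxi_t^\top\mid\mathcal F_t]$ block by block. Several conditions transfer directly:
\begin{itemize}
\item (A1) is (B1) combined with part 1 of \Cref{proposition2}.
\item (A5) is (B5).
\item (A2) follows from (B2). Near $\bz^\star$ the map $\bz\mapsto -\bmm/(\sqrt{\bv}+\varepsilon)-\lambda\bx$ is smooth with locally Lipschitz derivative, since $v_i^\star>0$ keeps $\sqrt{\bv}$ smooth on a neighborhood. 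The other two blocks are $C^1$ with Lipschitz Jacobians by the assumptions on $\nabla^2 f$ and $\nabla\bq$. The Jacobian is then \eqref{eq_J_adamw}.
\end{itemize}

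For (A3), the martingale property is immediate. Each block of $\bxi_t$ is a combination of $\bxi_t^{(g)}$, $\nabla f(\bx_t)\odot\bxi_t^{(g)}$ and the centered square $\hat\bxi_t^{(g)}$, all conditionally mean zero. For the moment bound, I take the exponent $\delta'=\delta/2$:
\begin{itemize}
\item $\|\hat\bxi_t^{(g)}\|^{2+\delta/2}\lesssim \|\bxi_t^{(g)}\|^{4+\delta}+\mathbb E[\|\bxi_t^{(g)}\|^{2}\mid\mathcal F_t]^{2+\delta/2}$, which is bounded by (B3).
\item $\nabla f(\bx_t)$ is a.s.\ bounded along the path because $\bx_t\to\bx^\star$ and $\nabla f$ is continuous. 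This gives a pathwise (random but finite) supremum, which is what the a.s.\ formulation of (A3) requires.
\end{itemize}

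For (A4), I expand the outer products. Using (B4) and $\nabla f(\bx_t)\to\nabla f(\bx^\star)=\bmm^\star$, the limits are:
\begin{itemize}
\item the $(2,2)$ block tends to $\alpha^2\bQ^{(g)}$;
\item the $(2,3)$ block is $\alpha\beta\,\mathbb E[\bxi^{(g)}(2\operatorname{diag}(\nabla f)\bxi^{(g)}+\hat\bxi^{(g)})^\top\mid\mathcal F_t]$, which tends to $\alpha\beta(2\bQ^{(g)}\operatorname{diag}(\bmm^\star)+\bR^{(g)})$;
\item the $(3,3)$ block expands into the four stated terms.
\end{itemize}
The zero first row and column come from the first block of $\bxi_t$ being $\bzero$.

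The main obstacle is (A6): showing $\mathbb E[\|\br_t^{(x)}\|^2\mid\mathcal F_t]=\mathcal O(\gamma_t^2)$. Write $h(\bmm,\bv)=\bmm/(\sqrt{\bv}+\varepsilon)$. Then $\br_t^{(x)}=h(\bmm_t,\bv_t)-h(\bmm_{t+1},\bv_{t+1})$. Since $\varepsilon>0$ and $\bv\ge\bzero$ entrywise along the trajectory (it is a convex combination of squares once $\beta_t\le1$):
\begin{itemize}
\item $h$ is globally Lipschitz in $\bmm$ with constant $1/\varepsilon$;
\item in $\bv$, $|\partial_{v}h|\le |m|/(2\sqrt{v}(\sqrt v+\varepsilon)^2)$, which can blow up as $v\to0$.
\end{itemize}
To handle the $\bv$ direction I will use the elementary bound $|1/(\sqrt a+\varepsilon)-1/(\sqrt b+\varepsilon)|\le|\sqrt a-\sqrt b|/\varepsilon^2\le\sqrt{|a-b|}/\varepsilon^2$. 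This alone gives only $\mathcal O(\gamma_t)$ in $L^1$, not $L^2$. I therefore plan to exploit $v_i^\star>0$ instead: on the event where $\bv_t$ lies in a neighborhood where $\bv\ge \bv^\star/2$, the map $h$ is Lipschitz in both arguments. There $\|\br_t^{(x)}\|\lesssim\|\bmm_{t+1}-\bmm_t\|+\|\bv_{t+1}-\bv_t\|\lesssim\gamma_t(\|\nabla f(\bx_t)\|+\|\bmm_t\|+\|\bq(\bx_t)\|+\|\bv_t\|+\|\bxi_t^{(g)}\|+\|\bxi_t^{(g)}\|^2)$. Squaring and conditioning uses the fourth moment from (B3). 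One subtlety remains: $\bv_{t+1}$ could fall below $\bv^\star/2$ even when $\bv_t$ does not, via noise. I intend to use the fact that $\bv_{t+1}\ge(1-\beta_t)\bv_t$ deterministically, so it stays above $\bv^\star/4$.

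Because $\bz_t\to\bz^\star$ a.s., the event holds for all large $t$ along almost every path. Hence the $\mathcal O(\gamma_t^2)$ bound holds a.s.\ eventually, which suffices for (A6) as stated with a random constant. If \Cref{theorem1}'s proof requires a more uniform bound, I would replace it with the standard localization device: use stopping times $\tau_K$ exiting a neighborhood, apply the result on $\{\tau_K=\infty\}$, and let the probability of that event tend to one. With (A1)--(A6) established, \Cref{theorem1} yields both limits with the stated $\bJ$ and $\bQ$.
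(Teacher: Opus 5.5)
Your proposal follows the paper's proof essentially step for step: you verify (A1)--(A6) for the representation \eqref{eq_mean_field_adamw}--\eqref{eq_remainder_adamw}, read off $\bQ$ block by block using $\nabla f(\bx_t)\to\bmm^\star$, control $\br_t^{(x)}$ through local Lipschitz continuity of $(\bmm,\bv)\mapsto\bmm/(\sqrt{\bv}+\varepsilon)$, and invoke \Cref{theorem1}, and your pathwise constants in (A3), the bound $\bv_{t+1}\ge(1-\beta_t)\bv_t$, and the localization remark are more careful than the paper. One loose end in (A6), which the paper shares, is that the Lipschitz constant in $\bv$ is proportional to $|\bmm|$, while your event does not confine $\bmm_{t+1}=\bmm_t+\alpha_t(\nabla f(\bx_t)-\bmm_t+\bxi_t^{(g)})$, so the naive product bound leaves a term $\alpha_t\beta_t|g_{t,i}|^3$ whose square needs sixth moments; the fix is to bound the $i$th entry of $\bmm_{t+1}\odot\bigl[(\sqrt{\bv_t}+\varepsilon)^{-1}-(\sqrt{\bv_{t+1}}+\varepsilon)^{-1}\bigr]$ by $C|m_{t,i}|\,|v_{t+1,i}-v_{t,i}|+\alpha_t|g_{t,i}|/\varepsilon$, using $|\sqrt a-\sqrt b|\le|a-b|/\sqrt a$ with $a=v_{t,i}\ge v_i^\star/2$ together with the trivial bound $\varepsilon^{-1}$ on the bracket, which keeps (A6) within the fourth moments of (B3).
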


\noindent\textit{Proof sketch.}
The detailed proof is deferred to Appendix~\ref{appendix_proof2}. 
The proof proceeds by verifying that the AdamW recursion can be written in the general stochastic approximation form \eqref{eq_general_recursion}. 
Under Assumption~\ref{assumption3}, the augmented state $\bz_t=(\bx_t,\bmm_t,\bv_t)$ converges to a limiting point, the AdamW mean field is locally differentiable, the induced noise sequence is a martingale difference with a limiting conditional covariance, and the remainder term is asymptotically negligible. Hence, Assumptions (A1)--(A6) in Assumption~\ref{assumption1} hold for the AdamW recursion. 
The last-iterate and Polyak--Ruppert averaged central limit theorems then follow directly from Theorem~\ref{theorem1}. 
The block covariance matrix $\bQ$ is obtained by computing the conditional covariance of the martingale noise vector in \eqref{eq_noise_adamw}.

Theorem~\ref{theorem3} gives an explicit asymptotic distribution for the AdamW iterates and their Polyak--Ruppert averages. 
Compared with standard stochastic gradient methods, the analysis is more involved because AdamW contains first-moment averaging, second-moment adaptation, coordinate-wise scaling with momentum terms, and decoupled weight decay. 
These algorithmic components require an augmented-state formulation and lead to a nontrivial covariance structure.
To the best of our knowledge, few existing results provide such an explicit central limit theorem for AdamW with both last-iterate and averaged-iterate limits. 
This result forms the main theoretical basis for the component score inference and hypothesis testing procedure developed in the next section.

\subsection{Extensions to Adam and Adafactor}

Adam~\cite{kingma2015adam} and Adafactor~\cite{shazeer2018adafactor} are also commonly used optimizers in LoRA fine-tuning and large-scale model training. Both algorithms can be analyzed within the same stochastic approximation framework developed above, after introducing suitable augmented state variables and optimizer-specific mean-field mappings. Compared with AdamW, the main differences lie in the form of weight decay and the construction of the adaptive preconditioner. These differences lead to different limiting points, Jacobian matrices, and limiting covariance structures, but the underlying proof strategy remains similar: one verifies the corresponding stochastic approximation assumptions and then applies the general central limit theorem in Theorem~\ref{theorem1}.

To keep the main content focused, we defer the detailed formulations and proofs for Adam and Adafactor to Appendices~\ref{appendix_adam} and \ref{appendix_adafactor}. For completeness, we state the following general extension result.

\begin{corollary}[Extensions to Adam and Adafactor]
Adam and Adafactor can be represented as augmented stochastic approximation recursions of the form \eqref{eq_general_recursion}. Under optimizer-specific analogues of Assumption~\ref{assumption1}, their last iterates and Polyak--Ruppert averaged iterates satisfy central limit theorems of the same form as Theorem~\ref{theorem1}.
\end{corollary}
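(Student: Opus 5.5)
The argument follows the AdamW template of Theorem~\ref{theorem3}: cast each optimizer as an instance of the recursion \eqref{eq_general_recursion} with an augmented state, then check (A1)--(A6) so that Theorem~\ref{theorem1} applies directly.

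\textbf{Adam.} We take the state $\bz_t=(\bx_t,\bmm_t,\bv_t)$. The weight decay is coupled, so the stochastic gradient becomes $\bg_t=\nabla\mathcal{L}(\bx_t;\zeta_t)+\lambda\bx_t$. The mean field is then
\[
F(\bz)=\Bigl(-\tfrac{\bmm}{\sqrt{\bv}+\varepsilon},\ \alpha(\nabla f(\bx)+\lambda\bx-\bmm),\ \beta(\tilde{\bq}(\bx)-\bv)\Bigr),
\]
where $\tilde{\bq}(\bx)=\mathbb{E}[(\nabla\mathcal{L}(\bx;\zeta)+\lambda\bx)^{\odot2}]$. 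At the limiting point this gives $\bmm^\star=\bzero$, and hence $\nabla f(\bx^\star)+\lambda\bx^\star=\bzero$, so $\bx^\star$ is a genuine stationary point of the regularized objective. The Jacobian is the analogue of \eqref{eq_J_adamw} with the $(1,1)$ block equal to $\bzero$, the $(1,3)$ block $\bC^\star=\bzero$ (because $\bmm^\star=\bzero$), and the $(2,1)$ block equal to $\alpha(\nabla^2 f+\lambda\bI)$. The martingale noise and the remainder $\br_t^{(x)}$ have the same form as \eqref{eq_noise_adamw} and \eqref{eq_remainder_adamw}.

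\textbf{Adafactor.} For a matrix parameter, $\bv$ is replaced by row and column accumulators $(\bR_t,\bC_t)$. Each accumulator is updated by an exponential average of row or column sums of $\bg_t^{\odot2}$. The preconditioner is $\hat{\bv}=\bR\bC^\top/(\bone^\top\bR)$, a smooth function of the state wherever $\bone^\top\bR>0$. The mean field replaces $\bv$ by $\hat{\bv}(\bR,\bC)$, and the targets of the accumulators are the row and column sums of $\bq(\bx)$. If update clipping or relative step sizes are used, we assume these are inactive near $\bz^\star$, or absorb them into $\br_t$.

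\textbf{Verifying (A1)--(A6).} We impose the analogues of (B1)--(B5): almost sure convergence with strictly positive second-moment limits (or accumulator limits), $C^2$ smoothness of $f$ and $\bq$ with locally Lipschitz derivatives, a bounded $(4+\delta)$-moment of the gradient noise, convergent conditional covariances, and Hurwitz stability. Under these, (A1), (A2), (A4), and (A5) are immediate. For (A3), the noise vector consists of linear and quadratic functions of $\bxi_t^{(g)}$, centered conditionally on $\mathcal{F}_t$, so its $(2+\delta/2)$-moments are bounded by the $(4+\delta)$-moment assumption. The covariance $\bQ$ is obtained by the same block computation as in Theorem~\ref{theorem3}. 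For Adafactor, the noise blocks for the accumulators are obtained by applying the row-sum and column-sum maps to $\hat{\bxi}_t^{(g)}$ and $2\nabla f\odot\bxi_t^{(g)}$.

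\textbf{Main obstacle: the remainder (A6).} We must bound $\br_t^{(x)}=\phi(\bmm_t,\bv_t)-\phi(\bmm_{t+1},\bv_{t+1})$, with $\phi(\bmm,\bv)=\bmm/(\sqrt{\bv}+\varepsilon)$. Since $\varepsilon>0$ and, near the limit, $\bv$ is bounded away from zero, $\phi$ is locally Lipschitz. The increments satisfy $\|\bmm_{t+1}-\bmm_t\|\le\alpha\gamma_t(\|\nabla f(\bx_t)-\bmm_t\|+\|\bxi_t^{(g)}\|)$, and similarly for $\bv$ with the squared noise. This gives $\mathbb{E}[\|\br_t\|^2\mid\mathcal{F}_t]=\mathcal{O}(\gamma_t^2)$ using the fourth-moment bound. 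The global version needs care because a.s.\ convergence only localizes eventually. We handle this by a standard stopping or localization argument: on the event $\{\bz_\tau\in U \text{ for all } \tau\ge T\}$, whose probability tends to one, we truncate the recursion and then apply Theorem~\ref{theorem1}. The same bound holds for Adafactor, provided $\bone^\top\bR$ stays bounded away from zero near $\bz^\star$.

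With (A1)--(A6) verified for each optimizer, both the last-iterate and the Polyak--Ruppert central limit theorems follow from Theorem~\ref{theorem1}.
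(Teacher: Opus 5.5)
Your argument follows the same template as the paper's proofs in Appendices~\ref{appendix_adam} and~\ref{appendix_adafactor}: augmented state, a split into mean field, martingale noise and remainder, verification of (A1)--(A6), and then Theorem~\ref{theorem1}.

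The differences lie in what is modeled and what is assumed. The paper treats plain Adam without weight decay. Its ``Adafactor'' is unfactored and has no first moment: $\bx_{t+1}=\bx_t-\gamma_t\bg_t/(\sqrt{\bv_{t+1}}+\varepsilon)$. In return, the paper writes $\bJ$ and $\bQ$ out explicitly and derives the Hurwitz property from $\nabla^2 f(\bx^\star)\succ\bzero$ (Propositions~\ref{proposition1} and~\ref{proposition3}).

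You cover coupled $L_2$ weight decay and genuinely factored row/column accumulators, which is closer to practice. However, you leave $\bJ$ and $\bQ$ implicit for Adafactor and assume stability outright. That assumption is in fact derivable in your setting. At $\bz^\star$, the derivative of the $\bx$-block with respect to the second-moment variables carries a factor $\bmm^\star=\bzero$ or $\nabla f(\bx^\star)=\bzero$, so $\bJ$ is block lower triangular. The argument of Propositions~\ref{proposition1} and~\ref{proposition3} then goes through, with $\nabla^2 f(\bx^\star)+\lambda\bI$ in place of the Hessian for your Adam. Your localization step plays the role of the stopping time $\tau_{t_0}$ in the proof of Lemma~\ref{lemma5}.

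One caution, which applies equally to the paper's own check of (A6) for Adafactor. Suppose your Adafactor has no first-moment buffer, as in its default and in the paper's version. Then the remainder is $\bg_t\odot\bigl((\sqrt{\hat{\bv}_t}+\varepsilon)^{-1}-(\sqrt{\hat{\bv}_{t+1}}+\varepsilon)^{-1}\bigr)$. Its leading factor $\bg_t$ is not damped by $\gamma_t$, unlike $\bmm_{t+1}$ in Adam, so ``the same bound'' does not carry over. The remainder is cubic in the noise, and $(4+\delta)$ moments with $\delta<2$ only give $\mathbb{E}[\|\br_t\|^2\mid\mathcal{F}_t]=\mathcal{O}(\gamma_t^{1+\delta/2})$. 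Hence (A6) as stated does not follow.

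The repair is to keep only the conditional mean $\mathbb{E}[\br_t\mid\mathcal{F}_t]=\mathcal{O}(\gamma_t)$ as the remainder. The centered part $\br_t-\mathbb{E}[\br_t\mid\mathcal{F}_t]$ has conditional variance tending to zero, so it can be absorbed into $\bxi_t$. Then (A3), (A4) and (A6) hold with the same $\bQ$.
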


The precise augmented states, mean-field mappings, Jacobian matrices, limiting covariance matrices, and proofs are provided in Appendices~\ref{appendix_adam} and \ref{appendix_adafactor}. 
The resulting covariance formulas differ across optimizers because they depend on the local linearization of the corresponding mean field and on the conditional covariance of the induced martingale noise.

The unified formulation also suggests that the same analysis can be extended to some other optimizers in deep learning, provided that they admit an augmented-state representation satisfying the regularity conditions in Assumption~\ref{assumption1}. Recent work has studied central limit theory for Adam-type methods using optimizer-specific arguments~\cite{dereich2026central,an2026polyak,barakat2021convergence}. In contrast, our approach emphasizes a common stochastic approximation structure, which allows AdamW, Adam, and Adafactor to be treated within a unified framework.

\section{Hypothesis Testing for LoRA Rank Allocation}

In this section, we develop the statistical methodology and algorithm for LoRA rank allocation. The central objects of inference are the empirical component scores generated along stochastic optimizer trajectories. Building on the central limit theory for optimizer iterates established in the previous section, we first transfer the asymptotic analysis from iterates to LoRA component scores and derive the limiting distribution of averaged empirical scores. We then introduce a practical variance estimation procedure to approximate the corresponding long-run variance from the observed score trajectory. These ingredients allow us to construct one-sided hypothesis tests for evaluating the significance of individual rank-one components. Finally, we incorporate the resulting p-values into an adaptive pruning procedure, leading to the proposed StatLoRA algorithm for statistically guided LoRA rank allocation.

\subsection{Central Limit Theory for Empirical Scores}

The population component scores $s_{\ell,j}^{\star}$, defined in \eqref{eq_score_population}, are unknown in practice. We therefore estimate them from the training trajectory generated by stochastic optimization. For the $j$-th rank-one component in the $\ell$-th LoRA module, define the empirical score at iteration $t$ by
\begin{equation}
\label{eq_empirical_score}
    s_{\ell,j}(t) = \left\| \bb_{\ell,j}(t)\right\|^2 \cdot \left\| \ba_{\ell,j}(t)\right\|^2,
\end{equation}
where $\ba_{\ell,j}(t)$ and $\bb_{\ell,j}(t)$ denote the corresponding LoRA factors at iteration $t$, after running the stochastic optimization algorithms (e.g., AdamW) for fine-tuning. 
In the LoRA fine-tuning setting, the state variable $\bz$ contains all trainable LoRA parameters $\{\bA_{\ell},\bB_{\ell}\}_{\ell=1}^{L}$, possibly together with optimizer-specific auxiliary variables. 
Let $\bP_{\ell,j}$ denote the selection matrices that extract $\bw_{\ell,j} := (\ba_{\ell,j},\bb_{\ell,j})$ from the full state vector $\bz$, respectively. That is,
\[
    \bw_{\ell,j}(t)=\bP_{\ell,j}\bz_t .
\]
Define the induced joint covariance matrices
\begin{equation}
\label{eq_cov_component}
    \bSig_{\ell,j}^{w}:=\bP_{\ell,j}\bSig\bP_{\ell,j}^{\top},
    \qquad
    \bar{\bSig}_{\ell,j}^{w}
    :=
    \bP_{\ell,j}\bJ^{-1}\bQ\bJ^{-\top}\bP_{\ell,j}^{\top}.
\end{equation}
where $\bSig$ is the limiting covariance matrix for last-iterate, while $\bJ$ and $\bQ$ are Jacobian and noise covariance matrices appearing in \Cref{theorem3}. 
By the central limit theory for iterates established in \Cref{theorem3}, these component-wise LoRA factors inherit asymptotic normality from the full optimizer state. In particular,
\begin{equation}
\label{eq_asymp_a}
    \gamma_{t}^{-1/2}\left(\bw_{\ell,j}(t) - \bw_{\ell,j}^{\star}\right) 
    \;\xrightarrow{d}\;
\mathcal{N}\left(\bzero,\bSig_{\ell,j}^{w}\right),
\end{equation}
and
\begin{equation}
\label{eq_asymp_avg_w}
    \sqrt{t}\left(\frac{1}{t}\sum_{\tau=1}^{t}\bw_{\ell,j}(\tau) - \bw_{\ell,j}^{\star}\right) 
    \;\xrightarrow{d}\;
\mathcal{N}\left(\bzero,\bar{\bSig}_{\ell,j}^{w}\right),
\end{equation}
where $\bw^{\star}_{\ell,j} := (\ba^{\star}_{\ell,j},\bb^{\star}_{\ell,j})$.
These relations show that the joint LoRA component vector inherits an asymptotic distribution from the full optimizer state.
We next transfer these results to the empirical component score $s_{\ell,j}(t)$.

Since the empirical score $s_{\ell,j}(t)$ is a smooth function of the LoRA factors $\ba_{\ell,j}(t)$ and $\bb_{\ell,j}(t)$, the central limit theory for optimizer iterates can be transferred to the score process through the delta method. The following theorem characterizes the asymptotic distribution of both the last-iterate score and the averaged empirical score.

The first-order delta method may degenerate at the zero-score boundary $s_{\ell,j}^{\star}=0$, where one of the two LoRA factors may vanish. Since our null hypothesis is formulated as $s_{\ell,j}^{\star}\geq \Delta$ with $\Delta>0$, the relevant testing regime is away from this degenerate boundary. In this regime, both $\ba_{\ell,j}^{\star}$ and $\bb_{\ell,j}^{\star}$ are nonzero, and the first-order Taylor expansion of the score function provides the leading asymptotic term.

\begin{theorem}
    [Central limit theorem for empirical scores]
\label{theorem_clt_score}
Suppose that the assumptions for LoRA fine-tuning with AdamW in Assumption~\ref{assumption3} hold. Fix a LoRA component $(\ell,j)$ and assume that
$s_{\ell,j}^{\star}=
    \|\ba_{\ell,j}^{\star}\|^{2}
    \|\bb_{\ell,j}^{\star}\|^{2}
    \geq \Delta > 0$. Then the empirical score defined in \eqref{eq_empirical_score} satisfies
    \begin{equation}
    \label{eq_score_clt_last}
        \gamma_{t}^{-1/2}\left(s_{\ell,j}(t) - s_{\ell,j}^{\star}\right) 
    \;\xrightarrow{d}\;
\mathcal{N}(0,\sigma_{\ell,j}^{2}),
    \end{equation}
and the averaged empirical score satisfies
    \begin{equation}
    \label{eq_score_clt_avg}
        \sqrt{t}\left(\frac{1}{t}\sum_{\tau=1}^{t}s_{\ell,j}(\tau) - s_{\ell,j}^{\star}\right) 
    \;\xrightarrow{d}\;
\mathcal{N}(0,\Bar{\sigma}_{\ell,j}^{2}),
    \end{equation}
for some $\sigma_{\ell,j}^{2}, \Bar{\sigma}_{\ell,j}^{2} \geq 0$.
\end{theorem}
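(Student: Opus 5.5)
The plan is to apply the delta method to the component-wise limits \eqref{eq_asymp_a} and \eqref{eq_asymp_avg_w}, which follow from \Cref{theorem3} by projecting with $\bP_{\ell,j}$. Write $h(\bw)=\|\ba\|^2\|\bb\|^2$ for $\bw=(\ba,\bb)$, so that $s_{\ell,j}(t)=h(\bw_{\ell,j}(t))$ and $s_{\ell,j}^\star=h(\bw_{\ell,j}^\star)$. The function $h$ is a polynomial, hence $C^\infty$, with gradient
\[
\nabla h(\bw)=\bigl(2\|\bb\|^2\ba,\;2\|\ba\|^2\bb\bigr).
\]
Since $s^\star_{\ell,j}\ge\Delta>0$, both factors $\ba^\star_{\ell,j}$ and $\bb^\star_{\ell,j}$ are nonzero. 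Hence $\bg_{\ell,j}:=\nabla h(\bw^\star_{\ell,j})\neq\bzero$, and the first-order term is the leading one. Strictly speaking, nondegeneracy is not needed for the statement, which allows variance $0$, but it explains why the limit is informative.

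For the last-iterate claim \eqref{eq_score_clt_last}, Taylor-expand around $\bw^\star_{\ell,j}$:
\[
s_{\ell,j}(t)-s^\star_{\ell,j}=\bg_{\ell,j}^\top(\bw_{\ell,j}(t)-\bw^\star_{\ell,j})+\mathcal{O}(\|\bw_{\ell,j}(t)-\bw^\star_{\ell,j}\|^2).
\]
This holds on the event that the iterate lies in a bounded neighborhood, which eventually occurs a.s. by (B1). By \eqref{eq_asymp_a}, $\bw_{\ell,j}(t)-\bw^\star_{\ell,j}=\mathcal O_p(\gamma_t^{1/2})$, so after multiplying by $\gamma_t^{-1/2}$ the remainder is $\mathcal O_p(\gamma_t^{1/2})=o_p(1)$. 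The continuous mapping theorem and Slutsky then give \eqref{eq_score_clt_last} with $\sigma^2_{\ell,j}=\bg_{\ell,j}^\top\bSig^w_{\ell,j}\bg_{\ell,j}$.

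The averaged claim \eqref{eq_score_clt_avg} is the main obstacle. The average of $h(\bw(\tau))$ is not $h$ of the averaged iterate, so the delta method cannot be applied directly to \eqref{eq_asymp_avg_w}. I would expand each term and sum:
\[
\sqrt t\Bigl(\tfrac1t\textstyle\sum_{\tau\le t}s_{\ell,j}(\tau)-s^\star_{\ell,j}\Bigr)=\bg_{\ell,j}^\top\sqrt t\bigl(\bar\bw_{\ell,j}(t)-\bw^\star_{\ell,j}\bigr)+\tfrac{1}{\sqrt t}\textstyle\sum_{\tau\le t}\mathcal O(\|\bw_{\ell,j}(\tau)-\bw^\star_{\ell,j}\|^2).
\]
The first term converges to $\mathcal N(0,\bg_{\ell,j}^\top\bar\bSig^w_{\ell,j}\bg_{\ell,j})$ by \eqref{eq_asymp_avg_w}. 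For the second term, the finitely many early iterates that fall outside the neighborhood contribute $\mathcal O(1)/\sqrt t\to0$ a.s. For the rest, I would use the moment bound $\mathbb E\|\be_\tau\|^2\mathbf 1\{\text{local}\}=\mathcal O(\gamma_\tau)$, which comes from the linearized-recursion analysis in the proof of \Cref{theorem1} (Lemma~\ref{lemma4}) under (A3)--(A6). This gives an expected sum of order
\[
t^{-1/2}\textstyle\sum_{\tau\le t}\tau^{-\kappa}=\mathcal O(t^{1/2-\kappa})\to0,
\]
since $\kappa>1/2$. Markov's inequality then makes the remainder $o_p(1)$. If only the distributional statement is available rather than the second-moment bound, I would fall back on the standard localization (stopping-time) argument used for Polyak--Ruppert averaging, which yields the same $o_p(1)$ control. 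Slutsky then finishes the proof, with $\bar\sigma^2_{\ell,j}=\bg_{\ell,j}^\top\bar\bSig^w_{\ell,j}\bg_{\ell,j}$.
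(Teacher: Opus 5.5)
Your proposal is correct and follows the paper's proof essentially step for step. The last iterate is handled by the delta method with $h(\bw)=\|\ba\|^2\|\bb\|^2$ plus Slutsky, and the average by a term-by-term Taylor expansion whose summed quadratic remainder vanishes because $t^{-1/2}\sum_{\tau\le t}\gamma_\tau=\mathcal{O}(t^{1/2-\kappa})$.

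Your control of that summed remainder is more careful than the paper's, which simply adds termwise $\mathcal{O}_p(\gamma_\tau)$ bounds; that step is not valid without uniform integrability, which your localized second-moment bound plus Markov supplies. Note that the bound you need, $\mathbb{E}[\|\be_t\|^2\bone_{\{\tau_{t_0}\ge t\}}]=\mathcal{O}(\gamma_t)$, is established in Step~4 of the proof of Lemma~\ref{lemma5}, not in Lemma~\ref{lemma4}.
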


To express the limiting variances compactly, define the score function
\[
    h(\bw_{\ell,j})
    :=\|\ba_{\ell,j}\|^{2}\|\bb_{\ell,j}\|^{2}.
\]
Then the variances in \Cref{theorem_clt_score} are given by
\[
    \sigma_{\ell,j}^{2}
    =
    \nabla h(\bw_{\ell,j}^{\star})^{\top}
    \bSig_{\ell,j}^{w}
    \nabla h(\bw_{\ell,j}^{\star}),
    \qquad
    \bar{\sigma}_{\ell,j}^{2}
    =
    \nabla h(\bw_{\ell,j}^{\star})^{\top}
    \bar{\bSig}_{\ell,j}^{w}
    \nabla h(\bw_{\ell,j}^{\star}),
\]
where
\[
    \nabla h(\bw_{\ell,j}^{\star})
    =
    \begin{pmatrix}
    2\|\bb_{\ell,j}^{\star}\|^{2}\ba_{\ell,j}^{\star}\\   2\|\ba_{\ell,j}^{\star}\|^{2}\bb_{\ell,j}^{\star}
    \end{pmatrix},
\]
and covariance matrices are defined in \eqref{eq_cov_component}.

\noindent\textit{Proof sketch.}
The detailed proof is deferred to Appendix~\ref{appendix_proof3}. The proof follows from the delta method. Since
$s_{\ell,j}(t)
    =
    h(\bw_{\ell,j}(t)),$
a first-order Taylor expansion around $\bw_{\ell,j}^{\star}$ yields
\[
    s_{\ell,j}(t)-s_{\ell,j}^{\star}
    =
    \nabla h(\bw_{\ell,j}^{\star})^{\top}
    \left(
        \bw_{\ell,j}(t)-\bw_{\ell,j}^{\star}
    \right)
    +
    o_p\left(
        \|\bw_{\ell,j}(t)-\bw_{\ell,j}^{\star}\|
    \right).
\]
The condition $s_{\ell,j}^{\star}\geq\Delta>0$ excludes the zero-score boundary at which the first-order expansion may degenerate. Combining the above expansion with the component-wise central limit theory inherited from the optimizer state gives \eqref{eq_score_clt_last}. 
The averaged-score result \eqref{eq_score_clt_avg} follows by averaging the Taylor expansion along the trajectory and showing that the averaged second-order remainder is $o_p(t^{-1/2})$.

\Cref{theorem_clt_score} extends the optimizer-level central limit theory to the empirical component scores used for LoRA rank allocation. Thus, the empirical score is treated not merely as a deterministic importance measure, but as an estimator with an asymptotic distribution. This provides the basis for uncertainty quantification and for the hypothesis testing procedure developed below.

\subsection{Hypothesis Testing for LoRA Components}

The asymptotic distributions in \Cref{theorem_clt_score} provide the basis for hypothesis testing for each LoRA component. Our goal is to use the empirical score trajectory to evaluate whether a population score $s_{\ell,j}^{\star}$ is sufficiently large, and hence whether the corresponding LoRA component should be retained or pruned. To implement such tests, however, one must estimate the asymptotic variance appearing in the central limit theorem for empirical scores.

A direct approach is to use the explicit variance formula in \Cref{theorem_clt_score}. This leads to a plug-in estimator~\cite{newey1994asymptotic,chen2020statistical,na2025statistical}, where the optimizer-level covariance matrices, Jacobians, and score derivatives are estimated separately and substituted into the theoretical expression. Although conceptually straightforward, this approach is often impractical in large-scale fine-tuning, since it requires estimating high-dimensional covariance matrices and local derivative quantities that may be unstable or expensive to compute. 

As an alternative, we use a batch-means estimator~\cite{zhu2021online,kuang2025Online} to estimate the long-run variance of the scalar score process directly from its observed trajectory. Batch-means estimators are standard tools for variance estimation in central limit theorems for dependent stochastic processes and are particularly natural for averaged estimators. This motivates us to base the testing procedure on the Polyak--Ruppert averaged empirical score rather than the last-iterate score. Specifically, we use the averaged-score central limit theorem
\begin{equation*}
    \sqrt{t}
    \left(
        \frac{1}{t}\sum_{\tau=1}^{t}s_{\ell,j}(\tau)
        -
        s_{\ell,j}^{\star}
    \right)
    \xrightarrow{d}
    \mathcal{N}
    \left(
        0,
        \bar{\sigma}_{\ell,j}^{2}
    \right),
\end{equation*}
and estimate the long-run variance $\bar{\sigma}_{\ell,j}^{2}$ from the score trajectory. The following proposition states the consistency of the batch-means estimator under standard regularity conditions.

\begin{proposition}[Consistency of the batch-means variance estimator]
\label{prop_batch_mean}
Let $s_t=s_{\ell,j}(t)$ denote the empirical score process for a fixed LoRA component $(\ell,j)$. Suppose that the averaged score process satisfies a functional central limit theorem with long-run variance $\bar{\sigma}_{\ell,j}^{2}$. Let $t=Mb$, where $M$ is the number of batches and $b$ is the batch size. Assume that
\[
    b\to\infty,
    \qquad
    M\to\infty,
    \qquad
    b/t\to 0 .
\]
Define the batch means and the overall average by
\[
    \bar{s}_{m,b}
    =
    \frac{1}{b}
    \sum_{\tau=(m-1)b+1}^{mb}
    s_\tau,
    \qquad
    m=1,\ldots,M,
\]
and
\[
    \bar{s}_{t}
    =
    \frac{1}{t}
    \sum_{\tau=1}^{t}
    s_\tau .
\]
The batch-means variance estimator is defined as
\[
    \widehat{\bar{\sigma}}_{\ell,j}^{2}
    =
    \frac{b}{M-1}
    \sum_{m=1}^{M}
    \left(
        \bar{s}_{m,b}
        -
        \bar{s}_{t}
    \right)^2 .
\]
Then, under the standard moment and weak-dependence conditions for batch-means estimation~\cite{glynn1991estimating,flegal2010batch},
\[
    \widehat{\bar{\sigma}}_{\ell,j}^{2}
    \xrightarrow{p}
    \bar{\sigma}_{\ell,j}^{2}.
\]
\end{proposition}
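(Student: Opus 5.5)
The plan is the classical functional-CLT argument for batch means (Glynn--Whitt, Damerdji). Set $\mu=s_{\ell,j}^{\star}$ and $\bar\sigma=\bar\sigma_{\ell,j}$, and take the partial-sum process $S_t=\sum_{\tau\le t}(s_\tau-\mu)$. The hypothesis that the averaged score satisfies a functional CLT with long-run variance $\bar\sigma^2$ gives $t^{-1/2}S_{\lfloor t u\rfloor}\Rightarrow \bar\sigma W(u)$ on $D[0,1]$, where $W$ is a standard Brownian motion. Convergence in distribution alone cannot give consistency when $M\to\infty$, so I will strengthen it to a strong invariance principle (ASIP). This is the role of the "standard moment and weak-dependence conditions" cited from \cite{glynn1991estimating,flegal2010batch}. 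Concretely, I will assume that on a possibly enlarged probability space
\[
S_t-\bar\sigma W(t)=\mathcal{O}(\psi(t))\quad\text{a.s.},\qquad \psi(t)=t^{1/2-\lambda}\ \text{for some }\lambda>0 .
\]
For our score process this is plausible. By the linearization in the proof of \Cref{theorem_clt_score} together with \Cref{theorem1}, $s_\tau-\mu=\nabla h(\bw^\star_{\ell,j})^\top\bP_{\ell,j}\be_\tau+\text{remainder}$, and $\be_\tau$ is driven by a martingale difference sequence with bounded $(2+\delta)$ moments. ASIPs for such martingales are known, so the ASIP is where all the dependence structure enters.

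Next I decompose each batch mean. Writing $\bar s_{m,b}-\bar s_t=b^{-1}(S_{mb}-S_{(m-1)b})-t^{-1}S_t$ and substituting the ASIP,
\[
\sqrt b\,(\bar s_{m,b}-\bar s_t)=\bar\sigma\Big(Z_m-\tfrac{1}{\sqrt{M}}\bar Z\Big)+\mathcal{O}\big(\psi(t)/\sqrt b\big),
\]
uniformly in $m$. Here $Z_m=b^{-1/2}\big(W(mb)-W((m-1)b)\big)$ are i.i.d.\ $\mathcal N(0,1)$ and $\bar Z=M^{-1/2}\sum_m Z_m$. Squaring and summing then splits $\widehat{\bar\sigma}^2_{\ell,j}$ into three pieces. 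The main piece is $\bar\sigma^2(M-1)^{-1}\sum_m(Z_m-M^{-1/2}\bar Z)^2$, a scaled $\chi^2_{M-1}/(M-1)$ variable that converges to $\bar\sigma^2$ by the weak law of large numbers as $M\to\infty$. The cross term is bounded via Cauchy--Schwarz by $\mathcal{O}\big(\psi(t)/\sqrt b\big)$ times the square root of the main piece. The last piece is $\mathcal{O}\big(\psi(t)^2/b\big)$.

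The main obstacle is making these error terms vanish. That requires $\psi(t)^2/b\to0$, i.e.\ $b\gg t^{1-2\lambda}$, which is a rate condition beyond the bare $b\to\infty$, $b/t\to0$ in the statement. I would handle this exactly as Flegal--Jones do: the cited regularity conditions include the batch-size growth compatible with the ASIP rate, for instance $b=\lfloor t^{\theta}\rfloor$ with $\theta>1-2\lambda$. I would state that this is the content of the cited conditions rather than claim it follows from the FCLT alone.

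A second subtlety is the nonstationarity of the trajectory. The bias $\mathbb E s_\tau-\mu$ decays like $\gamma_\tau$, and the last-iterate fluctuations $\be_\tau$ are of order $\gamma_\tau^{1/2}$, not $\mathcal O(1)$. Both effects must be shown to be absorbed into $\psi(t)$. I would check this using the representation in the proof of \Cref{theorem1}, namely $\sum_{\tau\le t}\be_\tau=-\bJ^{-1}\sum_{\tau\le t}\bxi_\tau+o_p(\sqrt t)$, strengthened to an almost-sure rate. The second-order Taylor remainder of $h$ contributes $\sum_\tau\|\be_\tau\|^2=\mathcal O\big(\sum_\tau\gamma_\tau\log\tau\big)=\mathcal O(t^{1-\kappa}\log t)$, which is $o(t^{1/2})$ since $\kappa>1/2$.
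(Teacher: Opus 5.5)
Your argument is correct, but it takes a different route from the paper. The paper's proof puts all of the probabilistic content into an assumed block-level law of large numbers. With $Y_{m,b}=b^{-1/2}\sum_{\tau=(m-1)b+1}^{mb}(s_\tau-s^\star_{\ell,j})$, it reads the ``standard conditions'' as $\frac1M\sum_m Y_{m,b}^2\xrightarrow{p}\bar\sigma^2_{\ell,j}$ and $\frac1M\sum_m Y_{m,b}\xrightarrow{p}0$. It then uses only the exact identity $\widehat{\bar\sigma}^2_{\ell,j}=\frac{M}{M-1}\bigl[\frac1M\sum_m Y_{m,b}^2-\bigl(\frac1M\sum_m Y_{m,b}\bigr)^2\bigr]$ together with Slutsky.

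You instead derive that block-level law from a strong invariance principle. Coupling each normalized batch sum to a Brownian increment makes the leading term exactly $\bar\sigma^2_{\ell,j}\,\chi^2_{M-1}/(M-1)$, and Cauchy--Schwarz controls the coupling error once $\psi(t)^2/b\to0$.

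The paper's version is shorter and agnostic about the dependence mechanism. However, its substantive hypothesis is just the uncentered form of the conclusion. The second condition is in fact automatic, since $\frac1M\sum_m Y_{m,b}=\sqrt{b/t}\,t^{-1/2}\sum_{\tau\le t}(s_\tau-s^\star_{\ell,j})$. Its word ``equivalently'' also hides the point you make explicitly: the FCLT with $b\to\infty$, $M\to\infty$, $b/t\to0$ alone does not give consistency, and an extra batch-growth rate is needed.

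Your version buys a concrete, checkable sufficient condition of Damerdji/Flegal--Jones type. Its cost is that the uniform coupling bound can be conservative. For the stochastic-approximation score process, the $O(\gamma_\tau)$ bias of $s_\tau$ that you mention accumulates to order $t^{1-\kappa}$ in the partial sums. One can therefore expect at best $\psi(t)\asymp t^{1-\kappa}$, i.e.\ $\lambda\le\kappa-1/2$, and your condition becomes $b\gg t^{2-2\kappa}$, essentially linear in $t$ for the experimental choice $\kappa=0.51$. Yet a direct computation, bounding the centered sum of squared batch biases by the uncentered one, shows that this bias contributes only $O(b^{1-2\kappa}/M)$ to the estimator.
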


\Cref{prop_batch_mean} provides a practical way to estimate the unknown long-run variance in the averaged-score central limit theorem. Instead of estimating the full optimizer-state covariance matrix and the derivatives appearing in the delta-method formula, the batch-means estimator uses only the observed scalar score trajectory.
In our setting, temporal dependence arises because the empirical scores are generated along a single stochastic optimizer trajectory. Under the usual functional central limit theorem, moment, weak-dependence, and batch-size conditions for batch-means consistency, the estimator is consistent. We defer the detailed regularity conditions and proof to Appendix~\ref{appendix_batch_mean}.

The conditions $b\to\infty$, $M\to\infty$, and $b/t\to0$ are asymptotic requirements for consistency of the batch-means estimator. 
With the variance estimate $\widehat{\bar{\sigma}}_{\ell,j}^{2}$, we define the test statistic
\begin{equation*}
    T_{\ell,j}(t)
    =
    \frac{
    \sqrt{t}
    \left(
        \bar{s}_{\ell,j}(t)
        -
        \Delta
    \right)
    }{
    \widehat{\bar{\sigma}}_{\ell,j}
    },
    \qquad
    \bar{s}_{\ell,j}(t)
    =
    \frac{1}{t}
    \sum_{\tau=1}^{t}
    s_{\ell,j}(\tau).
\end{equation*}
Here, $\widehat{\bar{\sigma}}_{\ell,j}$ denotes the batch-means estimate of $\bar{\sigma}_{\ell,j}$. 
Under the null hypothesis
\begin{equation}
\label{eq_hypothesis_testing}
    \mathcal{H}_{\ell,j}^{0}:~s_{\ell,j}^{\star}\geq \Delta
    \qquad
    \text{versus}
    \qquad
    \mathcal{H}_{\ell,j}^{1}:~s_{\ell,j}^{\star}<\Delta,
\end{equation}
the corresponding one-sided p-value is
\begin{equation*}
    p_{\ell,j}(t)
    =
    \Phi\left(T_{\ell,j}(t)\right),
\end{equation*}
where $\Phi(\cdot)$ denotes the cumulative distribution function of the standard normal distribution. A small p-value provides evidence against the null hypothesis, indicating that the component has insufficient population score relative to the threshold $\Delta$. 
Such components are therefore viewed as having weak statistical evidence for retention and are prioritized for pruning.

\subsection{The StatLoRA Algorithm}

We now describe the proposed StatLoRA algorithm, a statistical inference-based algorithm for LoRA rank allocation. 
StatLoRA starts from an over-parameterized LoRA model with an initial rank $r$ for each selected module and gradually removes rank-one components that have weak statistical evidence for retention during fine-tuning. The method uses the empirical score trajectory of each component to estimate its signal strength and uncertainty, and then performs hypothesis testing based on the p-values. 

\Cref{prop_batch_mean} provides a batch-means estimator for the long-run variance of the averaged empirical score. The consistency result is asymptotic and requires both the batch size $b$ and the number of batches $M$ to increase. In large-scale model fine-tuning, however, it is not practical to let $b$ and $M$ grow indefinitely. 
We therefore use a fixed batch size $b$ in finite training runs, and the number of batches is determined by the current trajectory length, $M=t/b$ for notational simplicity.
This provides a computationally efficient finite-sample approximation to the long-run variance.

\begin{algorithm}[t!]
\caption{StatLoRA}
\label{alg:statlora}
\begin{algorithmic}[1]
\State \textbf{Input:} Dataset $\mathcal{D}$; total iterations $T$; budget schedule $\{b^{(t)}\}_{t=1}^{T}$; initial rank $r$; threshold quantile $q$; batch-means batch size $b$; optimizer hyperparameters.
\State Initialize LoRA parameters $\{\bA_\ell,\bB_\ell\}_{\ell=1}^{L}$ with rank $r$.
\For{$t=1,\ldots,T$}
    \State Sample a mini-batch from $\mathcal{D}$ and perform one standard LoRA optimizer update.
    \State Compute component scores $s_{\ell,j}(t)=\|\bb_{\ell,j}(t)\|^2\|\ba_{\ell,j}(t)\|^{2}$ for all active components.
    \State Estimate averaged scores $\bar{s}_{\ell,j}(t)$ and variances $\widehat{\bar{\sigma}}_{\ell,j}^2(t)$ using batch means with $M = t/b$.
    \State Set the threshold $\Delta^{(t)}$ in \eqref{eq_hypothesis_testing} as the $q$-quantile of $\{\bar{s}_{\ell,j}(t)\}$ over active components.
    \State Compute p-values
    \[
        p_{\ell,j}(t)
        =
        \Phi\left(
        \frac{
        \sqrt{t}\big(\bar{s}_{\ell,j}(t)-\Delta^{(t)}\big)
        }{
        \widehat{\bar{\sigma}}_{\ell,j}(t)
        }
        \right).
    \]
    \State Prune components with the smallest p-values according to the budget $b^{(t)}$.
\EndFor
\State \textbf{Output:} Fine-tuned LoRA parameters with the final adaptive rank allocation.
\end{algorithmic}
\end{algorithm}

At a pruning step $t$, let $\mathcal A_t$ denote the set of active LoRA components, namely, the components that have not been pruned before iteration $t$. For each $(\ell,j)\in\mathcal A_t$, the score trajectory is partitioned into $M$ consecutive batches, each of length $b$, so that $t=Mb$. We prune a component when the null hypothesis in \eqref{eq_hypothesis_testing} is rejected, providing sufficient evidence that its limiting score is below the pre-specified threshold $\Delta$. Specifically, define the batch means by

\[
    \bar{s}_{\ell,j}^{(m)}(t)
    =
    \frac{1}{b}
    \sum_{\tau=(m-1)b+1}^{mb}
    s_{\ell,j}(\tau),
    \qquad
    m=1,\ldots,M.
\]
Then,
\begin{equation}
\label{eq_mean}
    \bar{s}_{\ell,j}(t)
    =
    \frac{1}{t}
    \sum_{\tau=1}^{t}
    s_{\ell,j}(\tau) = \frac{1}{M}
    \sum_{m=1}^{M}
    \bar{s}_{\ell,j}^{(m)}(t).
\end{equation}
Therefore, the long-run variance is estimated by
\begin{equation}
\label{eq_var_est_batch_means}
    \widehat{\bar{\sigma}}_{\ell,j}^{2}(t)
    =
    \frac{b}{M-1}
    \sum_{m=1}^{M}
    \left(
        \bar{s}_{\ell,j}^{(m)}(t)
        -
        \bar{s}_{\ell,j}(t)
    \right)^2 .
\end{equation}


The choice of the threshold $\Delta$ in \eqref{eq_hypothesis_testing} is another important practical issue. Rather than fixing an absolute threshold, we choose it adaptively from the empirical score distribution. 
Therefore, we use the notation $\Delta^{(t)}$ to denote the threshold in hypothesis testing at different time steps.
Specifically, for a pre-specified quantile level $q\in(0,1)$, we set
\[
    \Delta^{(t)}
    =
    \operatorname{Quantile}_{q}
    \left(
        \{\bar{s}_{\ell,j}(t): (\ell,j)\in\mathcal{A}_t\}
    \right),
\]
where $\mathcal{A}_t$ denotes the set of active components at iteration $t$ and $\operatorname{Quantile}_q$ denotes the empirical $q$-quantile of the finite set of averaged component scores. For example, $q=0.3$ sets $\Delta^{(t)}$ to the lower 30\% empirical quantile of the averaged component scores among active components.

Given $\bar{s}_{\ell,j}(t)$, $\widehat{\bar{\sigma}}_{\ell,j}(t)$, and $\Delta^{(t)}$, the algorithm uses the statistic:
\[
T_{\ell,j}(t)
=
\frac{
\sqrt{t}\left(\bar s_{\ell,j}(t)-\Delta^{(t)}\right)
}{
\widehat{\bar\sigma}_{\ell,j}(t)
}.
\]
The corresponding p-value is
\[
p_{\ell,j}(t)=\Phi(T_{\ell,j}(t)).
\]
Since the null hypothesis is $ \mathcal{H}^{0}_{\ell,j}:s_{\ell,j}^{\star}\geq \Delta^{(t)} $, a small p-value provides evidence that the component has insufficient population-level signal. StatLoRA therefore prunes components with the smallest p-values according to the prescribed budget schedule. The full algorithm is summarized in \Cref{alg:statlora}.

StatLoRA introduces little additional computational overhead. The component scores are computed from vector norms of existing LoRA factors, and the averaged scores, batch-means variances, test statistics, and p-values are scalar quantities updated during training. Thus, the dominant computational cost remains the standard LoRA fine-tuning procedure. In our experiments, the additional computation is small relative to the cost of model training.

\section{Experiments}

In this section, we evaluate the proposed StatLoRA method for LoRA fine-tuning across a range of language tasks, datasets, and pretrained models. The experiments are designed to validate both downstream task performance and the statistical behavior of the proposed rank allocation method. We first compare StatLoRA with existing LoRA rank allocation baselines on benchmark tasks, including natural language understanding, natural language generation, and question answering. 
We then conduct sensitivity analyses to examine the effect of the hyperparameters in StatLoRA and study the stability of rank allocation strategies. 
Finally, we provide empirical validations for the derived asymptotic normality for empirical scores.

\subsection{Experimental Setups}

We evaluate the proposed StatLoRA method on LoRA fine-tuning across three categories of language tasks: natural language understanding, natural language generation, and question answering. These experiments are designed to assess whether the proposed statistical rank allocation method is effective across different tasks, datasets, pretrained language models, and rank budgets.
The task-specific datasets, evaluation metrics, and pretrained models are described in the corresponding subsections.

We compare StatLoRA with vanilla LoRA~\cite{hu2022lora}, AdaLoRA~\cite{zhang2023adaptive}, and IGU-LoRA~\cite{cui2026igulora}. Vanilla LoRA uses a fixed rank allocation, where the same rank is assigned to each selected module. AdaLoRA adaptively allocates ranks by constructing importance scores for rank-one components based on gradient-derived sensitivity estimates. 
IGU-LoRA further builds on AdaLoRA by modifying the importance-score construction through gradient uncertainty and layer-wise information. 
These methods represent fixed-rank and adaptive-rank allocation strategies for LoRA, and therefore provide a natural comparison for the proposed hypothesis-testing-based allocation rule.

For fair comparison, we follow the training schedule used in AdaLoRA~\cite{zhang2023adaptive}, consisting of an initial warm-up phase, a rank-reduction phase, and a final fine-tuning phase after pruning. Across methods, we keep the pretrained model, dataset, LoRA target modules, total rank budget, optimizer, training steps, and evaluation protocol fixed whenever applicable. 
We use AdamW~\cite{loshchilov2018decoupled} as the optimizer throughout the experiments. 
In StatLoRA, following the theoretical setup, the step size is chosen in the polynomially decaying form $\gamma_t=\gamma_0 t^{-\kappa}$. We set $\kappa=0.51$, which satisfies the condition $\kappa\in(1/2,1)$ used in the asymptotic theory, and choose $\gamma_0$ so that the resulting learning rate scale is comparable to that used in the AdaLoRA implementation. The momentum parameters $\alpha$ and $\beta$ are selected analogously so that the effective update magnitudes are comparable across methods.
With these choices, the main difference among these methods lies in the criterion used to prune or retain rank-one components during the rank-reduction phase.

The final rank budget is set to one half of the initial total LoRA rank across all selected modules.
For StatLoRA, the method-specific hyperparameters are the number of batches $M$, the batch size $b$ used in the batch-means variance estimator, and the threshold quantile $q$.
Since $t=Mb$, we mainly choose $b$ according to the length of the warm-up and pruning phases, typically in the range of $50$ to $150$. 
The threshold quantile is set to $q=0.3$ in the main experiments, and its sensitivity is examined later.

Each experiment follows the same three-stage training procedure. During the warm-up phase, no pruning is performed, and the learning rate gradually increases and stabilizes. During the rank-reduction phase, rank-one components are pruned according to the specific allocation rule. 
After the target budget is reached, the selected ranks are fixed, and the model continues fine-tuning in the final phase with a gradually decreasing learning rate. 
Unless otherwise specified, experiments are repeated over five random seeds, and we report the mean performance together with the empirical standard deviation. All experiments are conducted on a single NVIDIA A100 GPU.

\subsection{Natural Language Understanding}

We start with the natural language understanding (NLU) tasks. NLU verifies whether a model can understand and compare natural language inputs, including sentence classification, textual entailment, semantic similarity, paraphrase detection, and linguistic acceptability. 
These tasks provide a standard testbed for examining whether a rank allocation method can maintain or improve downstream performance under a fixed rank budget.

We conduct experiments on seven tasks from the GLUE benchmark~\cite{wang2018glue}: MNLI, SST-2, CoLA, STS-B, QNLI, RTE, and MRPC. MNLI and RTE are natural language inference tasks, where the model predicts the logical relationship between a premise and a hypothesis. SST-2 is a binary sentiment classification task. CoLA evaluates linguistic acceptability by asking whether a sentence is grammatically acceptable. STS-B measures the semantic similarity between two sentences. QNLI is a question-answering-derived entailment task. MRPC is a paraphrase identification task, where the model determines whether two sentences are semantically equivalent.

We use DeBERTaV3-base~\cite{he2023debertav} as the pretrained language model and apply LoRA-based fine-tuning to the same set of modules for all compared methods. The baselines include vanilla LoRA with a fixed rank allocation, AdaLoRA, and IGU-LoRA. Following the standard GLUE evaluation protocol~\cite{wang2018glue}, we report accuracy for MNLI, SST-2, QNLI, and RTE; Matthews correlation for CoLA; Pearson correlation for STS-B; and F1 score for MRPC. All methods are evaluated on the validation set under the same training schedule and final rank budget.

For all NLU experiments, each LoRA module is initialized with rank $r=2$. For rank allocation methods, the final rank budget is chosen so that the average retained rank is one per module. Thus, the total rank budget is one-half of the initial total LoRA rank across the model. 
Under this budget, some modules may be assigned rank zero, whereas others may retain one or more rank-one components. This setting tests whether the allocation method can identify where the limited adaptation capacity should be used. 
For StatLoRA, we use the batch-means variance estimator with batch size $b=120$, $M=t/b$ batches, and set the threshold quantile to $q=0.3$. All methods are trained with AdamW and follow the same warm-up, rank-reduction, and final fine-tuning schedule.

\begin{table}[t]
\centering
\caption{Results on NLU tasks from the GLUE benchmark using DeBERTaV3-base. 
Higher values indicate better performance ($\uparrow$).
Results are reported as the mean over five random seeds, with the empirical standard deviation shown in the subscript. The best result is shown in bold, and the second-best result is underlined.}
\label{tab:glue_results}
\resizebox{\textwidth}{!}{
\begin{tabular}{lccccccc}
\toprule
Method & MNLI & SST-2 & CoLA & STS-B & QNLI & RTE & MRPC \\
\midrule
LoRA
& $90.26_{\scriptscriptstyle \pm 0.28}$
& $95.32_{\scriptscriptstyle \pm 0.50}$
& $69.10_{\scriptscriptstyle \pm 0.28}$
& $\underline{91.82}_{\scriptscriptstyle \pm 0.12}$
& $94.17_{\scriptscriptstyle \pm 0.22}$
& $87.40_{\scriptscriptstyle \pm 0.85}$
& $90.10_{\scriptscriptstyle \pm 0.51}$ \\
AdaLoRA
& $\underline{90.52}_{\scriptscriptstyle \pm 0.17}$
& $\underline{95.83}_{\scriptscriptstyle \pm 0.15}$
& $\underline{70.57}_{\scriptscriptstyle \pm 0.56}$
& $\mathbf{91.94}_{\scriptscriptstyle \pm 0.16}$
& $\mathbf{94.49}_{\scriptscriptstyle \pm 0.13}$
& $87.81_{\scriptscriptstyle \pm 0.93}$
& $\underline{90.24}_{\scriptscriptstyle \pm 0.53}$ \\
IGU-LoRA
& $90.36_{\scriptscriptstyle \pm 0.08}$
& $95.50_{\scriptscriptstyle \pm 0.26}$
& $69.98_{\scriptscriptstyle \pm 0.99}$
& $91.78_{\scriptscriptstyle \pm 0.14}$
& $94.38_{\scriptscriptstyle \pm 0.10}$
& $\mathbf{88.22}_{\scriptscriptstyle \pm 0.65}$
& $89.37_{\scriptscriptstyle \pm 0.41}$ \\
StatLoRA
& $\mathbf{90.62}_{\scriptscriptstyle \pm 0.15}$
& $\mathbf{95.93}_{\scriptscriptstyle \pm 0.10}$
& $\mathbf{70.98}_{\scriptscriptstyle \pm 0.98}$
& $91.81_{\scriptscriptstyle \pm 0.12}$
& $\underline{94.46}_{\scriptscriptstyle \pm 0.11}$
& $\underline{88.09}_{\scriptscriptstyle \pm 0.26}$
& $\mathbf{90.54}_{\scriptscriptstyle \pm 0.22}$ \\
\bottomrule
\end{tabular}
}
\end{table}

As shown in Table~\ref{tab:glue_results}, StatLoRA achieves the best performance on four of the seven reported tasks and obtains the second-best performance on two additional tasks. Compared with the fixed-rank LoRA baseline, StatLoRA improves the mean performance on all tasks under the same total rank budget. This suggests that adaptive allocation of rank-one components can improve parameter utilization when the number of adaptation components is limited.
Compared with AdaLoRA and IGU-LoRA, StatLoRA remains competitive across the GLUE tasks, with favorable results on MNLI, SST-2, CoLA, and MRPC. The gains are not uniform across all tasks, which is expected because the GLUE tasks differ in sample size, task structure, and sensitivity to the available adaptation capacity.

Overall, the NLU results show that the proposed hypothesis-testing-based allocation rule provides a reliable alternative to heuristic importance-score criteria for LoRA rank allocation. By incorporating uncertainty estimates into component selection, StatLoRA can allocate the limited rank budget in a way that is at least competitive with existing adaptive-rank methods and often improves upon the fixed-rank baseline.

\subsection{Natural Language Generation}

We next evaluate StatLoRA on natural language generation (NLG) tasks. In contrast to NLU tasks, which are typically formulated as classification or matching problems over input texts, NLG tasks require the model to generate fluent and semantically faithful output sequences. We focus on abstractive summarization, where the goal is to produce a concise summary that captures the main information in a source document. This setting provides a useful testbed for examining whether an adaptive rank allocation method can preserve generation quality under a fine-tuning budget.

We conduct experiments on two widely used summarization datasets: XSum~\cite{narayan2018don} and CNN/DailyMail~\cite{hermann2015teaching,nallapati2016abstractive}. XSum is an extreme summarization dataset in which each input document is paired with a short, one-sentence summary. It therefore requires the model to generate highly compressed summaries and is sensitive to whether the model can identify salient information. CNN/DailyMail consists of news articles paired with multi-sentence summaries, and evaluates the model's ability to generate longer summaries while preserving the main semantic content.

We use BART-Large~\cite{lewis2020bart} as the pretrained language model and apply LoRA-based fine-tuning to the same set of selected modules for all compared methods. The baselines include vanilla LoRA, AdaLoRA, and IGU-LoRA. All methods use the same pretrained model, training schedule, optimizer, target modules, and final rank budget. As in the NLU experiments, each LoRA module is initialized with rank $r=2$, and the final rank budget is set to one-half of the initial total LoRA rank. For StatLoRA, we use the batch-means variance estimator with batch size $b=100$, $M=t/b$ batches, and set the threshold quantile to $q=0.3$. Rank-one components are pruned during the rank-reduction phase according to the method-specific allocation rule.

Following standard evaluation practice for summarization, we report ROUGE-1, ROUGE-2, and ROUGE-L scores~\cite{lin2004rouge}. ROUGE-1 measures unigram overlap between the generated and reference summaries, ROUGE-2 measures bigram overlap, and ROUGE-L is based on the longest common subsequence. These metrics provide complementary measures of lexical overlap between generated summaries and reference summaries.

\begin{table}[t!]
\centering
\caption{
Results on NLG tasks using BART-Large. We report ROUGE-1, ROUGE-2, and ROUGE-L scores, where $\uparrow$ indicates that higher values are better. Results are reported as the mean over five random seeds, with the empirical standard deviation shown in the subscript. The best result is shown in bold, and the second-best result is underlined.}
\label{tab:nlg_results}
\begin{tabular}{lcccccc}
\toprule
\multirow{2}{*}{Method} 
& \multicolumn{3}{c}{XSum $(\uparrow)$} 
& \multicolumn{3}{c}{CNN/DailyMail $(\uparrow)$} \\
\cmidrule(lr){2-4} \cmidrule(lr){5-7}
& ROUGE-1 & ROUGE-2 & ROUGE-L 
& ROUGE-1 & ROUGE-2 & ROUGE-L \\
\midrule
LoRA
& $\underline{34.79}_{\scriptscriptstyle \pm 1.46}$
& $\underline{15.82}_{\scriptscriptstyle \pm 0.66}$
& $\underline{28.16}_{\scriptscriptstyle \pm 1.18}$
& $36.95_{\scriptscriptstyle \pm 1.60}$
& $17.24_{\scriptscriptstyle \pm 0.68}$
& $25.13_{\scriptscriptstyle \pm 1.11}$ \\

AdaLoRA
& $34.13_{\scriptscriptstyle \pm 3.19}$
& $15.40_{\scriptscriptstyle \pm 1.31}$
& $27.58_{\scriptscriptstyle \pm 2.49}$
& $37.77_{\scriptscriptstyle \pm 0.55}$
& $\underline{17.61}_{\scriptscriptstyle \pm 0.25}$
& $25.72_{\scriptscriptstyle \pm 0.34}$ \\

IGU-LoRA
& $33.53_{\scriptscriptstyle \pm 1.80}$
& $14.28_{\scriptscriptstyle \pm 0.69}$
& $26.74_{\scriptscriptstyle \pm 1.41}$
& $\mathbf{38.27}_{\scriptscriptstyle \pm 0.32}$
& $17.21_{\scriptscriptstyle \pm 0.18}$
& $\underline{25.83}_{\scriptscriptstyle \pm 0.14}$ \\

StatLoRA
& $\mathbf{35.03}_{\scriptscriptstyle \pm 2.50}$
& $\mathbf{15.85}_{\scriptscriptstyle \pm 1.07}$
& $\mathbf{28.33}_{\scriptscriptstyle \pm 1.96}$
& $\underline{38.15}_{\scriptscriptstyle \pm 0.80}$
& $\mathbf{17.75}_{\scriptscriptstyle \pm 0.42}$
& $\mathbf{25.94}_{\scriptscriptstyle \pm 0.52}$ \\
\bottomrule
\end{tabular}
\end{table}

As shown in Table~\ref{tab:nlg_results}, StatLoRA achieves competitive performance on both summarization datasets under the same rank budget. On XSum, StatLoRA obtains the highest mean score across all three ROUGE metrics. The improvements over vanilla LoRA are modest, especially relative to the empirical standard deviations, but the results indicate that the proposed statistical allocation rule can retain useful LoRA components in the highly compressed summarization setting of XSum.

On CNN/DailyMail, StatLoRA obtains the highest mean ROUGE-2 and ROUGE-L scores and the second-highest mean ROUGE-1 score. Compared with vanilla LoRA, StatLoRA improves all three reported metrics under the same rank budget. IGU-LoRA gives a slightly higher ROUGE-1 score, whereas StatLoRA provides the best overall performance across ROUGE-2 and ROUGE-L. This suggests that StatLoRA remains competitive on longer-document summarization tasks, where the model must generate more detailed summaries while preserving semantic content.

Overall, the NLG results show that the proposed rank allocation rule is not limited to classification-style NLU tasks. Across both summarization datasets, StatLoRA matches or improves upon existing LoRA variants under the same rank budget.
This suggests that incorporating uncertainty estimates into component selection can provide a reliable alternative to heuristic importance-score rules for adaptive LoRA rank allocation.

\subsection{Question Answering}

We further evaluate StatLoRA on question answering tasks using a larger pretrained language model. Compared with the previous experiments based on medium-scale pretrained models, this setting is closer to practical large language model adaptation, where parameter-efficient fine-tuning is important due to the computational and memory costs of updating all model parameters. Question answering tasks require the model to select correct answers from natural language questions, and may involve reading comprehension, factual knowledge, scientific reasoning, or commonsense reasoning. These tasks, therefore, provide a complementary setting for evaluating whether the proposed statistically guided rank allocation method remains effective for larger models and reasoning-oriented tasks.

We conduct experiments on four question answering datasets: BoolQ~\cite{clark2019boolq}, ARC-Easy~\cite{clark2018think}, OpenBookQA~\cite{mihaylov2018can}, and CommonsenseQA~\cite{talmor2019commonsenseqa}. BoolQ is a binary question answering dataset in which the model answers yes/no questions based on a short passage. ARC-Easy consists of grade-school science questions with multiple-choice answers and evaluates basic scientific reasoning. OpenBookQA is a multiple-choice benchmark that requires combining elementary science facts with commonsense reasoning. CommonsenseQA evaluates commonsense reasoning through multiple-choice questions constructed from commonsense knowledge relations. Together, these datasets cover reading comprehension, scientific question answering, and commonsense reasoning.

Qwen2.5-7B~\cite{yang2024qwen2} is used as the pretrained language model, and we apply LoRA-based fine-tuning to the same set of selected modules for all compared methods. The baselines include vanilla LoRA, AdaLoRA, and IGU-LoRA. For these large-model experiments, each LoRA module is initialized with rank $r=16$, and the final rank budget is set to one-half of the initial total LoRA rank. For StatLoRA, we use the batch-means variance estimator with batch size $b=100$, and set the threshold quantile to $q=0.3$. All methods are trained with the same optimizer, learning-rate schedule, warm-up phase, rank-reduction phase, and final fine-tuning phase. Following standard evaluation practice for binary and multiple-choice question answering tasks, we report accuracy as the evaluation metric.

\begin{table}[t!]
\centering
\caption{
Results on question answering tasks using Qwen2.5-7B. We report accuracy as the mean over five random seeds, with the empirical standard deviation shown in the subscript. 
Here, $\uparrow$ indicates that higher values are better. 
The best result is shown in bold, and the second-best result is underlined.}
\label{tab:qa_results}
\begin{tabular}{lcccc}
\toprule
Method & BoolQ ($\uparrow$) & ARC-Easy ($\uparrow$) & OpenBookQA ($\uparrow$) & CommonsenseQA ($\uparrow$) \\
\midrule
LoRA
& $89.25_{\scriptscriptstyle \pm 0.12}$
& $92.39_{\scriptscriptstyle \pm 0.83}$
& $91.72_{\scriptscriptstyle \pm 0.83}$
& $86.37_{\scriptscriptstyle \pm 0.53}$ \\

AdaLoRA
& $\mathbf{89.42}_{\scriptscriptstyle \pm 0.22}$
& $\underline{92.95}_{\scriptscriptstyle \pm 0.23}$
& $\underline{92.20}_{\scriptscriptstyle \pm 0.47}$
& $85.90_{\scriptscriptstyle \pm 0.13}$ \\

IGU-LoRA
& $89.31_{\scriptscriptstyle \pm 0.19}$
& $92.74_{\scriptscriptstyle \pm 0.16}$
& $90.52_{\scriptscriptstyle \pm 0.54}$
& $\underline{86.49}_{\scriptscriptstyle \pm 0.51}$ \\

StatLoRA
& $\underline{89.40}_{\scriptscriptstyle \pm 0.20}$
& $\mathbf{93.02}_{\scriptscriptstyle \pm 0.15}$
& $\mathbf{92.40}_{\scriptscriptstyle \pm 0.37}$
& $\mathbf{86.60}_{\scriptscriptstyle \pm 0.55}$ \\
\bottomrule
\end{tabular}
\end{table}

As shown in Table~\ref{tab:qa_results}, StatLoRA achieves the highest mean accuracy on three of the four question answering tasks and the second-highest mean accuracy on BoolQ. On ARC-Easy and OpenBookQA, StatLoRA improves over both the fixed-rank LoRA and the adaptive-rank methods. On CommonsenseQA, StatLoRA also attains the highest mean accuracy, suggesting that the proposed allocation rule remains competitive on commonsense reasoning tasks. On BoolQ, AdaLoRA gives the highest mean accuracy, while StatLoRA obtains a close second-best result.

Overall, these results indicate that StatLoRA remains competitive when applied to a larger pretrained language model. Compared with the medium-scale experiments above, this setting is more representative of practical large language model fine-tuning, where full-parameter updates are often computationally expensive and parameter-efficient adaptation is desirable. The empirical pattern across ARC-Easy, OpenBookQA, and CommonsenseQA suggests that uncertainty-aware component selection can identify useful adaptation directions even in larger models and reasoning-oriented tasks.

\subsection{Statistical Validation and Sensitivity Analysis}

In this subsection, we examine the statistical behavior of the proposed StatLoRA method beyond its performance over other baseline methods. We first study the sensitivity of StatLoRA to its statistical hyperparameters, including the batch size $b$ used in the batch-means variance estimator and the quantile level $q$ used to construct the threshold in hypothesis testing. We then investigate the stability of the rank allocation across repeated runs and use the resulting allocation patterns to study how different layers and modules contribute to fine-tuning on specific datasets.
Finally, we provide empirical diagnostics for the asymptotic theory of component scores in \Cref{theorem_clt_score}.

\subsubsection{Sensitivity to Hyperparameters}
\label{sec_experiment_hyperparameters}

We examine the sensitivity of StatLoRA to hyperparameters used in the statistical inference for LoRA components. In particular, StatLoRA involves two hyperparameters that directly enter the hypothesis-testing step: the batch size $b$ in the batch-means variance estimator and the quantile level $q$ used to determine the adaptive threshold.
Since $Mb$ determines the number of past score values used for estimating the long-run variance, we vary $b$ to study its effect.

The batch size $b$ controls the length of each batch used to estimate the long-run variance of the averaged score process. A smaller $b$ uses more local information but may lead to noisier variance estimates, whereas a larger $b$ produces more stable averages but reduces the number of batches.
The quantile level $q$ determines the threshold $\Delta^{(t)}$ in the hypothesis testing by taking the $q$-quantile of the averaged component scores among active components. Thus, $q$ controls the relative stringency of the component's hypothesis testing criterion.

We conduct this ablation study on SST-2 from the GLUE datasets, using the same DeBERTaV3-base model and training protocol as in the main NLU experiments. We vary $b$ and $q$ separately while keeping all other hyperparameters fixed, including the initial LoRA rank, final rank budget, optimizer, learning-rate schedule, and rank-reduction schedule. This design isolates the effect of the variance-estimation and threshold-selection parameters on the final performance of StatLoRA.

\begin{figure}[t!]
\centering

\begin{subfigure}{0.48\textwidth}
\centering
\begin{tikzpicture}
\begin{axis}[
    width=\textwidth,
    height=0.72\textwidth,
    xlabel={Batch size $b$},
    ylabel={Accuracy},
    ymin=95.5,
    ymax=96.3,
    ytick={95.6,95.8,96.0,96.2},
    symbolic x coords={40,60,80,100,120,140,160,200},
    xtick=data,
    xticklabel style={font=\small},
    yticklabel style={font=\small},
    label style={font=\small},
    grid=both,
    grid style={gray!20},
]

\addplot+[
    only marks,
    mark=*,
    mark size=2pt,
    black,
    error bars/.cd,
    y dir=both,
    y explicit,
    error bar style={gray!55, line width=0.9pt}
]
coordinates {
    (40,95.757) +- (0,0.1619275)
    (60,95.7855) +- (0,0.124917)
    (80,95.814) +- (0,0.1902538)
    (100,95.84275) +- (0,0.1489083)
    (120,95.92875) +- (0,0.0991599)
    (140,95.843) +- (0,0.1077259)
    (160,95.81425) +- (0,0.1282388)
    (200,95.90025) +- (0,0.1203774)
};

\end{axis}
\end{tikzpicture}
\caption{Sensitivity to batch size $b$.}
\label{fig:ablation_b}
\end{subfigure}
\hfill
\begin{subfigure}{0.48\textwidth}
\centering
\begin{tikzpicture}
\begin{axis}[
    width=\textwidth,
    height=0.72\textwidth,
    xlabel={Threshold quantile $q$},
    ylabel={Accuracy},
    ymin=95.5,
    ymax=96.3,
    ytick={95.6,95.8,96.0,96.2},
    symbolic x coords={10,20,30,40,50,80,90},
    xtick=data,
    xticklabel style={font=\small},
    yticklabel style={font=\small},
    label style={font=\small},
    grid=both,
    grid style={gray!20},
]

\addplot+[
    only marks,
    mark=*,
    mark size=2pt,
    black,
    error bars/.cd,
    y dir=both,
    y explicit,
    error bar style={gray!55, line width=0.9pt}
]
coordinates {
    (10,95.87125) +- (0,0.1810406)
    (20,95.8715) +- (0,0.1322132)
    (30,95.92875) +- (0,0.0991599)
    (40,95.843) +- (0,0.1443745)
    (50,95.87125) +- (0,0.2144777)
    (80,95.90025) +- (0,0.2203774)
    (90,95.843) +- (0,0.1877259)
};

\end{axis}
\end{tikzpicture}
\caption{Sensitivity to threshold quantile $q$.}
\label{fig:ablation_q}
\end{subfigure}
\caption{
Sensitivity analysis of StatLoRA on the SST-2 dataset in the NLU task. Each setting is evaluated over five independent random seeds. Points represent mean accuracy, and gray error bars represent one empirical standard deviation.}
\label{fig:ablation_stat_params}
\end{figure}

As shown in \Cref{fig:ablation_stat_params}, StatLoRA is stable over the tested ranges of $b$ and $q$. \Cref{fig:ablation_b} reports the sensitivity to the batch size $b$ in the batch-means variance estimator. When $b$ is small, the block averages may not adequately capture the temporal dependence of the score trajectory, which can lead to unstable variance estimates. When $b$ is relatively large, it can still maintain satisfactory performances. 
In this experiment, $b=120$ gives the highest mean accuracy and the smallest empirical standard deviation among the tested values, suggesting a favorable batch size given a fixed number of total iterations.

\Cref{fig:ablation_q} reports the sensitivity to the threshold quantile $q$. The performance remains stable across a range of values, indicating that StatLoRA is not highly sensitive to the precise choice of $q$. Very small or very large quantile levels lead to slightly lower mean accuracy or larger variability. This behavior is consistent with the role of $q$ in determining the adaptive threshold: a smaller value of $q$ yields a more permissive pruning criterion, whereas a larger value yields a more stringent criterion. The default value $q=0.3$ gives the highest mean accuracy with relatively small variability in this experiment.

Overall, these results suggest that StatLoRA is robust to moderate changes in the batch means and thresholding parameters. Based on this sensitivity analysis, we use $b=120$, and $q=0.3$ as the default configuration in the main NLU experiments.

\subsubsection{Stability of Rank Allocation}

Beyond performance comparisons with existing methods, we further examine the stability of the rank allocation produced by StatLoRA. Adaptive rank allocation is inherently data- and optimization-dependent, and different random seeds may lead to different selected components even under the same total rank budget. 
For a statistical inference-based pruning procedure, it is therefore important to evaluate whether the resulting allocation patterns are reproducible and stable across independent runs, rather than being driven primarily by stochastic variation from random initialization, data sampling, or optimization noise.

We conduct this analysis on the OpenBookQA dataset in the question answering task using Qwen2.5-7B as the pretrained language model. The experiment is repeated over five independent random seeds, while keeping the pretrained model, optimizer, training schedule, initial rank, final rank budget, and StatLoRA hyperparameters fixed. For each run, we record the final retained rank of each LoRA module after the rank-reduction phase. We then compute the mean and empirical standard deviation of the retained ranks across runs. The mean heatmap in \Cref{fig:rank_mean} summarizes the typical allocation pattern selected by StatLoRA, whereas the standard-deviation heatmap in \Cref{fig:rank_std} measures the stability of these allocation decisions across independent training trajectories.

\begin{figure}[t!]
\centering

\begin{subfigure}{\textwidth}
    \centering
    \includegraphics[width=\textwidth]{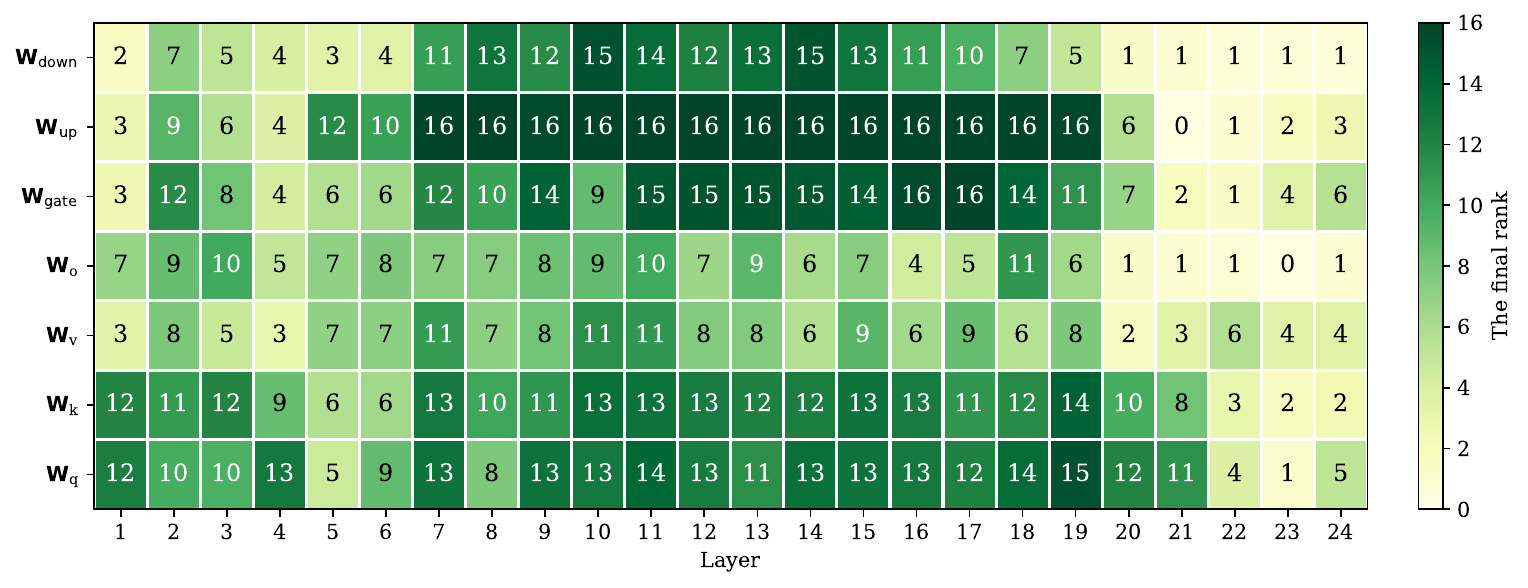}
    \caption{Mean retained rank across five independent runs.}
    \label{fig:rank_mean}
\end{subfigure}

\vspace{0.8em}

\begin{subfigure}{\textwidth}
    \centering
    \includegraphics[width=\textwidth]{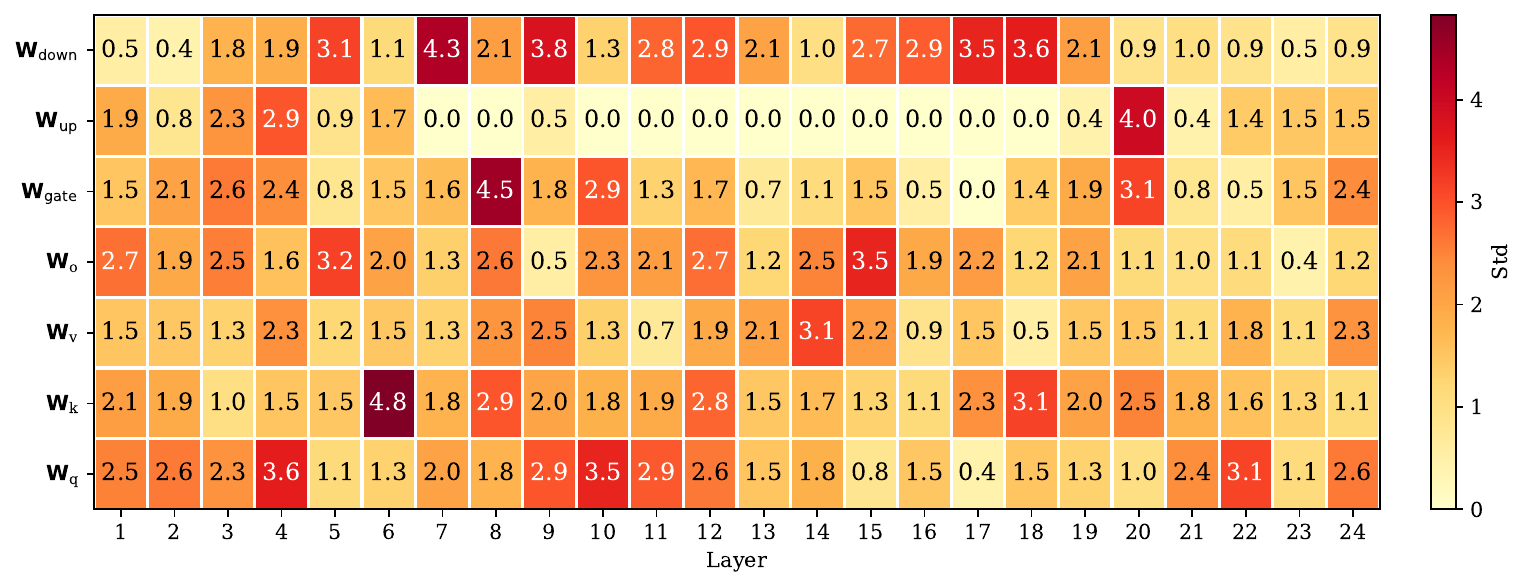}
    \caption{Standard deviation of retained rank across five independent runs.}
    \label{fig:rank_std}
\end{subfigure}

\caption{Stability analysis of the selected rank allocation produced by StatLoRA on OpenBookQA with Qwen2.5-7B. Figure (a) reports the mean retained rank of each LoRA module across five independent runs, while Figure (b) reports the corresponding standard deviation.}
\label{fig:rank_allocation_stability}
\end{figure}

This analysis complements the accuracy results in Table~\ref{tab:qa_results}. While the performance measures the final task outcome, the rank allocation heatmaps provide a more direct view of the component-selection behavior of StatLoRA. Stable allocation patterns indicate that the hypothesis-testing-based pruning rule tends to select similar modules across repeated stochastic training runs, whereas large variability would indicate sensitivity to randomness during training and pruning.

As shown in \Cref{fig:rank_allocation_stability}, the rank allocation produced by StatLoRA exhibits a structured and stable pattern. The mean-rank heatmap in \Cref{fig:rank_mean} shows that the retained ranks are not uniformly distributed across layers and module types. Larger ranks are assigned mainly to middle layers, particularly from layers 7 to 19, and later layers (layers 22-24) tend to receive less adaptation. 
The corresponding standard deviation heatmap in \Cref{fig:rank_std} shows that the variability is generally moderate relative to the scale of the retained ranks. This suggests that the proposed StatLoRA method is robust to randomness during fine-tuning.

The heatmaps also reveal module-wise heterogeneity. In particular, the MLP-related modules, especially $\bW_{\mathrm{up}}$ and $\bW_{\mathrm{gate}}$, tend to receive larger retained ranks than many of the attention projection matrices. This pattern suggests that, for OpenBookQA fine-tuning with Qwen2.5-7B, the feed-forward transformation blocks may play an important role in task-specific adaptation. This observation is consistent with the functional role of these modules: $\bW_{\mathrm{up}}$ expands the hidden representation into a higher-dimensional intermediate space, while $\bW_{\mathrm{gate}}$ controls the gated nonlinear transformation. 
We also observe that the key and value projection matrices, $\bW_{\mathrm{k}}$ and $\bW_{\mathrm{v}}$, receive relatively large rank allocations, indicating that the attention mechanism also contributes substantially to task adaptation.
We regard these patterns as descriptive evidence derived from the learned rank allocations.

The standard-deviation heatmap provides further information about the uncertainty of the allocation. Modules with consistently large retained ranks, such as several $\bW_{\mathrm{up}}$ modules in earlier and middle layers, tend to have relatively small variability across runs, indicating that these allocation decisions are stable under different random seeds. Larger standard deviations appear mainly in transition regions where the mean retained rank is neither clearly high nor clearly low. This suggests that the main source of allocation variability is concentrated around statistically ambiguous modules, where the statistical evidence for retaining rank-one components is less decisive.

Overall, the rank-allocation analysis indicates that StatLoRA produces structured and stable allocation patterns across layers and module types. These observations provide additional evidence that the proposed hypothesis-testing-based pruning strategy is not merely optimizing evaluation accuracy, but also providing interpretable component selection behavior under a fixed rank budget.

\subsubsection{P-value Diagnostics for Pruned Components}

We next examine the p-values associated with the components removed by StatLoRA. This analysis is intended as a diagnostic check of the proposed hypothesis-testing interpretation. Recall that StatLoRA is based on the null hypothesis
\[
    \mathcal{H}^{0}_{\ell,j}: s_{\ell,j}^{\star}\geq \Delta,
\]
where $s_{\ell,j}^{\star}$ denotes the population score of the $(\ell,j)$-th rank-one component and $\Delta$ is the threshold. A small one-sided p-value provides evidence against this null hypothesis, and hence indicates that the corresponding component has insufficient statistical support for being retained. 
During the rank-reduction phase, StatLoRA removes components with the smallest p-values. We collect the p-values of the removed components and visualize their empirical distribution.

\begin{figure}[t!]
    \centering
    \includegraphics[width=0.7\textwidth]{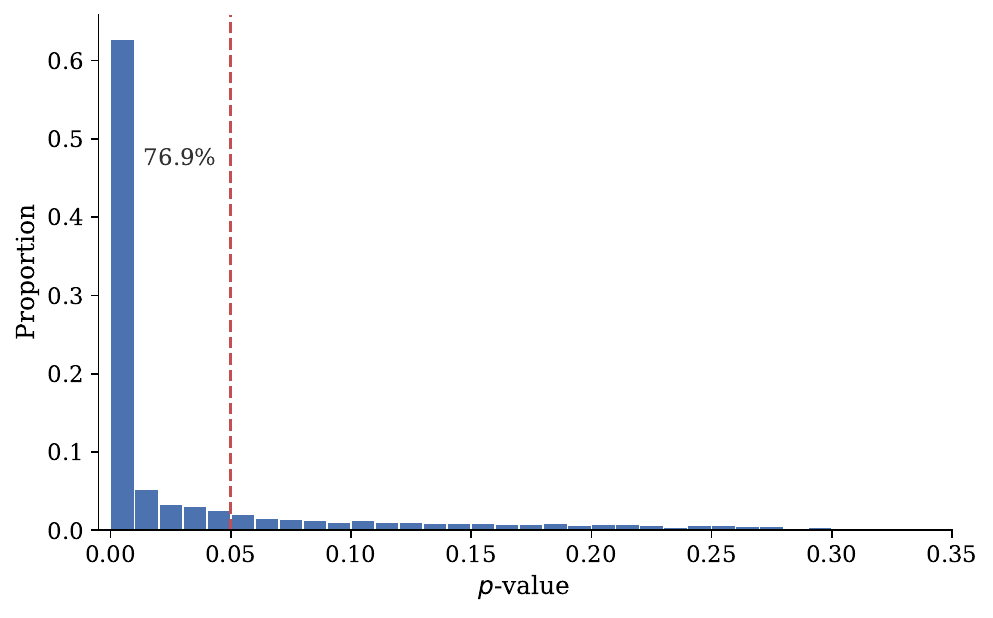}
    \caption{Distribution of p-values for pruned LoRA components. The red dashed line indicates the classical significance level $0.05$. The majority of pruned components have p-values concentrated near zero, with $76.9\%$ falling below $0.05$, indicating that most pruning decisions are associated with strong evidence against the null hypothesis.}
    \label{fig:pruned_pvalue_hist}
\end{figure}

As shown in \Cref{fig:pruned_pvalue_hist}, the p-values of the pruned components are concentrated near zero. In particular, $76.9\%$ of the removed components have p-values below $0.05$. This pattern is consistent with the intended statistical interpretation of StatLoRA: the components selected for removal are typically those for which the null hypothesis is not well supported by the empirical score trajectory.
In other words, the pruning rule is not based only on the magnitude of the empirical score, but also incorporates the estimated uncertainty of that score through the standardized test statistic.

This analysis should be interpreted as a diagnostic. The purpose of this diagnostic is to verify that the empirical pruning behavior is consistent with the proposed hypothesis testing criterion. 
The observed concentration near zero suggests that, in this experiment, most removed components have weak statistical evidence for retention relative to the threshold.

\subsubsection{Empirical Evidence for Asymptotic Normality}

We further examine the asymptotic normality underlying the hypothesis testing for component scores in StatLoRA. The central limit theorem in \Cref{theorem_clt_score} is asymptotic and is derived for each empirical component score. 
Therefore, an empirical validation that closely reflects the asymptotic theorem requires repeated training runs under the same and suitable experimental setting.

To provide such a diagnostic, we conduct an additional experiment on OpenBookQA with Qwen2.5-7B. We repeat the training procedure over 200 independent random seeds without pruning, and randomly select a LoRA component for evaluation. For each run, we compute the averaged empirical score and construct the corresponding standardized statistic. 
Since the population score $s_{\ell,j}^{\star}$ is unknown, we approximate it by the average score over the final 100 iterations. The variance is estimated using the same batch-means estimator as in StatLoRA.

















\begin{figure}[t!]
\centering
\includegraphics[width=0.55\textwidth]{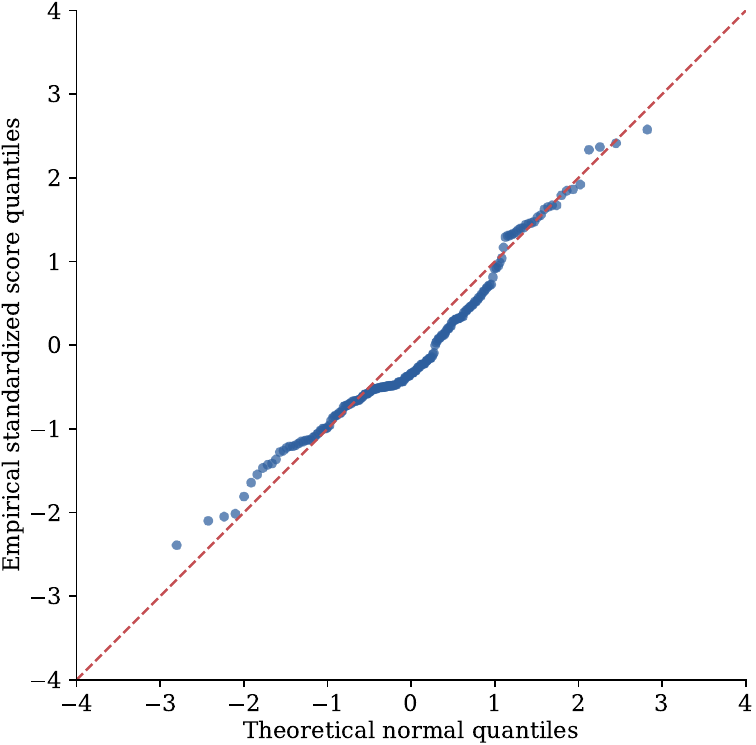}

\caption{
Q--Q plot for the standardized averaged score statistic of a representative LoRA component on OpenBookQA with Qwen2.5-7B. The training procedure is repeated over 200 independent random seeds without pruning. The empirical quantiles are plotted against the theoretical quantiles of the standard normal distribution. The approximate agreement with the reference line provides empirical evidence consistent with the asymptotic normality established in \Cref{theorem_clt_score}.
}

\label{fig:score_gaussian_assessment}
\end{figure}

As shown in \Cref{fig:score_gaussian_assessment}, the empirical distribution of the standardized statistics is broadly consistent with the asymptotic normality result in \Cref{theorem_clt_score}. The histogram is centered near zero and approximately follows the general shape of the standard normal density, while the Q-Q plot shows that most empirical quantiles lie close to the diagonal reference line.
Deviations are still visible, especially in the tails and around the central region. Such discrepancies are expected because the theoretical result is asymptotic, whereas the experiment is based on a finite training horizon of approximately 6000 iterations. 
In addition, $s_{\ell,j}^{\star}$ is approximated by a late-stage average, and the variance is estimated by batch means, both of which introduce finite-sample approximation error.

Overall, this experiment provides empirical support for the asymptotic normality of the averaged score statistic derived in \Cref{theorem_clt_score}. We interpret the result as an empirical diagnostic for the asymptotic theory.

\section{Conclusion and Discussion}

In this paper, we developed StatLoRA, a statistically guided method for LoRA rank allocation. The key idea is to formulate component selection as a hypothesis testing problem, where rank-one LoRA components are retained or pruned according to their estimated contribution and the associated uncertainty. This provides a statistical interpretation of rank allocation, in contrast to existing adaptive rank methods based primarily on carefully designed importance scores.
To support this hypothesis-testing-based procedure, we established asymptotic theory for stochastic optimization algorithms commonly used in deep learning, with particular emphasis on AdamW. Within the LoRA fine-tuning setting, we further derived central limit theory for empirical LoRA scores using the delta method. To make the inference procedure practically implementable, we used a batch-means estimator to estimate the long-run variance of the averaged score statistic and established its consistency under suitable conditions. These theoretical ingredients lead to a practical p-value-based pruning rule, which is integrated into the StatLoRA fine-tuning procedure.
Comprehensive experiments across natural language understanding, natural language generation, and question answering tasks demonstrate that StatLoRA is competitive with, and often improves upon, existing fixed-rank and adaptive-rank LoRA methods under matched rank budgets. Beyond performance comparisons, we conducted sensitivity analyses for the statistical hyperparameters, examined the stability of the selected rank allocation across repeated runs, and provided empirical diagnostics for the asymptotic normality of component scores. These results support the practical usefulness of the proposed uncertainty-aware and hypothesis-testing-based rank allocation method.

The proposed framework also has several limitations. The p-values used by StatLoRA should be interpreted as evidence of pruning for each component, rather than as formal guarantees for simultaneous multiple testing. 
In addition, the testing procedure relies on asymptotic normality and finite-sample variance estimation from stochastic training trajectories. 
Although our empirical results suggest that the resulting method is stable and practically useful, the calibration of the tests may depend on the training dynamics, batch-means estimation, and finite-sample behavior of large-scale fine-tuning. 
Thus, StatLoRA should be viewed as a statistically guided rank allocation method.

Several directions remain for future work. First, it would be natural to incorporate multiple-testing correction or false discovery rate control into the component selection rule, especially when the number of LoRA components is large. 
The current framework performs hypothesis testing for each individual component, whereas a joint or multiple testing formulation may provide a more principled approach to rank allocation by accounting for dependence and interactions among components.
Second, finite-sample calibration and inference methods would be valuable. Possible directions include self-normalized statistics, bootstrap calibration, and refined concentration bounds. Such results would strengthen the statistical guarantees beyond the asymptotic regime considered in this paper. 
Third, future work may consider adaptive procedures that both prune and reallocate components during training, allowing the rank budget to move more flexibly across layers and modules. 

\newpage 
\bibliographystyle{plain}
\bibliography{main}

\newpage
\appendix

\section{Proofs of Main Theorems}

\subsection{Preparation Lemmas}
We first present several auxiliary lemmas that are used in the proofs of the main theorems. Their proofs are deferred to Appendix~\ref{appendix_proof_technical_lemmas}.
\begin{lemma}[Lemma~B.3 in \cite{na2025statistical}]\label{lemma1}

Let $\alpha_k = \iota_1 k^{-b_1}$ and $\beta_k = \iota_2 k^{-b_2}$ for some $\iota_1, \iota_2 >0$ and $b_1, b_2 > 0$. 
Define $\chi = 0$ if $0 < b_2 < 1$ and $\chi = -b_1/\iota_2$ if $b_2 = 1$. If $\sum_{t=1}^{l} a_t + \chi> 0$, then
\begin{equation*}
\lim_{k \to \infty} \frac{1}{\alpha_k} \sum_{i=1}^{k} \prod_{j=i+1}^{k} \prod_{t = 1}^{l} \left( 1 - a_{t} \beta_{j} \right) \beta_{i} \alpha_{i} =  \frac{1}{\sum_{t=1}^{l} a_t + \chi}.
\end{equation*}		
Moreover, for any $b \in \mathbb{R}$ and any sequence $e_k$ satisfying $\lim\limits_{k\to\infty}e_{k} \to 0$, we have
\begin{equation*}
\lim_{k \to \infty} \left\{\frac{1}{\alpha_k} \sum_{i=1}^{k} \prod_{j=i+1}^{k} \prod_{t = 1}^{l} \left( 1 - a_{t} \beta_{j} \right) \beta_{i} \alpha_{i} e_{i} + b \prod_{j=1}^{k} \prod_{t = 1}^{l} \left( 1 - a_{t} \beta_{j} \right) \right\} = 0.
\end{equation*}
\end{lemma}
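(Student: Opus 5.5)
The plan is to turn both limits into statements about a scalar linear recursion and then apply a Chung-type argument for such recursions. Write $A:=\sum_{t=1}^{l}a_t$ and $\Pi_j:=\prod_{t=1}^{l}(1-a_t\beta_j)$. Since $\beta_j\to0$, there is $j_0$ such that $0<\Pi_j\le 1$ for $j\ge j_0$. For those $j$ we have $\Pi_j = 1-A\beta_j+\mathcal O(\beta_j^2)$. The finitely many factors with $j<j_0$ may have arbitrary sign, but they enter only through a bounded multiplicative constant, which the final limit kills. I take $b_2\le 1$ throughout (the definition of $\chi$ covers only $b_2<1$ and $b_2=1$), so that $\sum_j\beta_j=\infty$.

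\textbf{Reduction to a recursion.} Set $S_k:=\sum_{i=1}^{k}\prod_{j=i+1}^{k}\Pi_j\,\beta_i\alpha_i$. It satisfies
\[
S_k=\Pi_k S_{k-1}+\beta_k\alpha_k .
\]
Dividing by $\alpha_k$ and writing $u_k:=S_k/\alpha_k$ gives
\[
u_k=\frac{\alpha_{k-1}}{\alpha_k}\,\Pi_k\,u_{k-1}+\beta_k .
\]
Now $\alpha_{k-1}/\alpha_k=(1-1/k)^{-b_1}=1+b_1/k+\mathcal O(k^{-2})$, and the two cases separate as follows.
\begin{itemize}
\item If $b_2<1$, then $b_1/k=o(\beta_k)$.
\item If $b_2=1$, then $b_1/k=(b_1/\iota_2)\beta_k$ exactly, so $b_1/k=-\chi\beta_k$.
\end{itemize}
In both cases the recursion becomes
\[
u_k=\bigl(1-(A+\chi)\beta_k+\epsilon_k\beta_k\bigr)u_{k-1}+\beta_k ,\qquad \epsilon_k\to0 .
\]
Here the $\mathcal O(\beta_k^2)$ and $\mathcal O(k^{-2})$ terms have been absorbed into $\epsilon_k\beta_k$, which works because $k^{-2}=o(\beta_k)$ when $b_2\le 1$.

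\textbf{Convergence of the recursion.} Let $c:=A+\chi>0$ and $w_k:=u_k-1/c$. Then
\[
w_k=(1-c\beta_k+\epsilon_k\beta_k)\,w_{k-1}+\frac{\epsilon_k}{c}\,\beta_k .
\]
For $k$ large, $|1-c\beta_k+\epsilon_k\beta_k|\le 1-\tfrac{c}{2}\beta_k$. The plan is to show $w_k\to0$ by the standard argument. Fix $\eta>0$ and choose $k_1$ with $|\epsilon_k|/c<\eta c/4$ for $k\ge k_1$. Then
\[
|w_k|\le(1-\tfrac c2\beta_k)|w_{k-1}|+\tfrac{\eta c}{4}\beta_k ,
\]
which implies $\limsup_k|w_k|\le\eta/2$. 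The homogeneous part decays like $\exp(-\tfrac c2\sum_{j\le k}\beta_j)\to0$, and the forcing contributes at most $\eta/2$. Since $\eta$ is arbitrary, $w_k\to0$, which proves the first limit.

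\textbf{Second limit.} The same reduction applies to $S_k^{e}:=\sum_i\prod_{j=i+1}^k\Pi_j\beta_i\alpha_ie_i$. Its normalized version satisfies the same recursion with forcing $\beta_k e_k$ instead of $\beta_k$. Since $e_k\to0$, the argument above with target $0$ gives $S_k^e/\alpha_k\to0$. For the extra term, split
\[
\Bigl|b\prod_{j=1}^k\Pi_j\Bigr|\le |b|\Bigl|\prod_{j<j_0}\Pi_j\Bigr|\,\exp\Bigl(-\tfrac A2\sum_{j_0\le j\le k}\beta_j\Bigr).
\]
This tends to $0$ because $\sum_j\beta_j=\infty$. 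For $b_2=1$ with $A\le 0$ one uses the decay already built into the combined recursion instead. The cleanest route is to include this term as the initial condition of the recursion for $u_k$, since it is precisely the homogeneous solution started from $b$.

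\textbf{Expected obstacle.} The main difficulty is the bookkeeping in the case $b_2=1$. There the ratio $\alpha_{k-1}/\alpha_k$ contributes at exactly the same order as $\beta_k$, which produces the shift $\chi$. One must check carefully that every other error term is genuinely $o(\beta_k)$. One must also check that the early factors with $j<j_0$, where $\Pi_j$ may be nonpositive, affect only a bounded prefactor.
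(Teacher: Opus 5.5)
The paper does not prove this lemma: it is imported from \cite{na2025statistical}, and Appendix~\ref{appendix_proof_technical_lemmas} only proves Lemmas~\ref{lemma2} and~\ref{lemma3}. There is therefore no in-paper argument to compare with. Judged on its own, your proof is correct.

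The identity $u_k=\frac{\alpha_{k-1}}{\alpha_k}\Pi_k u_{k-1}+\beta_k$ (with $u_1=\beta_1$) is exact. The expansion $\frac{\alpha_{k-1}}{\alpha_k}\Pi_k=1-(A+\chi)\beta_k+o(\beta_k)$ is right in both regimes. The contraction estimate with $\sum_k\beta_k=\infty$ then gives $u_k\to 1/(A+\chi)$. With forcing $\beta_k e_k$ in place of $\beta_k$, the same estimate handles the weighted-sum part of the second limit.

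Working with the recursion also removes your worry about early indices. Whatever happens for small $j$, including $\Pi_j\le 0$, is absorbed into the finite value $u_{k_1}$.

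An alternative route is Stolz--Ces\`aro. Write $P_k=\prod_{j\le k}\Pi_j$ and $S_k/\alpha_k=\bigl(\sum_{i\le k}\beta_i\alpha_i/P_i\bigr)\big/(\alpha_k/P_k)$. The increment ratio is $\beta_k/\bigl(1-\frac{\alpha_{k-1}}{\alpha_k}\Pi_k\bigr)\to 1/(A+\chi)$. This rests on the same one-step estimate as yours, but it needs $P_k\neq 0$ and a monotonically divergent denominator. Your contraction argument handles the $e_k$-weighted sum and the early indices without extra case analysis.

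A few small points to tidy up.

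\begin{enumerate}
\item $A>0$ is automatic. Since $\chi\le 0$ in both regimes, $A+\chi>0$ forces $A>-\chi\ge 0$. State this at the outset. It is what gives $\Pi_j\le 1$ for large $j$ (positivity alone follows from $\beta_j\to 0$). It also makes your case ``$b_2=1$ with $A\le 0$'' vacuous.
\item For the bound $|b\prod_{j\le k}\Pi_j|\le |b|\,|\prod_{j<j_0}\Pi_j|\exp(-\frac{A}{2}\sum_{j_0\le j\le k}\beta_j)$, choose $j_0$ so that $\Pi_j\le 1-\frac{A}{2}\beta_j$ for $j\ge j_0$. Requiring only $\Pi_j\le 1$ is not enough.
\item $b\prod_{j\le k}\Pi_j$ is not the homogeneous solution of the $u$-recursion. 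That solution is proportional to $\alpha_k^{-1}\prod_{j\le k}\Pi_j$. The discrepancy is harmless, and the correct version is stronger. The same contraction gives $\alpha_k^{-1}\prod_{j\le k}\Pi_j\to 0$, which is where $A+\chi>0$ (rather than just $A>0$) is used. Since $\alpha_k$ is bounded, this implies the stated claim.
\end{enumerate}
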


\begin{lemma}[Stability estimate for the transition matrices]
\label{lemma2}
Let $\bJ\in\mathbb{R}^{d\times d}$ be a Hurwitz matrix, and let
\[
\Phi_{t,k}:=\prod_{j=k}^t (\bI+\gamma_j \bJ),
\qquad 1\le k\le t,
\]
with the convention $\Phi_{t,t+1}=\bI$. Assume that $\gamma_j>0$ and $\gamma_j\to 0$.
Then there exist a vector norm $\|\cdot\|_*$ on $\mathbb{R}^d$, the associated induced matrix norm still denoted by $\|\cdot\|_*$, a constant $c>0$, and $N\in\mathbb{N}$ such that for all $t \ge k\ge N$,
\begin{equation*}
\|\bI+\gamma_k \bJ\|_* \le 1-c\gamma_k,
\end{equation*}
and consequently
\begin{equation*}
\|\Phi_{t,k}\|_*
\leq
\prod_{j=k}^t (1-c\gamma_j).
\end{equation*}
Moreover, by norm equivalence, there exists a constant $C_0>0$ such that
\begin{equation}
\label{eq1}
\|\Phi_{t,k}\|_2
\le
C_0 \prod_{j=k}^t (1-c\gamma_j),
\qquad t \ge k\ge N.
\end{equation}
\end{lemma}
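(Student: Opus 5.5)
The plan is to build a norm adapted to $\bJ$ through a Lyapunov equation. Since $\bJ$ is Hurwitz, the equation
\[
\bJ^\top\bP+\bP\bJ=-\bI
\]
has a unique symmetric positive definite solution $\bP=\int_0^\infty e^{s\bJ^\top}e^{s\bJ}\,ds$. We set $\|\bx\|_*:=(\bx^\top\bP\bx)^{1/2}$ and use the matrix norm induced by it. Let $\lambda_{\min},\lambda_{\max}>0$ be the extreme eigenvalues of $\bP$. Then
\[
\sqrt{\lambda_{\min}}\,\|\bx\|_2\le\|\bx\|_*\le\sqrt{\lambda_{\max}}\,\|\bx\|_2 .
\]
This is the only genuinely structural step. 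An alternative is a Jordan-form rescaling, which yields a norm with $\|\bI+\gamma\bJ\|_*\le 1-c\gamma$, but the quadratic-form route is cleaner.

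\textbf{One-step contraction.} For any $\bx$ we expand
\[
\|(\bI+\gamma\bJ)\bx\|_*^2=\bx^\top\bP\bx+\gamma\,\bx^\top(\bJ^\top\bP+\bP\bJ)\bx+\gamma^2\bx^\top\bJ^\top\bP\bJ\bx .
\]
Using $\bx^\top\bx\ge\|\bx\|_*^2/\lambda_{\max}$ and $\bx^\top\bJ^\top\bP\bJ\bx\le(\|\bP\bJ^{\top}\|\ldots)$, more simply $\le K\|\bx\|_*^2$ with $K:=\lambda_{\max}\|\bJ\|_2^2/\lambda_{\min}$, this is at most
\[
\bigl(1-\gamma/\lambda_{\max}+K\gamma^2\bigr)\|\bx\|_*^2 .
\]
Since $\gamma_j\to0$, there is $N$ such that $K\gamma_k\le 1/(2\lambda_{\max})$ and $\gamma_k<\lambda_{\max}$ for all $k\ge N$. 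Then the factor is at most $1-\gamma_k/(2\lambda_{\max})$. Using $\sqrt{1-u}\le 1-u/2$, we get
\[
\|\bI+\gamma_k\bJ\|_*\le 1-c\gamma_k,\qquad c=\frac{1}{4\lambda_{\max}} .
\]
Shrinking $c$ if needed guarantees $1-c\gamma_k\in(0,1)$.

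\textbf{Product bound.} The norm $\|\cdot\|_*$ is induced, hence submultiplicative. Therefore
\[
\|\Phi_{t,k}\|_*\le\prod_{j=k}^t\|\bI+\gamma_j\bJ\|_*\le\prod_{j=k}^t(1-c\gamma_j)
\]
for all $t\ge k\ge N$.

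\textbf{Transfer to the spectral norm.} From the norm sandwich, for any matrix $\bM$ we have $\|\bM\|_2\le\sqrt{\lambda_{\max}/\lambda_{\min}}\,\|\bM\|_*$. This gives \eqref{eq1} with $C_0=\sqrt{\lambda_{\max}/\lambda_{\min}}$.

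No step is a serious obstacle. The only points needing care are (i) existence and positive definiteness of $\bP$, which is the standard Lyapunov theorem for Hurwitz matrices, and (ii) absorbing the $\gamma^2$ term, which relies on $\gamma_j\to0$ and explains why the bound only holds beyond some index $N$.
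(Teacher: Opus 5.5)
Your proof is correct and takes essentially the same route as the paper. Both use a Lyapunov matrix $\bP$ with $\bJ^\top\bP+\bP\bJ=-\bI$, the induced norm $\|\cdot\|_*$, expand $(\bI+\gamma\bJ)^\top\bP(\bI+\gamma\bJ)$, absorb the $\gamma^2$ term once $\gamma_k$ is small, and finish with submultiplicativity and norm equivalence. The only difference is that you make the constants explicit ($c=1/(4\lambda_{\max})$ and $C_0=\sqrt{\lambda_{\max}/\lambda_{\min}}$), whereas the paper works with abstract constants $a,b$ satisfying $\bI\succeq a\bP$ and $\bJ^\top\bP\bJ\preceq b\bP$.
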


\begin{lemma}
\label{lemma3}
Let $\bJ\in\mathbb{R}^{d\times d}$ be Hurwitz, and let $\bQ\in\mathbb{R}^{d\times d}$ be a symmetric positive semidefinite matrix. Let
\[
\gamma_t=\gamma_0 t^{-\kappa},
\qquad
\gamma_0>0,\quad \kappa\in(1/2,1),
\]
and define
\[
\Phi_{t,k}:=\prod_{j=k}^{t}(\bI+\gamma_j \bJ),
\qquad 1\le k\le t,
\]
with the convention $\Phi_{t,t+1}=\bI$.
Define
\begin{equation*}
\bU_t
:=
\sum_{k=1}^{t}
\frac{\gamma_k^2}{\gamma_t}\,
\Phi_{t,k+1}\bQ\Phi_{t,k+1}^\top.
\end{equation*}
Then $\bU_t$ converges to a matrix $\bSig$, where $\bSig$ is the unique positive semidefinite solution to the Lyapunov equation
\begin{equation*}
\bJ\bSig+\bSig \bJ^\top + \bQ = \bzero.
\end{equation*}
\end{lemma}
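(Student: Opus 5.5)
We establish Lemma~\ref{lemma3} by deriving a one-step recursion for $\bU_t$ and showing it is a perturbed discrete Lyapunov iteration whose fixed point is $\bSig$. Since $\Phi_{t+1,k+1}=(\bI+\gamma_{t+1}\bJ)\Phi_{t,k+1}$, splitting off the $k=t+1$ term gives
\[
\gamma_{t+1}\bU_{t+1}
=(\bI+\gamma_{t+1}\bJ)\,\gamma_t\bU_t\,(\bI+\gamma_{t+1}\bJ)^\top+\gamma_{t+1}^2\bQ .
\]
Dividing by $\gamma_{t+1}$ and writing $\gamma_t/\gamma_{t+1}=1+\mathcal{O}(t^{-1})=1+o(\gamma_{t+1})$ (this uses $\kappa<1$, so $t^{-1}=o(t^{-\kappa})$), we obtain
\[
\bU_{t+1}=\bU_t+\gamma_{t+1}\bigl(\bJ\bU_t+\bU_t\bJ^\top+\bQ\bigr)+\gamma_{t+1}\bE_t ,
\]
where $\|\bE_t\|\le \epsilon_t(1+\|\bU_t\|)$ with $\epsilon_t\to 0$; the error collects the $\gamma_{t+1}^2\bJ\bU_t\bJ^\top$ term and the $(\gamma_t/\gamma_{t+1}-1)$ corrections.

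Existence and uniqueness of $\bSig$ are standard. Since $\bJ$ is Hurwitz, the linear operator $\mathcal{T}(\bm X):=\bJ\bm X+\bm X\bJ^\top$ on symmetric matrices has eigenvalues $\lambda_i+\lambda_j$, all with negative real part, so it is invertible and itself Hurwitz. Hence $\bSig=-\mathcal{T}^{-1}(\bQ)=\int_0^\infty e^{s\bJ}\bQ e^{s\bJ^\top}\,ds\succeq\bzero$ is the unique solution. Setting $\bD_t:=\bU_t-\bSig$ and using $\mathcal{T}(\bSig)+\bQ=\bzero$, the recursion becomes
\[
\bD_{t+1}=(\mathcal{I}+\gamma_{t+1}\mathcal{T})\bD_t+\gamma_{t+1}\bE_t .
\]

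We then apply Lemma~\ref{lemma2} to the Hurwitz operator $\mathcal{T}$, viewed as a matrix acting on $\mathbb{R}^{d\times d}$. This yields a norm $\|\cdot\|_*$ and a constant $c>0$ with $\|\mathcal{I}+\gamma\mathcal{T}\|_*\le 1-c\gamma$ for small $\gamma$. Boundedness of $\bU_t$ comes first: from $\|\bD_{t+1}\|_*\le(1-c\gamma_{t+1}+C\epsilon_t\gamma_{t+1})\|\bD_t\|_*+C\gamma_{t+1}\epsilon_t$, once $C\epsilon_t<c/2$ the sequence $\|\bD_t\|_*$ is eventually bounded by $\max\{\|\bD_N\|_*,\,\mathrm{const}\}$. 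With $\bU_t$ bounded, $\|\bE_t\|\le \epsilon_t'\to 0$, and unrolling gives
\[
\|\bD_t\|_*\le \prod_{j=N+1}^{t}(1-c\gamma_j)\,\|\bD_N\|_*+\sum_{k=N}^{t-1}\prod_{j=k+2}^{t}(1-c\gamma_j)\,\gamma_{k+1}\epsilon'_k .
\]
The first term vanishes because $\sum\gamma_j=\infty$ for $\kappa<1$. The second term tends to zero by the second part of Lemma~\ref{lemma1}, or equivalently by a direct Toeplitz-type argument: split the sum at an index $K$ beyond which $\epsilon'_k<\eta$, bound the tail by $\eta/c$, and note that the head is killed by the vanishing product. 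Hence $\bU_t\to\bSig$.

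The main obstacle is the careful bookkeeping of the step-size ratio $\gamma_t/\gamma_{t+1}$ and of the early indices $k<N$, where Lemma~\ref{lemma2}'s contraction is unavailable. The ratio is handled by the $\kappa<1$ estimate above. The early indices are handled by starting the recursion at $N$ and absorbing the finitely many initial terms into $\bD_N$, which the contraction then eliminates.
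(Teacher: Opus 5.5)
Your proof is correct and follows essentially the same route as the paper's: the one-step recursion $\bU_{t+1}=\frac{\gamma_t}{\gamma_{t+1}}(\bI+\gamma_{t+1}\bJ)\bU_t(\bI+\gamma_{t+1}\bJ)^\top+\gamma_{t+1}\bQ$, the expansion $\gamma_t/\gamma_{t+1}=1+o(\gamma_t)$ from $\kappa<1$, subtraction of the Lyapunov solution $\bSig$, a contraction supplied by Lemma~\ref{lemma2}, and unrolling with a Toeplitz-type bound on the accumulated remainders. The differences are only in execution. You apply Lemma~\ref{lemma2} to the Kronecker-sum operator $\mathcal{T}(\bm{X})=\bJ\bm{X}+\bm{X}\bJ^\top$ rather than keeping the congruence form with $\|\Phi_{t,k}\|^2$ bounds, and you get boundedness of $\bU_t$ from the perturbed recursion itself rather than from the paper's direct estimate via Lemma~\ref{lemma1}. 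These choices make your handling of the early indices $k<N$ and of the $\gamma_t$ versus $\gamma_{t+1}$ shift somewhat tidier than the paper's.
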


\subsection{Main Lemmas}
We present several key lemmas that are used in the proof of the asymptotic theory for the general stochastic recursion. Their detailed proofs are deferred to \Cref{appendix_proof_main_lemmas}.
\begin{lemma}[Martingale central limit theory for the weighted noise term]
\label{lemma4}
Let
\[
\mathcal J_t
:=
\sum_{k=1}^t
\Phi_{t,k+1}\frac{\gamma_k}{\sqrt{\gamma_t}}\bxi_k,
\qquad
\Phi_{t,k}:=\prod_{j=k}^t (\bI+\gamma_j \bJ).
\]
Under \Cref{assumption1}, we have
\[
\mathcal J_t \xrightarrow{d} \mathcal N(\bzero,\bSig),
\]
where $\bSig$ is characterized by the following Lyapunov equation:
\begin{equation*}
\bJ\bSig + \bSig \bJ^\top + \bQ = \bzero.
\end{equation*}
\end{lemma}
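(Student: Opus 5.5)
We will prove Lemma~\ref{lemma4} by a martingale central limit theorem for triangular arrays, combined with Lemma~\ref{lemma3} for the limiting covariance. Write
\[
\mathcal J_t=\sum_{k=1}^t \bm{\eta}_{t,k},\qquad \bm{\eta}_{t,k}:=\frac{\gamma_k}{\sqrt{\gamma_t}}\Phi_{t,k+1}\bxi_k .
\]
For fixed $t$, the sequence $\{\bm{\eta}_{t,k}\}_{k\le t}$ is a martingale-difference array with respect to $\{\mathcal F_{k+1}\}$, since $\Phi_{t,k+1}$ is deterministic and $\mathbb E[\bxi_k\mid\mathcal F_k]=\bzero$ by (A3). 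By the Cramér--Wold device it suffices to treat $\bu^\top\mathcal J_t$ for a fixed $\bu\in\mathbb R^d$. We then verify the two conditions of the martingale CLT (e.g.\ Hall--Heyde, Corollary 3.1): convergence in probability of the conditional variance, and a conditional Lindeberg condition.

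\textbf{Conditional variance.} We write
\[
\sum_{k}\mathbb E[\bm{\eta}_{t,k}\bm{\eta}_{t,k}^\top\mid\mathcal F_k]=\bU_t+\sum_k \frac{\gamma_k^2}{\gamma_t}\Phi_{t,k+1}(\bQ_k-\bQ)\Phi_{t,k+1}^\top,
\qquad \bQ_k:=\mathbb E[\bxi_k\bxi_k^\top\mid\mathcal F_k].
\]
By Lemma~\ref{lemma3}, $\bU_t\to\bSig$, which is the solution of the Lyapunov equation. For the error term, fix $\epsilon>0$. By (A4) there is an a.s.\ finite random index $K$ such that $\|\bQ_k-\bQ\|\le\epsilon$ for $k\ge K$. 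Using Lemma~\ref{lemma2}, the tail $k\ge K$ is bounded by
\[
\epsilon\, C_0^2\sum_k \frac{\gamma_k^2}{\gamma_t}\prod_{j=k+1}^t(1-c\gamma_j)^2 ,
\]
and Lemma~\ref{lemma1} (with $\alpha=\beta=\gamma$, $l=2$, $\chi=0$) shows this sum is $\mathcal O(1)$, so the tail is $\mathcal O(\epsilon)$. The finitely many terms with $k<K$ (together with those with $k<N$, handled by the uniform bound $\|\Phi_{t,k+1}\|\le C\|\Phi_{t,N}\|$) are bounded by $\gamma_t^{-1}\prod_{j\le t}(1-c\gamma_j)^2$ times an a.s.\ finite constant. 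This decays like $t^{\kappa}\exp(-c' t^{1-\kappa})\to0$. Alternatively, one can invoke the second statement of Lemma~\ref{lemma1} with $e_k=\|\bQ_k-\bQ\|\to0$ a.s. Hence the conditional variance converges to $\bSig$ a.s., and therefore in probability.

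\textbf{Lindeberg condition.} We use a Lyapunov-type bound with (A3). Since $\mathbf 1\{\|\bm{\eta}\|>\epsilon\}\le\|\bm{\eta}\|^\delta/\epsilon^\delta$,
\[
\sum_k\mathbb E[\|\bm{\eta}_{t,k}\|^2\mathbf 1\{\|\bm{\eta}_{t,k}\|>\epsilon\}\mid\mathcal F_k]\le \epsilon^{-\delta}\sup_k\mathbb E[\|\bxi_k\|^{2+\delta}\mid\mathcal F_k]\sum_k\Big(\frac{\gamma_k}{\sqrt{\gamma_t}}\Big)^{2+\delta}\|\Phi_{t,k+1}\|^{2+\delta}.
\]
Writing $(\gamma_k^2/\gamma_t)^{1+\delta/2}=\gamma_k^2\gamma_t^{-1}\cdot(\gamma_k^2/\gamma_t)^{\delta/2}$ and applying Lemma~\ref{lemma2}, the sum becomes $\mathcal O(\gamma_t^{\delta/2})$. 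This again uses Lemma~\ref{lemma1} with the weighted rate $\gamma_k^{1+\delta/2}$, together with the fact that $\gamma_k/\gamma_t$ is bounded on the effective range, the exponential factor killing the rest. The bound therefore tends to $0$ a.s. The martingale CLT then yields $\bu^\top\mathcal J_t\xrightarrow{d}\mathcal N(0,\bu^\top\bSig\bu)$, which proves the claim.

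The main obstacle is the uniform control of the weighted sums $\sum_k \gamma_k^{p}\gamma_t^{-q}\prod_{j>k}(1-c\gamma_j)^2$. The difficulty is that the products involve only the contraction bound rather than scalar factors of exactly the form in Lemma~\ref{lemma1}. The plan is to reduce to scalar sequences via Lemma~\ref{lemma2}, absorb the initial segment $k<N$ into a constant, and apply Lemma~\ref{lemma1} with $a_1=a_2=c$. One must also handle the fact that the a.s.\ convergence in (A4) is non-uniform, which is why the random cutoff $K$ is needed. In addition, the mismatch $\Phi_{t,k+1}$ versus the convention in Lemma~\ref{lemma3} requires care with indices.
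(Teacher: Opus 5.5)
Your proposal is correct and follows the paper's proof essentially step for step: Cram\'er--Wold, a martingale CLT for triangular arrays with the conditional variance driven to $\bu^\top\bSig\bu$ via Lemma~\ref{lemma3} plus Lemmas~\ref{lemma1}--\ref{lemma2}, and a $(2+\delta)$-moment Lyapunov bound of order $\mathcal{O}(\gamma_t^{\delta/2})$ for the Lindeberg term. Your handling of the random index $K$ from (A4) and of the initial segment $k<N$, where Lemma~\ref{lemma2} does not apply, fills in details the paper passes over. The index ``mismatch'' you worry about does not arise, because $\bU_t$ in Lemma~\ref{lemma3} is defined with the same $\Phi_{t,k+1}$.
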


\begin{lemma}
\label{lemma5}
Define
\begin{equation*}
\be_{t}
=
\bz_t - \bz^{\star},
\end{equation*}
where $\{\bz_t\}$ is the stochastic approximation sequence. Under \Cref{assumption1}, we have
\begin{equation*}
    \left\|\sum_{k=1}^{t} \Phi_{t,k+1} \gamma_k \left(\rho(\be_k)+ \br_k\right) \right\| = o_{p}\left(\gamma_t^{1/2}\right).
\end{equation*}
\end{lemma}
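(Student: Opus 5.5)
Lemma~\ref{lemma5} asks us to show that the accumulated effect of the Taylor remainder $\rho(\be_k)$ and of the original remainder $\br_k$, propagated through the linearized transition matrices, is $o_p(\gamma_t^{1/2})$. The plan is to bound the two pieces separately. In both cases we first localize using (A1) and discard a finite initial segment using Lemma~\ref{lemma2}.

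\textbf{Localization.} Fix $\eta>0$ small enough that the expansion in (A2) holds, i.e.\ $\|\rho(\be)\|\le L\|\be\|^2$ for $\|\be\|\le\eta$. Since $\be_k\to\bzero$ a.s., for every $\epsilon>0$ there is a deterministic $K$ with $\mathbb P(\sup_{k\ge K}\|\be_k\|>\eta)<\epsilon$. Taking $K\ge N$ from Lemma~\ref{lemma2}, the finitely many early terms $k<K$ contribute
\[
\Phi_{t,K}\sum_{k<K}\Phi_{K-1,k+1}\gamma_k(\cdot).
\]
This is a fixed random vector multiplied by $\|\Phi_{t,K}\|\le C_0\exp(-c\sum_{j=K}^t\gamma_j)$. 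The exponential decays like $\exp(-c' t^{1-\kappa})$, so it is $o(\gamma_t^{1/2})$. Hence it suffices to work on the event $\Omega_K=\{\sup_{k\ge K}\|\be_k\|\le\eta\}$ and to sum over $k\ge K$.

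\textbf{Original remainder $\br_k$.} Use (A6) together with Lemma~\ref{lemma2}. Bound the $L^1$ norm of the sum by
\[
\sum_k C_0\prod_{j=k+1}^t(1-c\gamma_j)\,\gamma_k\,\mathbb E\|\br_k\| .
\]
From (A6) and Jensen, $\mathbb E\|\br_k\|\lesssim\gamma_k$, at least after truncating on a high-probability event so that the a.s.\ $\mathcal O$ constant becomes uniform. Lemma~\ref{lemma1}, with $\alpha_k=\beta_k=\gamma_k$ and $e_k=\gamma_k\to0$, then shows that the sum divided by $\gamma_t$ tends to $0$. So the sum is $o(\gamma_t)\subset o(\gamma_t^{1/2})$, and Markov's inequality finishes this part.

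\textbf{Taylor remainder.} On $\Omega_K$ we have $\|\rho(\be_k)\|\le L\|\be_k\|^2$, so we need
\[
\sum_k\prod_{j=k+1}^t(1-c\gamma_j)\gamma_k\|\be_k\|^2=o_p(\gamma_t^{1/2}).
\]
This is the main obstacle, because it needs a rate for $\|\be_k\|$ rather than mere a.s.\ convergence. The natural target is the moment bound $\mathbb E[\|\be_k\|^2\mathbf 1_{\Omega_K}]\lesssim\gamma_k$. Once we have it, Lemma~\ref{lemma1} makes the sum $\mathcal O(\gamma_t)=o(\gamma_t^{1/2})$.

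To prove the moment bound, I would unroll the recursion from $K$ in the $\|\cdot\|_*$ norm of Lemma~\ref{lemma2}:
\[
\be_t=\Phi_{t,K}\be_K+\sum_{k=K}^t\Phi_{t,k+1}\gamma_k\bxi_k+\sum_{k=K}^t\Phi_{t,k+1}\gamma_k(\rho(\be_k)+\br_k).
\]
\begin{itemize}
\item The martingale term has second moment $\lesssim\gamma_t$ by (A3) and Lemma~\ref{lemma1}.
\item The $\br$ term is $\mathcal O(\gamma_t)$ in $L^2$, by the same argument as above.
\item On $\Omega_K$, $\|\rho(\be_k)\|\le L\eta\|\be_k\|$. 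Choosing $\eta$ small enough that $L\eta<c/2$, this term is absorbed into the contraction.
\end{itemize}
Concretely, I would run a one-step Lyapunov-type recursion
\[
\mathbb E[\|\be_{k+1}\|_*^2\mathbf 1]\le(1-c\gamma_k)\,\mathbb E[\|\be_k\|_*^2\mathbf 1]+C\gamma_k^2 .
\]
To keep the indicators measurable, use the stopping time $\tau=\inf\{k\ge K:\|\be_k\|>\eta\}$ in place of $\Omega_K$. A standard discrete Gronwall argument, again via Lemma~\ref{lemma1}, then gives the bound $\mathcal O(\gamma_k)$. With this rate in hand the Taylor term is $\mathcal O_p(\gamma_t)$, and combining the three pieces and letting $\epsilon\to0$ proves the lemma.
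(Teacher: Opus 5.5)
Your proposal is correct and is essentially the paper's argument: a stopping-time localization, then a one-step Lyapunov recursion giving $\mathbb{E}[\|\be_k\|^2\mathbf{1}_{\{\tau>k\}}]=\mathcal{O}(\gamma_k)$ (your $\|\be_k\|_*^2$ is the paper's $V_k=\be_k^\top\bP\be_k$), and finally a bound on the weighted sum. For that last step you take expectations and apply Lemma~\ref{lemma1}, handling $\br_k$ in $L^1$ as well, whereas the paper cites an external lemma on pointwise $\mathcal{O}_p$ rates; your version is a slightly more careful justification. One harmless miscount: with $\mathbb{E}\|\br_k\|\lesssim\gamma_k$ the summand is of order $\gamma_k^2$, so the first part of Lemma~\ref{lemma1} gives $\mathcal{O}(\gamma_t)$ rather than $o(\gamma_t)$, which still suffices because $\mathcal{O}(\gamma_t)=o(\gamma_t^{1/2})$.
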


\subsection{Proof of \Cref{theorem1}}
\label{appendix_proof1}
Define the error process
\begin{equation*}
\be_t := \bz_t - \bz^\star.
\end{equation*}
Since $F(\bz^\star)=\bzero$, the recursion
\[
\bz_{t+1}
=
\bz_t + \gamma_t \big(F(\bz_t)+\bxi_{t}+\br_{t}\big)
\]
can be rewritten as
\begin{equation*}
\be_{t+1}
=
\be_t + \gamma_t \big(F(\bz^\star+\be_t)-F(\bz^\star)+\bxi_{t}+\br_{t}\big).
\end{equation*}
By (A2) in \Cref{assumption1}, $F$ is continuously differentiable in a neighborhood of $\bz^\star$, and hence admits the local expansion
\begin{equation*}
F(\bz^\star+\be_t)
=
F(\bz^\star)+\bJ \be_t+\rho(\be_t)
=
\bJ \be_t+\rho(\be_t),
\end{equation*}
where $\|\rho(\be_t)\| = \mathcal{O}(\|\be_t\|^2)$ as $\be_t\to \bzero$. Therefore,
\begin{equation}
\label{eq2}
\be_{t+1}
=
(\bI+\gamma_t \bJ)\be_t
+
\gamma_t \bxi_{t}
+
\gamma_t\big(\rho(\be_t)+\br_{t}\big).
\end{equation}

Let
\begin{equation*}
\Phi_{t,k}:=\prod_{j=k}^t (I+\gamma_j \bJ).
\end{equation*}
By repeated substitution of \eqref{eq2}, we obtain the variation-of-constants representation
\begin{equation*}
\gamma_t^{-1/2}\be_{t+1}
=
\gamma_t^{-1/2}\Phi_{t,1}\be_1
+
\sum_{k=1}^t
\Phi_{t,k+1}\frac{\gamma_k}{\sqrt{\gamma_t}}\bxi_k
+
\gamma_t^{-1/2}\sum_{k=1}^t
\Phi_{t,k+1}\gamma_k\big(\rho(\be_{k})+\br_k\big).
\end{equation*}

For the first term, it follows from \eqref{eq1} that
\begin{equation*}
\begin{split}
    \left\| \gamma_t^{-1/2}\Phi_{t,1}\be_1 \right\| & \leq C_0 \gamma_t^{-1/2} \prod_{j=1}^{t}\left(1 - c\gamma_j\right) \left\| \be_1 \right\| \\
    & \leq C_0 \gamma_t^{-1/2} \exp\left(1-c\sum_{j=1}^{t} \gamma_{j}\right) \left\| \be_1 \right\| \to 0.
\end{split}
\end{equation*}
We then consider the third term, by \Cref{lemma5}, we have
\[
\gamma_t^{-1/2}\sum_{k=1}^t
\Phi_{t,k+1}\gamma_k\big(\rho(\be_{k})+\br_k\big) \xrightarrow{p} 0.
\]

It therefore remains to analyze the martingale term
\begin{equation*}
\mathcal{J}_t
:=
\sum_{k=1}^t
\Phi_{t,k+1}\frac{\gamma_k}{\sqrt{\gamma_t}}\bxi_k.
\end{equation*}
By \Cref{lemma4}, under \Cref{assumption1}, we have
\begin{equation*}
\mathcal{J}_t \xrightarrow{d} \mathcal{N}(\bzero,\bSig),
\end{equation*}
where $\bSig$ is the unique positive semidefinite solution to the corresponding Lyapunov equation. Hence, by Slutsky's theorem,
\begin{equation*}
\gamma_t^{-1/2}(\bz_t-\bz^\star)
=
\gamma_t^{-1/2}\be_t
\xrightarrow{d}
\mathcal{N}(\bzero,\bSig).
\end{equation*}

Now, we proceed to prove the asymptotic normality for the Polyak-Ruppert average. We continue from
\begin{equation*}
\be_{t+1}
=
\be_t + \gamma_t \big(\bJ\be_{t}+\bxi_{t}+\rho(\be_t)+\br_{t}\big).
\end{equation*}
For simplicity, we denote $\bdelta_t = \rho(\be_t)+\br_{t}$. 
Rearranging gives
\begin{equation}
\label{eq:telescoping-identity}
    \bJ\be_t = \frac{\be_{t+1}-\be_t}{\gamma_t} - \bxi_{t} - \bdelta_t.
\end{equation}
Summing \eqref{eq:telescoping-identity} from $t=1$ to $T$ yields
\begin{equation}
\label{eq:sum-identity}
    \bJ\sum_{t=1}^T \be_t
    =
    \sum_{t=1}^T \frac{\be_{t+1}-\be_t}{\gamma_t}
    -
    \sum_{t=1}^T \bxi_{t}
    -
    \sum_{t=1}^T \bdelta_t.
\end{equation}
Hence
\begin{equation}
\label{eq:avg-decomposition}
    \sqrt{T}\,(\bar \bz_T-\bz^\star)
    =
    -\bJ^{-1}\frac{1}{\sqrt{T}}\sum_{t=1}^T \bxi_{t}
    + \bJ^{-1}\bR_{T,1}
    - \bJ^{-1}\bR_{T,2},
\end{equation}
where
\[
    \bR_{T,1}:=\frac{1}{\sqrt{T}}\sum_{t=1}^T \frac{\be_{t+1}-\be_t}{\gamma_t},
    \qquad
    \bR_{T,2}:=\frac{1}{\sqrt{T}}\sum_{t=1}^T \bdelta_t.
\]
We now show that both remainder terms are negligible.

From the last-iterate result proved above, we have $\be_t=O_p(\gamma_t^{1/2})$. Then, Assumption~\ref{assumption1}
gives
\[
    \rho(\be_t)=O_p(\left\| \be_t\right\|^2)=O_p(\gamma_t).
\]
The negligible-remainder condition from Theorem 1 yields $\br_t=O_p(\gamma_t)$, hence
\[
    \bdelta_t=O_p(\gamma_t).
\]
Therefore,
\[
    \left\|\bR_{T,2}\right\|
    \le
    \frac{1}{\sqrt{T}}\sum_{t=1}^T O_p(\gamma_t)
    =
    O_p\!\left(T^{-1/2}\sum_{t=1}^T t^{-\kappa}\right)
    =
    O_p\!\left(T^{1/2-\kappa}\right) =  o_{p}(1),
\]
since $\kappa>1/2$.

For $\bR_{T,1}$, summation by parts gives
\[
    \sum_{t=1}^T \frac{\be_{t+1}-\be_t}{\gamma_t}
    =
    \frac{\be_{T+1}}{\gamma_T}
    -
    \frac{\be_1}{\gamma_{1}}
    +
    \sum_{t=2}^T \be_t\left(\frac{1}{\gamma_{t-1}}-\frac{1}{\gamma_t}\right).
\]
Since $\be_t=O_p(\gamma_t^{1/2})$ and $\gamma_t=\gamma_0 t^{-\kappa}$,
\[
    \frac{\left\|\be_{T+1}\right\|}{\sqrt{T}\gamma_T}
    =
    O_p\!\left(T^{\kappa/2-1/2}\right) = o_{p}(1),
\]
and
\[
    \frac{1}{\sqrt{T}}\sum_{t=2}^T
    \left\| \be_t\right\|\left(\frac{1}{\gamma_t}-\frac{1}{\gamma_{t-1}}\right)
    =
    O_p\!\left(
        \frac{1}{\sqrt{T}}
        \sum_{t=2}^T t^{-\kappa/2} t^{\kappa-1}
    \right)
    =
    O_p\!\left(T^{\kappa/2-1/2}\right) = o_{p}(1).
\]
Thus $R_{T,1}= o_{p}(1)$.

Finally, by the martingale central limit theorem and the conditional covariance assumption,
\[
    \frac{1}{\sqrt{T}}\sum_{t=1}^T \bxi_{t}\xrightarrow{d}
\mathcal{N}(\bzero,\bQ).
\]
Applying Slutsky's theorem to \eqref{eq:avg-decomposition} proves
\[
    \sqrt{T}\,(\bar \bz_T-\bz^\star)\xrightarrow{d}
\mathcal{N}(\bzero,\bJ^{-1}\bQ\bJ^{-\top}).
\]

\subsection{Proof of \Cref{theorem3}}
\label{appendix_proof2}
We verify Assumptions~(A1)--(A6) in \Cref{assumption1} one by one.

\paragraph{Verification of (A1).}
By Assumption~(B1), the augmented iterate $\bz_t$ converges almost surely to a limiting point
\[
\bz^\star=(\bx^\star,\bmm^\star,\bv^\star),
\]
satisfying \eqref{eq:adamw_eq_conditions} and \eqref{eq:adamw_eq_x}. In particular,
\[
F(\bz^\star)=\bzero,
\]
and
\[
\bz_t\to \bz^\star,
\quad \text{a.s. as}\quad t\to\infty.
\]
Hence Assumption~(A1) holds.

\paragraph{Verification of (A2).}
By Assumption~(B2), the objective function $f$ is twice continuously differentiable and the function $\bq(\bx)$ is continuously differentiable in a neighborhood of $\bx^\star$. Therefore, the mappings
\[
\bx\mapsto \nabla f(\bx),
\qquad
\bx\mapsto \bq(\bx)
\]
are continuously differentiable in a neighborhood of $\bx^\star$. Moreover, since $\varepsilon>0$
, the map
\[
(\bmm,\bv)\mapsto \frac{\bmm}{\sqrt{\bv}+\varepsilon}
\]
is continuously differentiable on any neighborhood where the denominator remains positive componentwise. It follows that the mean-field mapping
\[
F(\bz)
=
\begin{pmatrix}
-\dfrac{\bmm}{\sqrt{\bv}+\varepsilon}-\lambda \bx\\[1mm]
\alpha\bigl(\nabla f(\bx)-\bmm\bigr)\\[1mm]
\beta\bigl(\bq(\bx)-\bv\bigr)
\end{pmatrix}
\]
is continuously differentiable in a neighborhood of $\bz^\star$.
Moreover, due to the local Lipschitzness of $\nabla^2 f$ and $\nabla \bq$, $\nabla F$ is also locally Lipschitz continuous.
Hence, there exists a Jacobian matrix
\[
\bJ:=\nabla F(\bz^\star),
\]
such that
\[
F(\bz^\star+\be)
=
F(\bz^\star)+\bJ\be+\rho(\be),
\qquad
\|\rho(\be)\|=\mathcal{O}(\|\be\|^2),
\quad \be\to\bzero.
\]
Thus Assumption~(A2) holds.

\paragraph{Verification of (A3).}
Recall that the augmented noise is
\[
\bxi_t=
\begin{pmatrix}
\bzero\\[1mm]
\alpha \bxi_t^{(g)}\\[1mm]
\beta\Bigl(2\nabla f(\bx_t)\odot \bxi_t^{(g)}+\hat{\bxi}_t^{(g)}\Bigr)
\end{pmatrix},
\]
where
\[
\hat{\bxi}_t^{(g)}
=
\bxi_t^{(g)\odot 2}
-
\mathbb E\!\left[\bxi_t^{(g)\odot 2}\mid \mathcal F_{t}\right].
\]
By Assumption~(B3),
\[
\mathbb E[\bxi_t^{(g)}\mid \mathcal F_{t}]=\bzero.
\]
Also,
\[
\mathbb E[\hat{\bxi}_t^{(g)}\mid \mathcal F_{t}]
=
\mathbb E[\bxi_t^{(g)\odot 2}\mid \mathcal F_{t}]
-
\mathbb E[\bxi_t^{(g)\odot 2}\mid \mathcal F_{t}]
=
\bzero.
\]
Therefore,
\[
\mathbb E[\bxi_t\mid \mathcal F_{t}]
=
\bzero,
\]
so that $\{\bxi_t\}$ is a martingale difference sequence.

Next, we verify the bounded $(2+\delta)$-moment. Since $\bz_t\to \bz^\star$ almost surely, the sequence $\{\bx_t\}$ is almost surely bounded for all sufficiently large $t$, and hence $\{\nabla f(\bx_t)\}$ is also almost surely bounded. Therefore,
\[
\|\bxi_t\|
\le
C\Bigl(\|\bxi_t^{(g)}\|+\|\bxi_t^{(g)}\|^2\Bigr)
\]
for some deterministic constant $C>0$. By Assumption~(B3),
\[
\sup_{t\ge 1}\mathbb E\!\left[\|\bxi_t^{(g)}\|^{4+\delta}\mid \mathcal F_{t}\right]<\infty,
\qquad \text{a.s.}
\]
It follows that,
\[
\sup_{t\ge 1}\mathbb E\!\left[\|\bxi_t\|^{2+\delta/2}\mid \mathcal F_{t}\right]<\infty,
\qquad \text{a.s.}
\]
Hence, Assumption~(A3) holds.

\paragraph{Verification of (A4).}
By definition,
\[
\bxi_t=
\begin{pmatrix}
\bzero\\[1mm]
\alpha \bxi_t^{(g)}\\[1mm]
\beta\Bigl(2\nabla f(\bx_t)\odot \bxi_t^{(g)}+\hat{\bxi}_t^{(g)}\Bigr)
\end{pmatrix},
\]
where
\[
\hat{\bxi}_t^{(g)}
=
\bxi_t^{(g)\odot 2}
-
\mathbb E\!\left[\bxi_t^{(g)\odot 2}\mid \mathcal F_{t}\right].
\]

Since $\bx_t\to \bx^\star$ almost surely and $\nabla f$ is continuous, we have
\[
\nabla f(\bx_t)\to \nabla f(\bx^\star),
\qquad \text{a.s.}
\]
Hence,
\[
2\nabla f(\bx_t)\odot \bxi_t^{(g)}
=
2\,\operatorname{diag}\!\bigl(\nabla f(\bx_t)\bigr)\bxi_t^{(g)}
\to
2\operatorname{diag}\!\bigl(\nabla f(\bx^{\star})\bigr) \bxi_t^{(g)} = 2\operatorname{diag}\!\bigl(\bmm^\star\bigr) \bxi_t^{(g)}
\]
in the conditional second moment. Therefore,
\[
\mathbb E[\bxi_t\bxi_t^\top\mid \mathcal F_{t}]
\to \bQ,
\qquad \text{a.s.},
\]
where
\begin{equation}
\label{eq_Q_adamw}
\bQ=
\begin{pmatrix}
\bzero & \bzero & \bzero\\
\bzero & \bQ_{22} & \bQ_{23}\\
\bzero & \bQ_{32} & \bQ_{33}
\end{pmatrix},
\end{equation}
with
\begin{align*}
\bQ_{22}
&= \alpha^2 \bQ^{(g)},\\[1mm]
\bQ_{23}
&= \alpha\beta\bigl(2\bQ^{(g)}\operatorname{diag}\!\bigl(\bmm^{\star}\bigr)+\bR^{(g)}\bigr),\\[1mm]
\bQ_{32}
&= \alpha\beta\bigl(2\operatorname{diag}\!\bigl(\bmm^{\star}\bigr)\bQ^{(g)}+\bR^{(g)\top}\bigr),\\[1mm]
\bQ_{33}
&=\beta^2\Bigl(
4\operatorname{diag}\!\bigl(\bmm^{\star}\bigr)\bQ^{(g)}\operatorname{diag}\!\bigl(\bmm^{\star}\bigr)
+2\operatorname{diag}\!\bigl(\bmm^{\star}\bigr)\bR^{(g)}
+2\bR^{(g)\top}\operatorname{diag}\!\bigl(\bmm^{\star}\bigr)
+\hat{\bQ}^{(g)}
\Bigr).
\end{align*}
Thus, Assumption~(A4) holds.

\paragraph{Verification of (A5).}
Assumption~(B5) states exactly that the Jacobian matrix $\bJ$, defined in \eqref{eq_J_adamw}, is Hurwitz. Hence Assumption~(A5) holds.

\paragraph{Verification of (A6).}
The remainder term is
\[
\br_t=
\begin{pmatrix}
\br_t^{(x)}\\
\bzero\\
\bzero
\end{pmatrix},
\qquad
\br_t^{(x)}
=
\frac{\bmm_t}{\sqrt{\bv_t}+\varepsilon}
-
\frac{\bmm_{t+1}}{\sqrt{\bv_{t+1}}+\varepsilon}.
\]
Since
\[
\bmm_{t+1}-\bmm_t
=
\alpha_t\bigl(\nabla f(\bx_t)-\bmm_t\bigr)+\alpha_t\bxi_t^{(g)},
\]
and
\[
\bv_{t+1}-\bv_t
=
\beta_t\bigl(\bq(\bx_t)-\bv_t\bigr)
+
\beta_t\Bigl(2\nabla f(\bx_t)\odot \bxi_t^{(g)}+\hat{\bxi}_t^{(g)}\Bigr),
\]
we obtain
\[
\mathbb E\!\left[\|\bmm_{t+1}-\bmm_t\|^2\mid \mathcal F_{t}\right]
=
O(\gamma_t^2),
\]
and
\[
\mathbb E\!\left[\|\bv_{t+1}-\bv_t\|^2\mid \mathcal F_{t}\right]
=
O(\gamma_t^2),
\qquad \text{a.s.}
\]
Since the map
\[
(\bmm,\bv)\mapsto \frac{\bmm}{\sqrt{\bv}+\varepsilon}
\]
is locally Lipschitz in a neighborhood of $(\bmm^\star,\bv^\star)$, there exists $C>0$ such that
\[
\|\br_t^{(x)}\|
\le
C\Bigl(\|\bmm_{t+1}-\bmm_t\|+\|\bv_{t+1}-\bv_t\|\Bigr),
\qquad \text{a.s.}
\]
Hence
\[
\|\br_t\|^2
\le
2C^2\|\bmm_{t+1}-\bmm_t\|^2
+
2C^2\|\bv_{t+1}-\bv_t\|^2,
\qquad \text{a.s.}
\]
and therefore
\[
\mathbb E\!\left[\|\br_t\|^2\mid \mathcal F_{t}\right]
=
O(\gamma_t^2),
\qquad \text{a.s.}
\]
which implies the required negligibility of the remainder term. Thus Assumption~(A6) holds.

Combining the above verifications, we conclude that Assumptions~(A1)--(A6) in \Cref{assumption1} are satisfied. Therefore, \Cref{theorem1} applies directly to the AdamW iterates, and
\[
\gamma_t^{-1/2}(\bz_t-\bz^\star)\xrightarrow{d}\mathcal N(\bzero,\bSig),
\]
\begin{equation*}
    \sqrt{t}\left(\Bar{\bz}_t - \bz^{\star}\right)\;\xrightarrow{d}\;
\mathcal N(\bzero,\bJ^{-1}\bQ\bJ^{-\top}),
\end{equation*}
where $\bSig$ is the unique positive semidefinite solution to
\[
\bJ\bSig+\bSig\bJ^\top+\bQ=\bzero.
\]
This completes the proof.

\subsection{Proof of \Cref{theorem_clt_score}}
\label{appendix_proof3}
Fix a LoRA component $(\ell,j)$. Recall that the full optimizer state is denoted by $\bz_t$, and let $\bP_{\ell,j}$ be the selection matrix such that
\[
    \bw_{\ell,j}(t)
    :=
    \begin{pmatrix}
        \ba_{\ell,j}(t)\\
        \bb_{\ell,j}(t)
    \end{pmatrix}
    =
    \bP_{\ell,j}\bz_t .
\]
Similarly,
\[
    \bw_{\ell,j}^{\star}
    =
    \bP_{\ell,j}\bz^{\star}.
\]
By the central limit theorem for the optimizer state under Assumption~\ref{assumption3}, we have
\[
    \gamma_t^{-1/2}(\bz_t-\bz^\star)
    \xrightarrow{d}
    \mathcal N(\bzero,\bSig),
\]
and
\[
    \sqrt t(\bar\bz_t-\bz^\star)
    \xrightarrow{d}
    \mathcal N(0,\bJ^{-1}\bQ\bJ^{-\top}),
    \qquad
    \bar\bz_t:=\frac1t\sum_{\tau=1}^t \bz_\tau .
\]
Applying the continuous mapping theorem to the linear map $\bP_{\ell,j}$, we obtain
\[
    \gamma_t^{-1/2}
    \left(
        \bw_{\ell,j}(t)-\bw_{\ell,j}^{\star}
    \right)
    \xrightarrow{d}
    \mathcal N(0,\bSig_{\ell,j}^{w}),
\]
where
\[
    \bSig_{\ell,j}^{w}
    :=
    \bP_{\ell,j}\bSig\bP_{\ell,j}^{\top}.
\]
Similarly,
\[
    \sqrt t
    \left(
        \bar\bw_{\ell,j,t}-\bw_{\ell,j}^{\star}
    \right)
    \xrightarrow{d}
    \mathcal N(0,\bar{\bSig}_{\ell,j}^{w}),
\]
where
\[
    \bar\bw_{\ell,j,t}
    :=
    \frac1t\sum_{\tau=1}^t \bw_{\ell,j}(\tau),
    \qquad
    \bar{\bSig}_{\ell,j}^{w}
    :=
    \bP_{\ell,j}\bJ^{-1}\bQ\bJ^{-\top}\bP_{\ell,j}^{\top}.
\]

Define
\[
    h(\bw_{\ell,j})
    :=
    \|\ba_{\ell,j}\|^2\|\bb_{\ell,j}\|^2 .
\]
Then
\[
    s_{\ell,j}(t)=h(\bw_{\ell,j}(t)),
    \qquad
    s_{\ell,j}^{\star}=h(\bw_{\ell,j}^{\star}).
\]
Since
\[
    s_{\ell,j}^{\star}
    =
    \|\ba_{\ell,j}^{\star}\|^2
    \|\bb_{\ell,j}^{\star}\|^2
    \geq \Delta>0,
\]
both $\ba_{\ell,j}^{\star}$ and $\bb_{\ell,j}^{\star}$ are nonzero. Hence $h$ is continuously differentiable in a neighborhood of $\bw_{\ell,j}^{\star}$, and its gradient is
\[
    \nabla h(\bw_{\ell,j}^{\star})
    =
    \begin{pmatrix}
        2\|\bb_{\ell,j}^{\star}\|^{2}\ba_{\ell,j}^{\star}\\
        2\|\ba_{\ell,j}^{\star}\|^{2}\bb_{\ell,j}^{\star}
    \end{pmatrix}.
\]
By a first-order Taylor expansion,
\[
    h(\bw_{\ell,j}(t))-h(\bw_{\ell,j}^{\star})
    =
    \nabla h(\bw_{\ell,j}^{\star})^\top
    \left(
        \bw_{\ell,j}(t)-\bw_{\ell,j}^{\star}
    \right)
    +
    r_t,
\]
where
\[
    r_t
    =
    o_p\left(
        \|\bw_{\ell,j}(t)-\bw_{\ell,j}^{\star}\|
    \right).
\]
Since
\[
    \bw_{\ell,j}(t)-\bw_{\ell,j}^{\star}
    =
    O_p(\gamma_t^{1/2}),
\]
we have
\[
    \gamma_t^{-1/2}r_t=o_p(1).
\]
Therefore,
\[
\begin{aligned}
    \gamma_t^{-1/2}
    \left(
        s_{\ell,j}(t)-s_{\ell,j}^{\star}
    \right)
    &=
    \nabla h(\bw_{\ell,j}^{\star})^\top
    \gamma_t^{-1/2}
    \left(
        \bw_{\ell,j}(t)-\bw_{\ell,j}^{\star}
    \right)
    +o_p(1).
\end{aligned}
\]
Combining this expansion with the component-wise CLT and Slutsky's theorem yields
\[
    \gamma_t^{-1/2}
    \left(
        s_{\ell,j}(t)-s_{\ell,j}^{\star}
    \right)
    \xrightarrow{d}
    \mathcal N(0,\sigma_{\ell,j}^2),
\]
where
\[
    \sigma_{\ell,j}^{2}
    =
    \nabla h(\bw_{\ell,j}^{\star})^\top
    \bSig_{\ell,j}^{w}
    \nabla h(\bw_{\ell,j}^{\star}).
\]
This proves \eqref{eq_score_clt_last}.

We next prove the averaged-score result. For each $\tau$, the Taylor expansion gives
\[
    s_{\ell,j}(\tau)-s_{\ell,j}^{\star}
    =
    \nabla h(\bw_{\ell,j}^{\star})^\top
    \left(
        \bw_{\ell,j}(\tau)-\bw_{\ell,j}^{\star}
    \right)
    +
    r_\tau,
\]
where
\[
    r_\tau
    =
    O_p\left(
        \|\bw_{\ell,j}(\tau)-\bw_{\ell,j}^{\star}\|^2
    \right)
\]
under the local smoothness of $h$. Since the optimizer-state central limit theory implies
\[
    \bw_{\ell,j}(\tau)-\bw_{\ell,j}^{\star}
    =
    O_p(\gamma_\tau^{1/2}),
\]
we have
\[
    r_\tau=O_p(\gamma_\tau).
\]
Therefore,
\[
    \frac{1}{\sqrt t}\sum_{\tau=1}^t r_\tau
    =
    O_p\left(
        \frac{1}{\sqrt t}\sum_{\tau=1}^t \gamma_\tau
    \right).
\]
Because $\gamma_\tau=\gamma_0\tau^{-\kappa}$ with $\kappa\in(1/2,1)$,
\[
    \frac{1}{\sqrt t}\sum_{\tau=1}^t \gamma_\tau
    =
    O\left(t^{1/2-\kappa}\right)
    =
    o(1).
\]
Hence
\[
    \frac{1}{\sqrt t}\sum_{\tau=1}^t r_\tau=o_p(1).
\]
It follows that
\[
\begin{aligned}
    \sqrt t
    \left(
        \frac1t\sum_{\tau=1}^t s_{\ell,j}(\tau)
        -
        s_{\ell,j}^{\star}
    \right)
    &=
    \nabla h(\bw_{\ell,j}^{\star})^\top
    \sqrt t
    \left(
        \frac1t\sum_{\tau=1}^t
        \bw_{\ell,j}(\tau)
        -
        \bw_{\ell,j}^{\star}
    \right)
    +o_p(1).
\end{aligned}
\]
Using the averaged-state central limit theory and Slutsky's theorem, we obtain
\[
    \sqrt t
    \left(
        \frac1t\sum_{\tau=1}^t s_{\ell,j}(\tau)
        -
        s_{\ell,j}^{\star}
    \right)
    \xrightarrow{d}
    \mathcal N(0,\bar\sigma_{\ell,j}^2),
\]
where
\[
    \bar\sigma_{\ell,j}^{2}
    =
    \nabla h(\bw_{\ell,j}^{\star})^\top
    \bar{\bSig}_{\ell,j}^{w}
    \nabla h(\bw_{\ell,j}^{\star}).
\]
This proves \eqref{eq_score_clt_avg}.

Finally, since $\bSig_{\ell,j}^{w}$ and $\bar{\bSig}_{\ell,j}^{w}$ are positive semidefinite covariance matrices, both limiting variances are nonnegative:
\[
    \sigma_{\ell,j}^{2}\geq 0,
    \qquad
    \bar\sigma_{\ell,j}^{2}\geq 0.
\]
The limiting normal distribution is understood to be degenerate when the corresponding variance is zero.

\newpage
\section{Adam Optimizer}
\label{appendix_adam}

\subsection{Main Results}

We consider the Adam optimizer \cite{kingma2015adam} applied to the stochastic optimization problem \eqref{eq0}, where only noisy gradients are available.

Given a sequence of stochastic gradients $\bg_t := \nabla \mathcal{L}(\bx_t;\zeta_t)$, Adam updates are given by
\begin{equation}
\label{eq_adam}
    \begin{split}
    \bmm_{t+1} &= (1-\alpha_t) \bmm_t + \alpha_t\bg_{t}, \\
    \bv_{t+1} &= (1-\beta_t) \bv_t + \beta_t\bg_{t}^{\odot 2}, \\
    \bx_{t+1} &= \bx_t - \gamma_t \frac{\bmm_{t+1}}{\sqrt{\bv_{t+1}} + \varepsilon},
    \end{split}
\end{equation}
where $\alpha_t = \alpha\gamma_t,\beta_t = \beta \gamma_t$ are step sizes, $\varepsilon>0$ is a small number, and the division is understood elementwise.
We define the augmented state
\begin{equation}
\label{eq_state_adam}
\bz_t := (\bx_t, \bmm_t, \bv_t) \in \mathbb{R}^{3d},
\end{equation}
which are considered and updated during Adam's iteration.
Let $\nabla f(\bx_t)$ denote the true gradient, and write the stochastic gradient as
\[
\bg_{t} = \nabla f(\bx_t) + \bxi_{t}^{(g)},
\]
where $\{\bxi_{t}^{(g)}\}$ is a martingale difference sequence for gradients.
We denote
\begin{equation*}
    \bq(\bx) = \mathbb{E}\left[\nabla \mathcal{L}(\bx;\zeta)^{\odot 2}\right].
\end{equation*}
Then the updates can be rewritten as
\begin{align*}
& \bmm_{t+1}
= \bmm_t + \alpha_t\big(\nabla f(\bx_t) - \bmm_t\big)
+ \alpha_t\bxi_{t}^{(g)}, \\
& \bv_{t+1}
= \bv_t + \beta_t\big(\bq(\bx_t) - \bv_t\big)
+ \beta_t\left(2\nabla f(\bx_t)\odot \bxi_{t}^{(g)} + \bxi_{t}^{(g)\odot 2} - \mathbb{E}\left[\bxi_t^{(g)\odot 2}\mid \mathcal{F}_{t}\right]\right).
\end{align*}
The update of $\bx_t$ can be written as
\begin{equation*}
\bx_{t+1}
=
\bx_t
-
\gamma_t
\frac{\bmm_t}{\sqrt{\bv_t}+\varepsilon}
+
\gamma_t\br^{(x)}_{t},
\end{equation*}
where $\br^{(x)}_{t} = \frac{\bmm_t}{\sqrt{\bv_t}+\varepsilon} - \frac{\bmm_{t+1}}{\sqrt{\bv_{t+1}}+\varepsilon}$ collects the higher-order terms arising from the difference between $(\bmm_{t+1},\bv_{t+1})$ and $(\bmm_t,\bv_t)$.

Based on the analysis and reformulation of the Adam algorithm,
we define the mean-field mapping $F:\mathbb{R}^{3d}\to\mathbb{R}^{3d}$ as
\begin{equation}
F(\bz)
=
\begin{pmatrix}
-\dfrac{\bmm}{\sqrt{\bv}+\varepsilon} \\
\alpha\big(\nabla f(\bx)-\bmm\big) \\
\beta\big(\bq(\bx)-\bv\big)
\end{pmatrix}.
\end{equation}
Then, the corresponding martingale noise is given by
\begin{equation}
\bxi_{t}
=
\begin{pmatrix}
\bzero \\
\alpha\bxi_{t}^{(g)} \\
\beta\big(2\nabla f(\bx_t)\odot \bxi_{t}^{(g)} + \bxi_{t}^{(g)\odot 2} - \mathbb{E}[\bxi_{t}^{(g)\odot 2}|\mathcal{F}_t]\big)
\end{pmatrix}.
\end{equation}
The remainder term $\br_{t}$ collects all higher-order corrections, defined as
\begin{equation}
    \br_t = \begin{pmatrix}
\br_t^{(x)}\\
\bzero\\
\bzero
\end{pmatrix} = 
\begin{pmatrix}
\frac{\bmm_t}{\sqrt{\bv_t}+\varepsilon} - \frac{\bmm_{t+1}}{\sqrt{\bv_{t+1}}+\varepsilon}\\
\bzero\\
\bzero
\end{pmatrix}.
\end{equation}
With the above definitions, the Adam iteration can be written in the stochastic approximation form
\begin{equation*}
\bz_{t+1}
=
\bz_t
+
\gamma_t\big(F(\bz_t)+\bxi_{t}+\br_{t}\big).
\end{equation*}

\begin{proposition}
\label{proposition1}
The following statements hold for the Adam optimizer~\eqref{eq_adam}:
\begin{enumerate}
    \item The point $\bz^{\star} = (\bx^{\star},\bzero,\bq(\bx^{\star}))$ satisfies the Adam mean-field equation $F(\bz^\star)=\bzero$, where $\bx^{\star}$ is a stationary point of the objective function.

    \item Assume that $\bq(\bx)$ is differentiable in a neighborhood of $\bx^{\star}$. Then, the Jacobian matrix at $\bz^{\star}$ admits
\begin{equation}
\label{eq_J_adam}
\bJ=
\begin{pmatrix}
\bzero & - \operatorname{diag}\!\bigl(\frac{1}{\sqrt{\bq^{\star}} + \varepsilon}\bigr) & \bzero \\
\alpha \nabla^2 f(\bx^\star) & -\alpha \bI_d & \bzero\\
\beta \nabla \bq(\bx^{\star}) & \bzero & -\beta \bI_d
\end{pmatrix},
\end{equation}
where we denote $\bq(\bx^{\star})$ as $\bq^{\star}$. Moreover, $\bJ$ is Hurwitz if $\nabla^2 f(\bx^{\star}) \succ 0$.
\end{enumerate}

\end{proposition}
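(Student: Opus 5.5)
Part 1 is a direct substitution into the Adam mean field. Setting $F(\bz)=\bzero$ blockwise, the third block gives $\bv=\bq(\bx)$ and the second gives $\bmm=\nabla f(\bx)$. The first block reads $\bmm/(\sqrt{\bv}+\varepsilon)=\bzero$. Since $\varepsilon>0$ and $\bv=\bq(\bx)\ge \bzero$ entrywise, the denominator is strictly positive, so this forces $\bmm=\bzero$ and hence $\nabla f(\bx)=\bzero$. Therefore $\bz^\star=(\bx^\star,\bzero,\bq(\bx^\star))$ with $\bx^\star$ stationary, and conversely any such point is a zero of $F$.

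For the Jacobian in part 2, I would differentiate each block at $\bz^\star$.
\begin{itemize}
\item \textbf{First block.} It is $-\bmm/(\sqrt{\bv}+\varepsilon)$, which does not depend on $\bx$, so the $(1,1)$ block is $\bzero$. Its derivative in $\bmm$ is $-\operatorname{diag}(1/(\sqrt{\bq^\star}+\varepsilon))$. Its derivative in $\bv$ is $\operatorname{diag}\bigl(\bmm/(2\sqrt{\bv}(\sqrt{\bv}+\varepsilon)^{\odot 2})\bigr)$, which vanishes at $\bmm^\star=\bzero$. This is exactly the AdamW block $\bC^\star$ of \Cref{proposition2} with $\bmm^\star=\bzero$.
\item \textbf{Second and third blocks.} Differentiating $\alpha(\nabla f-\bmm)$ and $\beta(\bq-\bv)$ gives the stated rows, using differentiability of $\bq$ near $\bx^\star$.
\end{itemize}
If some $q_i^\star=0$, the derivative in $\bv$ is singular in that coordinate. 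I would either assume $\bq^\star>\bzero$, as in (B1), or note that, since $\bmm^\star=\bzero$, the first block has the form $\bmm\,g(\bv)$ and only the $\bmm$-derivative $g(\bv^\star)$ survives. Even so, differentiability jointly in $\bv$ near $v_i=0$ requires care, and I would flag this as a mild side condition.

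For the Hurwitz claim I would exploit the block lower-triangular structure. Order the variables as $(\bx,\bmm)$ followed by $\bv$. The $\bv$-row has zero in the $\bmm$ column, and the $(\bx,\bmm)$ rows have zero $\bv$ columns. Hence $\bJ$ is block lower triangular, and its spectrum is $\{-\beta\}$ (with multiplicity $d$) together with the spectrum of
\[
\bM=\begin{pmatrix}\bzero & -\bD^\star\\ \alpha\bH & -\alpha\bI_d\end{pmatrix},
\qquad \bD^\star=\operatorname{diag}\Bigl(\frac{1}{\sqrt{\bq^\star}+\varepsilon}\Bigr)\succ\bzero,\quad \bH=\nabla^2 f(\bx^\star)\succ\bzero.
\]

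The main step is showing that $\bM$ is Hurwitz. Let $(\bu,\bw)$ be an eigenvector of $\bM$ with eigenvalue $\mu$. The eigen-equations are $-\bD^\star\bw=\mu\bu$ and $\alpha\bH\bu-\alpha\bw=\mu\bw$. First, $\mu=0$ is impossible: it would give $\bw=\bzero$, then $\bH\bu=\bzero$, so $\bu=\bzero$. For $\mu\neq0$, substitute $\bu=-\mu^{-1}\bD^\star\bw$ into the second equation to get
\[
\bigl(\mu^2+\alpha\mu\bigr)\bw=-\alpha\bH\bD^\star\bw .
\]
The matrix $\bH\bD^\star$ is similar to $(\bD^\star)^{1/2}\bH(\bD^\star)^{1/2}\succ\bzero$, so $\bw$ may be taken as an eigenvector with eigenvalue $\nu>0$. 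This gives the scalar quadratic $\mu^2+\alpha\mu+\alpha\nu=0$. Because both coefficients $\alpha>0$ and $\alpha\nu>0$, both roots have strictly negative real part. Thus every eigenvalue of $\bJ$ has negative real part, so $\bJ$ is Hurwitz.

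The only delicate point is justifying the reduction to eigenvectors of $\bH\bD^\star$. I would handle it as follows: the quadratic relation shows $\bw$ lies in an eigenspace of the diagonalizable matrix $\bH\bD^\star$ for the value $-(\mu^2+\alpha\mu)/\alpha$, which must therefore equal some $\nu>0$.
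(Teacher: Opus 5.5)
Your proposal is correct and follows the paper's proof essentially step for step: solve $F(\bz)=\bzero$ blockwise, differentiate each block (the $\bv$-derivative of the first block vanishes because $\bmm^\star=\bzero$), use the block lower-triangular structure to reduce to $\bM$ and $-\beta\bI_d$, and show that every eigenvalue of $\bM$ solves $\mu^2+\alpha\mu+\alpha\theta=0$ for some $\theta>0$. The only difference is cosmetic: in your notation, the paper solves $-\bD^\star\bw=\mu\bu$ for $\bw=-\mu(\bD^\star)^{-1}\bu$ rather than for $\bu$, which avoids your separate $\mu=0$ case and leads directly to the symmetric matrix $(\bD^\star)^{1/2}\nabla^2 f(\bx^\star)(\bD^\star)^{1/2}$. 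Your caveat about coordinates with $q_i^\star=0$ is legitimate, and the paper does not address it.
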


\begin{assumption}
\label{assumption2}
For the Adam recursion \eqref{eq_adam}, we impose the following conditions.

\begin{enumerate}[
    label=(C\arabic*),
    leftmargin=3.2em,
    labelwidth=2.2em,
    labelsep=0.6em,
    itemsep=0.25em,
    topsep=0.25em
]
\item \textbf{Almost sure convergence.} The augmented iterate $\bz_t=(\bx_t,\bmm_t,\bv_t)$ converges almost surely to a limiting point
\[
\bz^\star=(\bx^\star,\bzero,\bq^{\star}),
\]
where $\bx^{\star}$ is a strict local minimizer of the objective function, i.e., $\nabla f(\bx^{\star}) = 0$, $\nabla^2 f(\bx^{\star}) \succ 0$, and $\bq^{\star} = \bq(\bx^{\star})$.

\item \textbf{Local smoothness.} The objective function $f(\bx)$ is twice continuously differentiable, the Hessian $\nabla^2 f$ is Lipschitz continuous, the function $\bq(\bx)$ is continuously differentiable, and the Jacobian $\nabla \bq$ is Lipschitz continuous in a neighborhood of $\bx^{\star}$.

\item \textbf{Gradient noise with bounded moments.} The stochastic gradient admits the decomposition
\[
\bg_t = \nabla f(\bx_t) + \bxi_t^{(g)},
\]
where $\{\bxi_t^{(g)}\}$ is a martingale difference sequence with respect to $\{\mathcal F_t\}$, i.e.,
\[
\mathbb E[\bxi_t^{(g)} \mid \mathcal F_{t}] = \bzero.
\]
Moreover, there exists $\delta>0$ such that
\[
\sup_{t\ge 1}
\mathbb E\!\left[\|\bxi_t^{(g)}\|^{4+\delta}\mid \mathcal F_{t}\right]
<\infty,
\qquad \text{a.s.}
\]

\item \textbf{Asymptotic conditional covariance.} Denote $\hat{\bxi}_t^{(g)} = \bxi_{t}^{(g)\odot 2} - \mathbb{E}[\bxi_{t}^{(g)\odot 2}|\mathcal{F}_t]$. There exist positive semidefinite matrices $\bQ^{(g)}$ and $\hat{\bQ}^{(g)}$, and a matrix $\bR^{(g)}$ such that
\begin{equation}
\begin{split}
    & \mathbb{E}\big[\bxi_{t}^{(g)}\bxi_{t}^{(g)\top} \mid \mathcal{F}_t\big]
\to \bQ^{(g)},\\
& \mathbb{E}\big[\hat{\bxi}_t^{(g)}\hat{\bxi}_{t}^{(g)\top} \mid \mathcal{F}_t\big] = \mathbb{E}\big[\bxi_t^{(g)\odot 2}\bxi_{t}^{(g)\odot 2 \top} \mid \mathcal{F}_t\big] - \mathbb{E}\big[\bxi_t^{(g)\odot 2}\mid \mathcal{F}_t\big] ~\mathbb{E}\big[\bxi_t^{(g)\odot 2 }\mid \mathcal{F}_t\big]^{\top}
\to \hat{\bQ}^{(g)},\\
& \mathbb{E}\big[\bxi_t^{(g)}\hat{\bxi}_{t}^{(g)\top} \mid \mathcal{F}_t\big] = \mathbb{E}\big[\bxi_t^{(g)}\bxi_{t}^{(g)\odot 2 \top} \mid \mathcal{F}_t\big]
\to \bR^{(g)},\quad\text{a.s.}
\end{split}
\end{equation}
\end{enumerate}

\end{assumption}

\begin{theorem}
\label{theorem2}
Suppose that \Cref{assumption2} holds. Then \Cref{assumption1} is satisfied for the Adam iterates generated by \eqref{eq_adam} with the state variable $\bz_t$ defined in \eqref{eq_state_adam}.
Consequently, \Cref{theorem1} applies directly:
\begin{equation*}
\gamma_t^{-1/2}(\bz_t-\bz^{\star})
\;\xrightarrow{d}\;
\mathcal{N}(\bzero,\bSig),
\end{equation*}
and
\begin{equation*}
    \sqrt{t}\left(\Bar{\bz}_t - \bz^{\star}\right)\;\xrightarrow{d}\;
\mathcal N(\bzero,\bJ^{-1}\bQ\bJ^{-\top}),
\end{equation*}
where the covariance matrix $\bSig$ is the unique positive semidefinite solution to the Lyapunov equation
\begin{equation*}
\bJ\bSig + \bSig \bJ^\top + \bQ = \bzero,
\end{equation*}
with $\bJ$ given in \eqref{eq_J_adam}, and 
\begin{equation*}
    \bQ = \begin{pmatrix}
    \bzero & \bzero & \bzero\\
    \bzero & \alpha^2 \bQ^{(g)} & \alpha\beta \bR^{(g)} \\
    \bzero & \alpha\beta \bR^{(g)\top}  & \beta^2 \hat{\bQ}^{(g)}
    \end{pmatrix}.
\end{equation*}
\end{theorem}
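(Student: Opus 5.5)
The plan is to mirror the proof of \Cref{theorem3}: write the Adam recursion in the form \eqref{eq_general_recursion}, with the mean field, martingale noise and remainder displayed above. Then verify (A1)--(A6) of \Cref{assumption1} one at a time from (C1)--(C4), and apply \Cref{theorem1} directly. The Lyapunov characterization of $\bSig$ and the averaged covariance $\bJ^{-1}\bQ\bJ^{-\top}$ then come for free. The remaining work is to identify $\bJ$ and $\bQ$ and to check that $\bJ$ is Hurwitz and invertible.

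\textbf{Routine conditions.} (A1) is (C1), once we note that $F(\bz^\star)=\bzero$ by part 1 of \Cref{proposition1}: $\bmm^\star=\bzero$, $\nabla f(\bx^\star)=\bzero$ and $\bv^\star=\bq(\bx^\star)$.

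For (A2), (C2) makes $\nabla f$ and $\bq$ continuously differentiable with locally Lipschitz Jacobians near $\bx^\star$. Since $\varepsilon>0$ and $\bv$ stays near $\bq^\star\ge 0$, the map $(\bmm,\bv)\mapsto \bmm/(\sqrt{\bv}+\varepsilon)$ is smooth near $(\bzero,\bq^\star)$. One caveat: if some entry of $\bq^\star$ vanishes, $\sqrt{\cdot}$ is not differentiable there. The $\bv$-derivative carries the factor $\bmm^\star=\bzero$, but to be safe I would either assume $\bq^\star>0$ entrywise, as in (B1), or argue along the iterates that $\bv_t\ge 0$. Differentiating then gives \eqref{eq_J_adam}; the $(1,3)$ block vanishes precisely because $\bmm^\star=\bzero$.

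(A3) follows as in the AdamW case. $\nabla f(\bx_t)$ is eventually bounded a.s., so $\|\bxi_t\|\le C(\|\bxi_t^{(g)}\|+\|\bxi_t^{(g)}\|^2)$, and the $(4+\delta)$ moment in (C3) gives a bounded $(2+\delta/2)$ moment.

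(A6) also follows as before. The increments $\bmm_{t+1}-\bmm_t$ and $\bv_{t+1}-\bv_t$ have conditional second moments $O(\gamma_t^2)$, and the map $(\bmm,\bv)\mapsto\bmm/(\sqrt{\bv}+\varepsilon)$ is locally Lipschitz.

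\textbf{Covariance $\bQ$.} Since $\nabla f(\bx_t)\to\nabla f(\bx^\star)=\bzero$ a.s., the cross term $2\nabla f(\bx_t)\odot\bxi_t^{(g)}$ vanishes in conditional second moment. Its contribution is bounded by $\|\nabla f(\bx_t)\|^2$ times the bounded conditional moments, and it also vanishes in the cross products via Cauchy--Schwarz. The remaining conditional covariances converge by (C4) to $\alpha^2\bQ^{(g)}$, $\alpha\beta\bR^{(g)}$ and $\beta^2\hat{\bQ}^{(g)}$, which is the stated $\bQ$. This is the AdamW formula with $\bmm^\star=\bzero$.

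\textbf{Main obstacle: (A5), the Hurwitz property.} This is the only step that is not bookkeeping. $\bJ$ is block lower triangular with respect to the split into $(\bx,\bmm)$ and $\bv$, so its spectrum is that of $-\beta\bI_d$ together with that of
\[
\bJ_{11}=\begin{pmatrix}\bzero & -\bD\\ \alpha\bH & -\alpha\bI_d\end{pmatrix},
\]
where $\bD=\operatorname{diag}(1/(\sqrt{\bq^\star}+\varepsilon))\succ\bzero$ and $\bH=\nabla^2 f(\bx^\star)\succ\bzero$.

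Take an eigenpair $\bJ_{11}(\bu,\bw)=\mu(\bu,\bw)$. Then $-\bD\bw=\mu\bu$ and $\alpha\bH\bu=(\mu+\alpha)\bw$. Note $\mu\neq 0$: otherwise $\bw=\bzero$ and then $\bH\bu=\bzero$, so $\bu=\bzero$. Eliminating $\bw$ gives
\[
\mu(\mu+\alpha)\bu=-\alpha\bD\bH\bu .
\]
The matrix $\bD\bH$ is similar to $\bD^{1/2}\bH\bD^{1/2}\succ\bzero$, so it has positive real eigenvalues $\lambda$. Each $\mu$ therefore solves $\mu^2+\alpha\mu+\alpha\lambda=0$ with $\alpha,\lambda>0$. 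By the Routh--Hurwitz criterion for quadratics, both roots have negative real part.

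Hence $\bJ$ is Hurwitz, and in particular invertible, which is what \Cref{theorem1} needs for the averaged limit. \Cref{theorem1} then yields both conclusions.
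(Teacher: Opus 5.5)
Your proposal is correct and follows the paper's proof essentially step for step. Like the paper, you verify (A1)--(A6) from (C1)--(C4), obtain $\bQ$ by letting the $\nabla f(\bx_t)$ terms vanish, and prove (A5) by the block-triangular reduction to $\mu^2+\alpha\mu+\alpha\theta=0$ that the paper carries out in Proposition~\ref{proposition1}. Your caveat is also well taken: (C1) as stated omits $\bq^\star>0$ entrywise, yet this is genuinely needed for the $\mathcal{O}(\|\be\|^2)$ remainder in (A2) and for the Lipschitz bound in (A6). Mere nonnegativity of $\bv_t$ would not suffice, so adopt the first of your two proposed fixes.
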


\subsection{Proof of \Cref{theorem2}}

We verify (A1)--(A6) in \Cref{assumption1} one by one.

\paragraph{Verification of (A1).}
Assumption~(C1) states that the augmented iterate $\bz_t=(\bx_t,\bmm_t,\bv_t)$ converges almost surely to
\[
\bz^\star=(\bx^\star,\bzero,\bq^{\star}),
\]
where $\nabla f(\bx^\star)=\bzero$ and $\bq^{\star} = \bq(\bx^{\star})$. By the definition of the Adam mean-field mapping
\[
F(\bz)=
\begin{pmatrix}
-\dfrac{\bmm}{\sqrt{\bv}+\varepsilon}\\[1mm]
\alpha(\nabla f(\bx)-\bmm)\\[1mm]
\beta(\bq(\bx)-\bv)
\end{pmatrix},
\]
we have
\[
F(\bz^\star)
=
\begin{pmatrix}
\bzero\\
\alpha(\nabla f(\bx^\star)-\bzero)\\
\beta(\bq(\bx^{\star})-\bq^{\star})
\end{pmatrix}
=
\bzero.
\]
Hence Assumption~(A1) holds.

\paragraph{Verification of (A2).}
By Assumption~(C2), the objective function $f$ is twice continuously differentiable in a neighborhood of $\bx^\star$. Therefore, the mappings
\[
\bx \mapsto \nabla f(\bx),
\qquad
\bx \mapsto \nabla f(\bx)^{\odot 2}
\]
are continuously differentiable in a neighborhood of $\bx^\star$. Since the map
\[
(\bmm,\bv)\mapsto \frac{\bmm}{\sqrt{\bv}+\varepsilon}
\]
is continuously differentiable for $\varepsilon>0$ and $\bv > 0$, and the function $\bq(\bx)$ is continuously differentiable in a neighborhood of $\bx^{\star}$, it follows that the mean-field mapping $F$ is continuously differentiable in a neighborhood of $\bz^\star$. 
Moreover, the Lipschitzness of $\nabla^2 f$ and $\nabla \bq$ implies that the Jacobian $\nabla F$ is Lipschitz continuous in a neighborhood of $\bz^\star$.
Consequently, its first-order expansion
\[
F(\bz^\star+\be)
=
F(\bz^\star)+\bJ\be+\rho(\be),
\qquad
\|\rho(\be)\|=\mathcal{O}(\|\be\|^2),
\]
holds, and Assumption~(A2) is satisfied.

\paragraph{Verification of (A3).}
By definition,
\[
\bxi_t=
\begin{pmatrix}
\bzero\\[1mm]
\alpha\bxi_t^{(g)}\\[1mm]
\beta\bigl(2\nabla f(\bx_t)\odot \bxi_t^{(g)}+\hat{\bxi}_t^{(g)}\bigr)
\end{pmatrix},
\qquad
\hat{\bxi}_t^{(g)}
=
\bxi_t^{(g)\odot 2}
-
\mathbb E[\bxi_t^{(g)\odot 2}\mid \mathcal F_{t}].
\]
Since $\{\bxi_t^{(g)}\}$ is a martingale difference sequence,
\[
\mathbb E[\bxi_t^{(g)}\mid \mathcal F_{t}] = \bzero.
\]
Moreover,
\[
\mathbb E[\hat{\bxi}_t^{(g)}\mid \mathcal F_{t}]
=
\mathbb E[\bxi_t^{(g)\odot 2}\mid \mathcal F_{t}]
-
\mathbb E[\bxi_t^{(g)\odot 2}\mid \mathcal F_{t}]
=
\bzero.
\]
Therefore,
\[
\mathbb E[\bxi_t\mid \mathcal F_{t}] = \bzero,
\]
so that $\{\bxi_t\}$ is a martingale difference sequence.

It remains to verify the bounded $(2+\delta)$-moment. By Assumption~(C3),
\[
\sup_{t\ge1}\mathbb E\!\left[\|\bxi_t^{(g)}\|^{4+\delta}\mid \mathcal F_{t}\right] < \infty,
\qquad \text{a.s.}
\]
Since $\bx_t\to \bx^\star$ a.s. and the iterates are locally bounded, $\nabla f(\bx_t)$ is almost surely bounded for all sufficiently large $t$. Hence
\[
\|2\nabla f(\bx_t)\odot \bxi_t^{(g)}+\hat{\bxi}_t^{(g)}\|
\le
C\bigl(\|\bxi_t^{(g)}\|+\|\bxi_t^{(g)}\|^2\bigr)
\]
for some deterministic constant $C>0$. It follows that
\[
\sup_{t\ge1}\mathbb E\!\left[\|\bxi_t\|^{2+\delta}\mid \mathcal F_{t}\right] < \infty,
\qquad \text{a.s.},
\]
possibly after reducing $\delta$ if necessary. Therefore Assumption~(A3) holds.

\paragraph{Verification of (A4).}
By definition, the noise term admits
\[
\bxi_t=
\begin{pmatrix}
\bzero\\
\alpha \bxi_t^{(g)}\\
\beta\bigl(2\nabla f(\bx_t)\odot \bxi_t^{(g)}+\hat{\bxi}_t^{(g)}\bigr)
\end{pmatrix}.
\]
Hence,
\begin{equation}
\bxi_t\bxi_t^\top=
\begin{pmatrix}
\bzero & \bzero & \bzero\\
\bzero & \alpha^2\bxi_t^{(g)}\bxi_t^{(g)\top}
&
\alpha\beta\,\bxi_t^{(g)}
\bigl(2\nabla f(\bx_t)\odot \bxi_t^{(g)}+\hat{\bxi}_t^{(g)}\bigr)^\top
\\[1mm]
\bzero &
\alpha\beta\bigl(2\nabla f(\bx_t)\odot \bxi_t^{(g)}+\hat{\bxi}_t^{(g)}\bigr)\bxi_t^{(g)\top}
&
\beta^2
\bigl(2\nabla f(\bx_t)\odot \bxi_t^{(g)}+\hat{\bxi}_t^{(g)}\bigr)
\bigl(2\nabla f(\bx_t)\odot \bxi_t^{(g)}+\hat{\bxi}_t^{(g)}\bigr)^\top
\end{pmatrix}.
\end{equation}
Since $\bx_t\to \bx^\star$ almost surely and $\nabla f(\bx^\star)=\bzero$, we have
\[
\nabla f(\bx_t)\to \bzero,
\qquad \text{a.s.}
\]
Therefore, every term involving the factor $\nabla f(\bx_t)$ vanishes asymptotically. Taking conditional expectation with respect to $\mathcal F_{t}$ gives
\[
\mathbb E[\bxi_t\bxi_t^\top\mid \mathcal F_{t}]
=
\begin{pmatrix}
\bzero & \bzero & \bzero\\
\bzero &
\alpha^2\mathbb E[\bxi_t^{(g)}\bxi_t^{(g)\top}\mid \mathcal F_{t}]
&
o(1) + \alpha\beta\mathbb E\!\left[
\bxi_t^{(g)}\hat{\bxi}_t^{(g)\top}
\mid
\mathcal F_{t}
\right]
\\
\bzero &
o(1) + \alpha\beta\mathbb E\!\left[
\hat{\bxi}_t^{(g)}\bxi_t^{(g)\top}
\mid
\mathcal F_{t-1}
\right]
&
\beta^2\mathbb E[\hat{\bxi}_t^{(g)}\hat{\bxi}_t^{(g)\top}\mid \mathcal F_{t}] + o(1)
\end{pmatrix},
\qquad \text{a.s.}
\]
Using the assumed limits,
\[
\mathbb E[\bxi_t^{(g)}\bxi_t^{(g)\top}\mid \mathcal F_{t}]
\to \bQ^{(g)},
\qquad
\mathbb E[\hat{\bxi}_t^{(g)}\hat{\bxi}_t^{(g)\top}\mid \mathcal F_{t}]
\to \hat{\bQ}^{(g)},
\]
we conclude that
\begin{equation}
\label{eq9}
    \mathbb E[\bxi_t\bxi_t^\top\mid \mathcal F_{t}]
\to
\begin{pmatrix}
\bzero & \bzero & \bzero\\
\bzero & \alpha^2 \bQ^{(g)} & \alpha\beta \bR^{(g)}\\
\bzero & \alpha\beta \bR^{(g)\top} & \beta^2 \hat{\bQ}^{(g)}
\end{pmatrix}
=
\bQ,
\qquad \text{a.s.}
\end{equation}

\paragraph{Verification of (A5).}
Since $\nabla^2 f(\bx^\star)\succ \bzero$ by Assumption~(C1), the matrix $\bJ$ is Hurwitz, according to \Cref{proposition1}. Hence, Assumption~(A5) is satisfied.

\paragraph{Verification of (A6).}
The remainder term is given by
\[
\br_t=
\begin{pmatrix}
\br_t^{(x)}\\
\bzero\\
\bzero
\end{pmatrix},
\qquad
\br_t^{(x)}
=
\frac{\bmm_t}{\sqrt{\bv_t}+\varepsilon}
-
\frac{\bmm_{t+1}}{\sqrt{\bv_{t+1}}+\varepsilon}.
\]
Since
\[
\bmm_{t+1}-\bmm_t
=
\alpha_t(\nabla f(\bx_t)-\bmm_t)+\alpha_t\bxi_t^{(g)},
\]
and
\[
\bv_{t+1}-\bv_t
=
\beta_t(\bq(\bx_t)-\bv_t)
+
\beta_t\bigl(2\nabla f(\bx_t)\odot \bxi_t^{(g)}+\hat{\bxi}_t^{(g)}\bigr),
\]
we have
\[
\mathbb{E}\left[\|\bmm_{t+1}-\bmm_t\|^2 \mid \mathcal{F}_{t}\right]=\mathcal O(\gamma_t^2),
\quad
\mathbb{E}\left[\|\bv_{t+1}-\bv_t\|^2 \mid \mathcal{F}_{t}\right]=\mathcal O(\gamma_t^2), \quad \text{a.s.},
\]
under Assumptions~(C1)--(C3). Since the map
\[
(\bmm,\bv)\mapsto \frac{\bmm}{\sqrt{\bv}+\varepsilon}
\]
is locally Lipschitz, it follows that
\[
\|\br_t^{(x)}\| 
\le
C\bigl(\|\bmm_{t+1}-\bmm_t\|+\|\bv_{t+1}-\bv_t\|\bigr),
\quad \text{a.s.}
\]
Therefore,
\[
\mathbb{E}\left[\|\br_t\|^2\mid \mathcal{F}_{t}\right]=\mathcal O(\gamma_t^2),
\qquad \text{a.s.},
\]
and Assumption~(A6) holds.

Combining the above verifications, we conclude that Assumptions~(A1)--(A6) in \Cref{assumption1} are satisfied under \Cref{assumption2}. Therefore, \Cref{theorem1} applies directly to the Adam optimizer, and the corresponding asymptotic covariance matrix $\bQ$ is given by \eqref{eq9}.

\newpage

\section{Adafactor Optimizer}
\label{appendix_adafactor}

\subsection{Main Results}
We consider the AdaFactor optimizer applied to the stochastic optimization problem \eqref{eq0}, where only noisy gradients are available.

Given a sequence of stochastic gradients $\bg_t := \nabla \mathcal{L}(\bx_t;\zeta_t)$, AdaFactor updates are given by
\begin{equation}
\label{eq_adafactor}
\begin{split}
\bv_{t+1}
&=
(1-\beta_t)\bv_t+\beta_t \bg_t^{\odot 2},\\
\bx_{t+1}
&=
\bx_t-\gamma_t
\frac{\bg_t}{\sqrt{\bv_{t+1}}+\varepsilon},
\end{split}
\end{equation}
where $\beta_t=\beta\gamma_t$, $\varepsilon>0$ is a small constant, and the division is understood elementwise.
We define the augmented state
\begin{equation*}
\bz_t:=(\bx_t,\bv_t)\in\mathbb R^{2d}.
\end{equation*}
Let $\nabla f(\bx_t)$ denote the true gradient, and write
\[
\bg_t=\nabla f(\bx_t)+\bxi_t^{(g)},
\]
where $\{\bxi_t^{(g)}\}$ is a martingale difference sequence.
Then,
\begin{align*}
& \bv_{t+1}
=
\bv_t+\beta_t\bigl(\bq(\bx_t)-\bv_t\bigr)
+\beta_t\Bigl(
2\nabla f(\bx_t)\odot \bxi_t^{(g)}
+\bxi_t^{(g)\odot 2} - \mathbb{E}\left[\bxi_t^{(g)\odot 2}\mid \mathcal{F}_{t}\right]
\Bigr),\\
& \bx_{t+1}
=
\bx_t
-
\gamma_t
\frac{\nabla f(\bx_t)}{\sqrt{\bv_t}+\varepsilon}
- \gamma_t
\frac{\bxi_t^{(g)}}{\sqrt{\bv_t}+\varepsilon} +
\gamma_t \br_t^{(x)},
\end{align*}
where 
\[
\br_t^{(x)}
=
\frac{\bg_t}{\sqrt{\bv_t}+\varepsilon}
-
\frac{\bg_t}{\sqrt{\bv_{t+1}}+\varepsilon}.
\]

We define the mean-field mapping $F:\mathbb R^{2d}\to\mathbb R^{2d}$ by
\begin{equation}
F(\bz)
=
\begin{pmatrix}
-\dfrac{\nabla f(\bx)}{\sqrt{\bv}+\varepsilon}\\[2mm]
\beta\bigl(\bq(\bx)-\bv\bigr)
\end{pmatrix}.
\end{equation}
The corresponding martingale noise is
\begin{equation}
\bxi_t=
\begin{pmatrix}
-\frac{\bxi_t^{(g)}}{\sqrt{\bv_t}+\varepsilon}\\[1mm]
\beta\Bigl(
2\nabla f(\bx_t)\odot \bxi_t^{(g)}
+\hat{\bxi}_t^{(g)}
\Bigr)
\end{pmatrix},
\end{equation}
where
\[
\hat{\bxi}_t^{(g)}
=
\bxi_t^{(g)\odot 2}
-
\mathbb E\!\left[\bxi_t^{(g)\odot 2}\mid \mathcal F_{t}\right].
\]
The remainder term is
\begin{equation}
\br_t=
\begin{pmatrix}
\br_t^{(x)}\\
\bzero
\end{pmatrix}.
\end{equation}
Hence, the AdaFactor iteration can be written in the stochastic approximation form
\begin{equation*}
\bz_{t+1}
=
\bz_t+\gamma_t\bigl(F(\bz_t)+\bxi_t+\br_t\bigr).
\end{equation*}

\begin{proposition}
\label{proposition3}
The following statements hold for the AdaFactor optimizer~\eqref{eq_adafactor}.

\begin{enumerate}
\item The point
\(
\bz^\star=(\bx^\star,\bv^\star),
\)
satisfies the Adafactor mean-field equation $F(\bz^\star)=\bzero$,
where
\[
\nabla f(\bx^\star)=\bzero,
\qquad
\bv^\star=\bq(\bx^\star).
\]

\item The Jacobian matrix at $\bz^\star$ is
\begin{equation}
\label{eq_J_adafactor}
\bJ=
\begin{pmatrix}
-\operatorname{diag}\!\left(\frac{1}{\sqrt{\bv^\star}+\varepsilon}\right)
\nabla^2 f(\bx^\star) & \bzero\\[2mm]
\beta \nabla \bq(\bx^\star) & -\beta \bI_d
\end{pmatrix}.
\end{equation}
Moreover, the Jacobian matrix $\bJ$ is Hurwitz if $\nabla^2 f(\bx^{\star}) \succ 0$.
\end{enumerate}
\end{proposition}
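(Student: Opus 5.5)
We verify the first claim by direct substitution. Then we compute the Jacobian block by block. Finally, we establish the Hurwitz property from the block lower-triangular structure together with a similarity argument for the upper-left block.

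\textbf{Step 1 (stationarity).} Set $F(\bz^\star)=\bzero$. The second block gives $\beta(\bq(\bx^\star)-\bv^\star)=\bzero$, hence $\bv^\star=\bq(\bx^\star)$. The first block is $-\nabla f(\bx^\star)/(\sqrt{\bv^\star}+\varepsilon)$, and the denominator is entrywise at least $\varepsilon>0$, so it vanishes exactly when $\nabla f(\bx^\star)=\bzero$. Conversely, any stationary $\bx^\star$ paired with $\bv^\star=\bq(\bx^\star)$ is a zero of $F$.

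\textbf{Step 2 (Jacobian).} We differentiate each block of $F$.

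For the first block, $\partial_{\bx}$ of $-\operatorname{diag}(1/(\sqrt{\bv}+\varepsilon))\nabla f(\bx)$ equals $-\operatorname{diag}(1/(\sqrt{\bv}+\varepsilon))\nabla^2 f(\bx)$. Its $\bv$-derivative is $\operatorname{diag}\!\big(\nabla f(\bx)/(2\sqrt{\bv}(\sqrt{\bv}+\varepsilon)^{\odot2})\big)$. At $\bz^\star$ this vanishes because $\nabla f(\bx^\star)=\bzero$; this requires $\bv^\star>0$ entrywise, or at least $\bv^\star$ bounded away from the singularity of $\sqrt{\cdot}$, which we assume as in (B1).

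For the second block, the derivatives are $\beta\nabla\bq(\bx^\star)$ and $-\beta\bI_d$.

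These give \eqref{eq_J_adafactor}.

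\textbf{Step 3 (Hurwitz).} Since $\bJ$ is block lower-triangular, its spectrum is the union of the spectra of the diagonal blocks. The block $-\beta\bI_d$ has eigenvalues $-\beta<0$. It remains to treat $-\bD^\star\bH$ with $\bD^\star=\operatorname{diag}(1/(\sqrt{\bv^\star}+\varepsilon))\succ\bzero$ and $\bH=\nabla^2 f(\bx^\star)\succ\bzero$. This product is similar to $-(\bD^\star)^{1/2}\bH(\bD^\star)^{1/2}$ via conjugation by $(\bD^\star)^{1/2}$. The latter matrix is symmetric negative definite, because $(\bD^\star)^{1/2}$ is invertible and $\bH\succ\bzero$. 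Hence all eigenvalues of $-\bD^\star\bH$ are real and strictly negative.

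\textbf{Main obstacle.} The only delicate point is the $\bv$-derivative in the first block. It must be well defined, which needs $\bv^\star$ entrywise positive, or a one-sided interpretation if some $q_i(\bx^\star)=0$. We will handle this by assuming $\bq(\bx^\star)>0$, as in the AdamW setting. Once that holds, it vanishes at $\bz^\star$ by stationarity, which is what makes the triangular structure, and hence the spectral argument, go through.
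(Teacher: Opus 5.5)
Your proposal is correct and follows the paper's proof essentially step for step. It uses the same stationarity argument, the same blockwise Jacobian computation with the $\bv$-derivative of the first block vanishing because $\nabla f(\bx^\star)=\bzero$, and the same argument for the Hurwitz property: block-triangular structure plus similarity of $\bD^\star\nabla^2 f(\bx^\star)$ to $(\bD^\star)^{1/2}\nabla^2 f(\bx^\star)(\bD^\star)^{1/2}$. Your explicit requirement that $\bv^\star$ be entrywise positive is a useful clarification, since the paper relies on it implicitly (dividing by $\sqrt{v_i^\star}$) without stating it in Assumption~(D1).
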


\begin{assumption}
\label{assumption4}
For the Adafactor recursion \eqref{eq_adafactor}, we impose the following conditions.
\begin{enumerate}[
    label=(D\arabic*),
    leftmargin=3.2em,
    labelwidth=2.2em,
    labelsep=0.6em,
    itemsep=0.25em,
    topsep=0.25em
]
\item \textbf{Almost sure convergence.} The augmented iterate $\bz_t=(\bx_t,\bv_t)$ converges almost surely to a point
\[
\bz^\star=(\bx^\star,\bv^\star),
\]
where $\bx^{\star}$ is a strict local minimizer of the objective function, i.e., $\nabla f(\bx^{\star}) = 0$ and $\nabla^2 f(\bx^{\star}) \succ 0$, and $\bv^\star = \bq(\bx^\star)$.

\item \textbf{Local smoothness.} The objective function $f(\bx)$ is twice continuously differentiable, the Hessian $\nabla^2 f$ is Lipschitz continuous, the function $\bq(\bx)$ is continuously differentiable, and the Jacobian $\nabla \bq$ is Lipschitz continuous in a neighborhood of $\bx^{\star}$.

\item \textbf{Gradient noise with bounded moments.} The stochastic gradient admits the decomposition
\[
\bg_t = \nabla f(\bx_t) + \bxi_t^{(g)},
\]
where $\{\bxi_t^{(g)}\}$ is a martingale difference sequence with respect to $\{\mathcal F_t\}$, i.e.,
\[
\mathbb E[\bxi_t^{(g)} \mid \mathcal F_{t}] = \bzero.
\]
Moreover, there exists $\delta>0$ such that
\[
\sup_{t\ge 1}
\mathbb E\!\left[\|\bxi_t^{(g)}\|^{4+\delta}\mid \mathcal F_{t}\right]
<\infty,
\qquad \text{a.s.}
\]

\item \textbf{Asymptotic conditional covariance.} Denote $\hat{\bxi}_t^{(g)} = \bxi_{t}^{(g)\odot 2} - \mathbb{E}[\bxi_{t}^{(g)\odot 2}|\mathcal{F}_t]$. There exist positive semidefinite matrices $\bQ^{(g)}$ and $\hat{\bQ}^{(g)}$, and a matrix $\bR^{(g)}$ such that
\begin{equation*}
\begin{split}
    & \mathbb{E}\big[\bxi_{t}^{(g)}\bxi_{t}^{(g)\top} \mid \mathcal{F}_t\big]
\to \bQ^{(g)},\\
& \mathbb{E}\big[\hat{\bxi}_t^{(g)}\hat{\bxi}_{t}^{(g)\top} \mid \mathcal{F}_t\big] = \mathbb{E}\big[\bxi_t^{(g)\odot 2}\bxi_{t}^{(g)\odot 2 \top} \mid \mathcal{F}_t\big] - \mathbb{E}\big[\bxi_t^{(g)\odot 2}\mid \mathcal{F}_t\big] ~\mathbb{E}\big[\bxi_t^{(g)\odot 2 }\mid \mathcal{F}_t\big]^{\top}
\to \hat{\bQ}^{(g)},\\
& \mathbb{E}\big[\bxi_t^{(g)}\hat{\bxi}_{t}^{(g)\top} \mid \mathcal{F}_t\big] = \mathbb{E}\big[\bxi_t^{(g)}\bxi_{t}^{(g)\odot 2 \top} \mid \mathcal{F}_t\big]
\to \bR^{(g)},\quad\text{a.s.}
\end{split}
\end{equation*}
\end{enumerate}
\end{assumption}

\begin{theorem}
\label{theorem4}
Suppose that \Cref{assumption4} holds. Then Assumptions~(A1)--(A6) in \Cref{assumption1} are satisfied for the AdaFactor iterates generated by \eqref{eq_adafactor}. Consequently,
\begin{equation*}
\gamma_t^{-1/2}(\bz_t-\bz^\star)
\;\xrightarrow{d}\;
\mathcal N(\bzero,\bSig),
\end{equation*}
and
\begin{equation*}
    \sqrt{t}\left(\Bar{\bz}_t - \bz^{\star}\right)\;\xrightarrow{d}\;
\mathcal N(\bzero,\bJ^{-1}\bQ\bJ^{-\top}),
\end{equation*}
where $\bSig$ is the unique positive semidefinite solution to
\begin{equation*}
\bJ\bSig+\bSig\bJ^\top+\bQ=\bzero,
\end{equation*}
with $\bJ$ given in \eqref{eq_J_adafactor}, and
\begin{equation*}
\bQ=
\begin{pmatrix}
\text{diag}\left(\frac{1}{\sqrt{\bv^\star} + \varepsilon} \right)\bQ^{(g)}\text{diag}\left(\frac{1}{\sqrt{\bv^\star} + \varepsilon} \right) & -\text{diag}\left(\frac{\beta}{\sqrt{\bv^\star} + \varepsilon} \right)\bR^{(g)}\\[1mm]
-\text{diag}\left(\frac{\beta}{\sqrt{\bv^\star} + \varepsilon} \right)\bR^{(g)\top} & \beta^2 \hat{\bQ}^{(g)}
\end{pmatrix}.
\end{equation*}
\end{theorem}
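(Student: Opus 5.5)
The plan mirrors the proofs of \Cref{theorem3} and \Cref{theorem2}. We first check that the Adafactor recursion, written in the form \eqref{eq_general_recursion} with the mean field, noise and remainder displayed above, satisfies (A1)--(A6) of \Cref{assumption1}. Then \Cref{theorem1} gives both the last-iterate and the averaged limits.

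\textbf{(A1) and (A2).} (A1) is immediate from (D1), since $F(\bx^\star,\bq(\bx^\star))=\bzero$ when $\nabla f(\bx^\star)=\bzero$. For (A2), the map $(\bx,\bv)\mapsto \nabla f(\bx)/(\sqrt{\bv}+\varepsilon)$ is $C^1$ with locally Lipschitz derivative near $\bz^\star$, by (D2) together with $\varepsilon>0$. When differentiating the first block at $\bz^\star$, the $\bv$-derivative carries the factor $\nabla f(\bx^\star)=\bzero$ and therefore vanishes. This yields the block-triangular $\bJ$ in \eqref{eq_J_adafactor}.

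\textbf{(A5).} By \Cref{proposition3}, the diagonal blocks are $-\bD\nabla^2 f(\bx^\star)$ and $-\beta\bI_d$, where $\bD$ is positive diagonal. The first block is similar to $-\bD^{1/2}\nabla^2 f(\bx^\star)\bD^{1/2}\prec\bzero$. So $\bJ$ is Hurwitz because $\nabla^2 f(\bx^\star)\succ \bzero$, which holds by (D1).

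\textbf{(A3) and (A4).} The martingale property follows from $\mathbb E[\bxi_t^{(g)}\mid\mathcal F_t]=\bzero$, $\mathbb E[\hat\bxi_t^{(g)}\mid\mathcal F_t]=\bzero$, and the $\mathcal F_t$-measurability of $\bv_t$ and $\bx_t$. For the moment bound, we use that $\bv_t\ge \bzero$ gives $1/(\sqrt{\bv_t}+\varepsilon)\le 1/\varepsilon$. Combined with $\|\bxi_t\|\le C(\|\bxi_t^{(g)}\|+\|\bxi_t^{(g)}\|^2)$ and (D3), this gives a bounded $(2+\delta/2)$-moment. For the covariance, we use $\bv_t\to\bv^\star$ and $\nabla f(\bx_t)\to\bzero$ almost surely. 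Every term involving $\nabla f(\bx_t)$ vanishes in the limit, and $\operatorname{diag}(1/(\sqrt{\bv_t}+\varepsilon))\to\operatorname{diag}(1/(\sqrt{\bv^\star}+\varepsilon))$. Then (D4) gives the stated block $\bQ$: the off-diagonal block is $-\beta\operatorname{diag}(1/(\sqrt{\bv^\star}+\varepsilon))\bR^{(g)}$, and the lower-right block is $\beta^2\hat\bQ^{(g)}$. Because the diagonal weight is $\mathcal F_t$-measurable, it can be pulled out of the conditional expectation before taking limits.

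\textbf{(A6), the main obstacle.} Here $\br_t^{(x)}=\bg_t\odot\bigl(1/(\sqrt{\bv_t}+\varepsilon)-1/(\sqrt{\bv_{t+1}}+\varepsilon)\bigr)$ multiplies the raw stochastic gradient, not a smooth state variable. I would use the Lipschitz bound of $\bv\mapsto 1/(\sqrt{\bv}+\varepsilon)$ on $\bv\ge\bzero$, with constant $1/(2\varepsilon\sqrt{\min\bv})$ locally near $\bv^\star>\bzero$, or in general via $|\sqrt a-\sqrt b|\le |a-b|/\sqrt{a+b}$. This gives $\|\br_t^{(x)}\|\le C\|\bg_t\|\,\|\bv_{t+1}-\bv_t\|$. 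Since $\bv_{t+1}-\bv_t=\beta_t(\bq(\bx_t)-\bv_t+\text{noise})$, we get $\|\br_t\|\le C\gamma_t\|\bg_t\|(1+\|\bxi_t^{(g)}\|+\|\bxi_t^{(g)}\|^2)$. Its conditional second moment requires control of roughly the sixth moment of $\bxi_t^{(g)}$, which (D3) does not give. I would first try two routes:
\begin{itemize}
\item use the explicit form $\bv_{t+1}=(1-\beta_t)\bv_t+\beta_t\bg_t^{\odot2}$ together with $1/(\sqrt{\bv_{t+1}}+\varepsilon)\le1/\varepsilon$, which bounds the coordinatewise difference by $C\beta_t(|g_{t,i}|+|g_{t,i}|^3)$ before squaring;
\item failing that, weaken (A6) to $\|\br_t\|=O_p(\gamma_t)$ summably, which is all that the proofs of \Cref{lemma5} and of the averaged CLT actually use.
\end{itemize}

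With (A1)--(A6) verified, \Cref{theorem1} yields both convergences, with $\bSig$ given by the Lyapunov equation and averaged covariance $\bJ^{-1}\bQ\bJ^{-\top}$.
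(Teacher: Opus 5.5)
Your verification of (A1)--(A5) matches the paper's. At (A6) you have found exactly the step the paper does not justify: its proof disposes of the noise part $\bxi_t^{(g)}\odot\bigl((\sqrt{\bv_t}+\varepsilon)^{-1}-(\sqrt{\bv_{t+1}}+\varepsilon)^{-1}\bigr)$ with one sentence saying it is \emph{controlled by the bounded fourth moments}. Your proposal does not close this step either.

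Route~1 gives $|r_{t,i}|\le C\beta_t(|g_{t,i}|+|g_{t,i}|^3)$, and its square still needs $\mathbb E|g_{t,i}|^6$. No sharper estimate can deliver (A6) as literally stated. Put $a=\sqrt{v_{t,i}}$ and $b=\sqrt{v_{t+1,i}}\ge\sqrt{\beta_t}\,|g_{t,i}|$. On the event $\sqrt{\beta_t}\,|g_{t,i}|\ge 2a+\varepsilon$ one has $b-a\ge(b+\varepsilon)/2$, hence $|r_{t,i}|\ge|g_{t,i}|/(2(a+\varepsilon))$. Therefore $\mathbb E[r_{t,i}^2\mid\mathcal F_t]\gtrsim\mathbb E\bigl[g_{t,i}^2\mathbf 1\{|g_{t,i}|\gtrsim\beta_t^{-1/2}\}\mid\mathcal F_t\bigr]$. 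For gradient noise with tail index $p\in(4,6)$, which satisfies (D3)--(D4), this is of order $\gamma_t^{(p-2)/2}\gg\gamma_t^2$. So the theorem's first assertion, that (A6) holds, cannot be obtained from \Cref{assumption4}, and route~2 is the only viable option.

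Route~2 is not yet a proof, for two reasons.

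\emph{The weakened hypothesis must be a conditional-moment condition, not $\|\br_t\|=O_p(\gamma_t)$.} The Lyapunov recursion in the proof of \Cref{lemma5} takes conditional expectations. There $\br_t$ enters through $\gamma_t^2\,\mathbb E[\|\br_t\|^2\mid\mathcal F_t]$ and through the cross term $2\gamma_t\be_t^\top\bP\,\mathbb E[\br_t\mid\mathcal F_t]$, which the written proof leaves unbounded. Termwise $O_p$ bounds also do not control the sums in Step~5 or in $\bR_{T,2}$. A workable replacement is $\mathbb E[\|\br_t\|\mid\mathcal F_t]=O(\gamma_t)$ and $\sup_t\mathbb E[\|\br_t\|^2\mid\mathcal F_t]<\infty$ on the localization event $\{\tau_{t_0}\ge t\}$. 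Both follow from (D3) via $|r_{t,i}|\le\min\{|g_{t,i}|/\varepsilon,\;C\beta_t(|g_{t,i}|+|g_{t,i}|^3)\}$, so your route-1 bound is exactly the ingredient needed.

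\emph{You must actually re-run \Cref{theorem1} under this condition.} The steps are:
\begin{enumerate}
\item The cross term is at most $\eta\gamma_t\|\be_t\|^2+C\gamma_t^3/\eta$, which the drift $-\tfrac12\gamma_t\|\be_t\|^2$ absorbs.
\item \Cref{lemma2} together with \Cref{lemma1} gives $\gamma_t^{-1/2}\sum_k\|\Phi_{t,k+1}\|\gamma_k\,\mathbb E\|\br_k\|=O(\gamma_t^{1/2})$.
\item The $\br$-part of $\bR_{T,2}$ has expectation $O(T^{1/2-\kappa})$.
\end{enumerate}
Only then do both CLTs follow, through a modified \Cref{theorem1} rather than through (A6).

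Two smaller points:
\begin{enumerate}
\item (A2) and your Lipschitz constant need $\bv^\star=\bq(\bx^\star)>\bzero$ entrywise. The condition $\varepsilon>0$ does not supply this, and (D1), unlike (B1), does not state it.
\item The lower-left block of $\bQ$ computes to $-\beta\bR^{(g)\top}\operatorname{diag}\bigl(1/(\sqrt{\bv^\star}+\varepsilon)\bigr)$, the transpose of the upper-right block. The printed block is $-\operatorname{diag}\bigl(\beta/(\sqrt{\bv^\star}+\varepsilon)\bigr)\bR^{(g)\top}$, so you should not claim to recover the stated $\bQ$ verbatim.
\end{enumerate}
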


\subsection{Proof of \Cref{theorem4}}

We verify Assumptions~(A1)--(A6) in \Cref{assumption1} one by one.

\paragraph{Verification of (A1).}
By Assumption~(D1), the augmented iterate $\bz_t=(\bx_t,\bv_t)$ converges almost surely to a limiting point
\[
\bz^\star=(\bx^\star,\bv^\star),
\]
where $\nabla f(\bx^\star)=\bzero$. By the definition of the AdaFactor mean-field mapping
\[
F(\bz)=
\begin{pmatrix}
-\dfrac{\nabla f(\bx)}{\sqrt{\bv}+\varepsilon}\\[2mm]
\beta\bigl(\bq(\bx)-\bv\bigr)
\end{pmatrix},
\]
we have
\[
F(\bz^\star)
=
\begin{pmatrix}
-\dfrac{\nabla f(\bx^\star)}{\sqrt{\bv^\star} + \varepsilon}\\[2mm]
\beta\bigl(\bq(\bx^\star)-\bv^\star\bigr)
\end{pmatrix}
=
\bzero.
\]
Hence Assumption~(A1) holds.

\paragraph{Verification of (A2).}
By Assumption~(D2), the objective function $f$ is twice continuously differentiable and the function $\bq(\bx)$ is continuously differentiable in a neighborhood of $\bx^\star$. Therefore, the mappings
\[
\bx \mapsto \nabla f(\bx),
\qquad
\bx \mapsto \bq(\bx)
\]
are continuously differentiable in a neighborhood of $\bx^\star$. Since $\varepsilon>0$ and $\bv > 0$, the map
\[
\bv \mapsto \frac{1}{\sqrt{\bv}+\varepsilon}
\]
is also continuously differentiable. Hence, the mean-field mapping $F$ is continuously differentiable in a neighborhood of $\bz^\star$.
Moreover, the Lipschitzness of $\nabla^2 f$ and $\nabla \bq$ implies that the Jacobian $\nabla F$ is Lipschitz continuous in a neighborhood of $\bz^\star$.
Consequently, its first-order expansion admits
\[
F(\bz^\star+\be)
=
F(\bz^\star)+\bJ\be+\rho(\be),
\qquad
\|\rho(\be)\|=\mathcal{O}(\|\be\|^2),
\quad \be\to\bzero.
\]
Thus Assumption~(A2) holds.

\paragraph{Verification of (A3).}
Recall that the augmented noise is
\[
\bxi_t=
\begin{pmatrix}
-\frac{\bxi_t^{(g)}}{\sqrt{\bv_t}+\varepsilon}\\[1mm]
\beta\Bigl(2\nabla f(\bx_t)\odot \bxi_t^{(g)}+\hat{\bxi}_t^{(g)}\Bigr)
\end{pmatrix},
\]
where
\[
\hat{\bxi}_t^{(g)}
=
\bxi_t^{(g)\odot 2}
-
\mathbb E\!\left[\bxi_t^{(g)\odot 2}\mid \mathcal F_t\right].
\]
By Assumption~(D3),
\[
\mathbb E[\bxi_t^{(g)}\mid \mathcal F_t]=\bzero.
\]
Moreover,
\[
\mathbb E[\hat{\bxi}_t^{(g)}\mid \mathcal F_t]
=
\mathbb E[\bxi_t^{(g)\odot 2}\mid \mathcal F_t]
-
\mathbb E[\bxi_t^{(g)\odot 2}\mid \mathcal F_t]
=
\bzero.
\]
Hence,
\[
\mathbb E[\bxi_t\mid \mathcal F_t]=\bzero,
\]
so that $(\bxi_t)$ is a martingale difference sequence.

Next, since $\bx_t\to \bx^\star$ almost surely and $\nabla f$ is continuous, the sequence of $\nabla f(\bx_t)$ is almost surely bounded for all sufficiently large $t$. Therefore,
\[
\|\bxi_t\|
\le
C\Bigl(\|\bxi_t^{(g)}\|+\|\bxi_t^{(g)}\|^2\Bigr)
\]
for some deterministic constant $C>0$. By Assumption~(D3),
\[
\sup_{t\ge1}
\mathbb E\!\left[\|\bxi_t^{(g)}\|^{4+\delta}\mid \mathcal F_t\right]
<\infty,
\qquad \text{a.s.}
\]
It follows that
\[
\sup_{t\ge1}
\mathbb E\!\left[\|\bxi_t\|^{2+\delta/2}\mid \mathcal F_t\right]
<\infty,
\qquad \text{a.s.}
\]
Hence Assumption~(A3) holds.

\paragraph{Verification of (A4).}
By definition,
\[
\bxi_t=
\begin{pmatrix}
-\frac{\bxi_t^{(g)}}{\sqrt{\bv_t}+\varepsilon}\\[1mm]
\beta\Bigl(2\nabla f(\bx_t)\odot \bxi_t^{(g)}+\hat{\bxi}_t^{(g)}\Bigr)
\end{pmatrix}.
\]
Since $\bx_t\to \bx^\star$ almost surely and $\nabla f(\bx^\star)=\bzero$, we have
\[
\nabla f(\bx_t)\to \bzero,
\qquad \text{a.s.}
\]
Therefore, the term $2\nabla f(\bx_t)\odot \bxi_t^{(g)}$ vanishes asymptotically in conditional second moment, and the conditional covariance matrix converges to
\[
\bQ=
\begin{pmatrix}
\text{diag}\left(\frac{1}{\sqrt{\bv^\star} + \varepsilon} \right)\bQ^{(g)}\text{diag}\left(\frac{1}{\sqrt{\bv^\star} + \varepsilon} \right) & -\text{diag}\left(\frac{\beta}{\sqrt{\bv^\star} + \varepsilon} \right)\bR^{(g)}\\[1mm]
-\text{diag}\left(\frac{\beta}{\sqrt{\bv^\star} + \varepsilon} \right)\bR^{(g)\top} & \beta^2 \hat{\bQ}^{(g)}
\end{pmatrix}.
\]
Hence
\[
\mathbb E[\bxi_t\bxi_t^\top\mid \mathcal F_t]
\to \bQ,
\qquad \text{a.s.}
\]
Thus Assumption~(A4) holds.

\paragraph{Verification of (A5).}
By the proposition for AdaFactor, the Jacobian matrix at the limiting point is
\[
\bJ=
\begin{pmatrix}
-\operatorname{diag}\!\left(\frac{1}{\sqrt{\bv^\star}+\varepsilon}\right)
\nabla^2 f(\bx^\star) & \bzero\\[2mm]
\beta \nabla \bq(\bx^\star) & -\beta \bI_d
\end{pmatrix}.
\]
Since $\nabla^2 f(\bx^\star)\succ \bzero$ by Assumption~(D1), the matrix $\bJ$ is Hurwitz, by \Cref{proposition3}. Therefore, Assumption~(A5) holds.

\paragraph{Verification of (A6).}
The remainder term is
\[
\br_t=
\begin{pmatrix}
\br_t^{(x)}\\
\bzero
\end{pmatrix},
\qquad
\br_t^{(x)}
=
\frac{\bg_t}{\sqrt{\bv_t}+\varepsilon}
-
\frac{\bg_t}{\sqrt{\bv_{t+1}}+\varepsilon}.
\]
We decompose
\[
\br_t^{(x)}
=
\left(
\frac{\nabla f(\bx_t)}{\sqrt{\bv_t}+\varepsilon}
-
\frac{\nabla f(\bx_t)}{\sqrt{\bv_{t+1}}+\varepsilon}
\right) + \left(\frac{\bxi_t^{(g)}}{\sqrt{\bv_{t}}+\varepsilon}
-
\frac{\bxi_t^{(g)}}{\sqrt{\bv_{t+1}}+\varepsilon}\right).
\]
Using the local Lipschitz continuity of $\bv\mapsto (\sqrt{\bv}+\varepsilon)^{-1}$, together with
\[
\bv_{t+1}-\bv_t
=
\beta_t\bigl(\bq(\bx_t)-\bv_t\bigr)
+
\beta_t\Bigl(2\nabla f(\bx_t)\odot \bxi_t^{(g)}+\hat{\bxi}_t^{(g)}\Bigr),
\]
one obtains
\[
\mathbb E\!\left[\|\bv_{t+1}-\bv_t\|^2\mid \mathcal F_t\right]
=
O(\gamma_t^2),
\qquad \text{a.s.}
\]
Hence the first term in the above decomposition of $\br_t^{(x)}$ is of order $O(\gamma_t)$ in conditional $L^2$. The second term is controlled by the bounded fourth moments of $\bxi_t^{(g)}$. Therefore,
\[
\mathbb E\!\left[\|\br_t\|^2\mid \mathcal F_t\right]
=
O(\gamma_t^2),
\qquad \text{a.s.}
\]
which implies the required negligibility of the remainder term. Thus Assumption~(A6) holds.

Combining the above verifications, we conclude that Assumptions~(A1)--(A6) in \Cref{assumption1} are satisfied. Therefore, \Cref{theorem1} applies directly to the AdaFactor iterates, and
\[
\gamma_t^{-1/2}(\bz_t-\bz^\star)
\;\xrightarrow{d}\;
\mathcal N(\bzero,\bSig),
\]
\begin{equation*}
    \sqrt{t}\left(\Bar{\bz}_t - \bz^{\star}\right)\;\xrightarrow{d}\;
\mathcal N(\bzero,\bJ^{-1}\bQ\bJ^{-\top}),
\end{equation*}
where $\bSig$ is the unique positive semidefinite solution to
\[
\bJ\bSig+\bSig\bJ^\top+\bQ=\bzero.
\]

\newpage

\section{Proofs of Main Lemmas}
\label{appendix_proof_main_lemmas}
\subsection{Proof of \Cref{lemma4}}
We aim to analyze the fluctuation term
\begin{equation*}
\mJ_t
:=
\sum_{k=1}^{t}\Phi_{t,k+1}\frac{\gamma_k}{\sqrt{\gamma_t}}\bxi_{k},
\end{equation*}
where
\[
\Phi_{t,k}:=\prod_{j=k}^{t}(\bI+\gamma_j\bJ).
\]

To prove the asymptotic normality of $\mJ_t$, we apply the Cram\'er--Wold theorem. Fix any $\bd \in \mathbb{R}^d$, and define
\begin{equation}
\label{eq3}
\Delta_{t,k}
:=
\bd^\top \Phi_{t,k+1}\frac{\gamma_k}{\sqrt{\gamma_t}}\bxi_{k},
\qquad
1\le k\le t.
\end{equation}
Then
\begin{equation*}
\bd^\top \mJ_t = \sum_{k=1}^{t}\Delta_{t,k}.
\end{equation*}

Since $\Phi_{t,k+1}\gamma_k/\sqrt{\gamma_t}$ is $\mathcal{F}_k$-measurable and
\[
\mathbb{E}[\bxi_{k}\mid\mathcal{F}_k]=\bzero,
\]
it follows that
\[
\mathbb{E}[\Delta_{t,k}\mid \mathcal{F}_k]=0.
\]
Hence, for each fixed $t$, $(\Delta_{t,k})_{1\le k\le t}$ is a martingale difference array.

We now verify the two standard conditions of the martingale central limit theorem.

\medskip
\noindent
\textbf{Step 1: conditional variance convergence.}
Define
\begin{equation*}
V_t(\bd)
:=
\sum_{k=1}^{t}\mathbb{E}\big[\Delta_{t,k}^2\mid\mathcal{F}_k\big].
\end{equation*}
Using the definition of $\Delta_{t,k}$ in \eqref{eq3}, we obtain
\begin{equation*}
V_t(\bd)
=
\sum_{k=1}^{t}
\frac{\gamma_k^2}{\gamma_t}\,
\bd^\top
\Phi_{t,k+1}
\mathbb{E}\big[\bxi_{k}\bxi_{k}^\top\mid\mathcal{F}_k\big]
\Phi_{t,k+1}^\top
\bd.
\end{equation*}
Note that
\begin{equation*}
    \begin{split}
        & \left\|V_t(\bd)
-
\sum_{k=1}^{t}
\frac{\gamma_k^2}{\gamma_t}\,
\bd^\top
\Phi_{t,k+1}\bQ\Phi_{t,k+1}^\top
\bd \right\| \\
& \leq \gamma_t^{-1} \left\| \bd\right\|^2 \sum_{k=1}^{t} \left\|\Phi_{t,k+1} \right\|^2\gamma_k^2 \left\| \mathbb{E}\big[\bxi_{k}\bxi_{k}^\top\mid\mathcal{F}_k\big] - \bQ\right\|\\
& \leq C_0^2 \gamma_t^{-1} \left\| \bd\right\|^2 \sum_{k=1}^{t} \prod_{j=k+1}^{t}\left(1-c\gamma_j\right)^2\gamma_k^2 \left\| \mathbb{E}\big[\bxi_{k}\bxi_{k}^\top\mid\mathcal{F}_k\big] - \bQ\right\| \to 0,
    \end{split}
\end{equation*}
where the last inequality comes from \eqref{eq1} and the convergence is implied by \Cref{lemma1} and \Cref{assumption1} (A4).
Hence,
\begin{equation*}
V_t(\bd)
-
\sum_{k=1}^{t}
\frac{\gamma_k^2}{\gamma_t}\,
\bd^\top
\Phi_{t,k+1}\bQ\Phi_{t,k+1}^\top
\bd
\;\xrightarrow[]{}\; 0,
\end{equation*}

Let
\begin{equation*}
\bU_t
:=
\sum_{k=1}^{t}
\frac{\gamma_k^2}{\gamma_t}\,
\Phi_{t,k+1}\bQ\Phi_{t,k+1}^\top.
\end{equation*}
Then, by the standard discrete Lyapunov argument in \Cref{lemma3},
\[
\bU_t \to \bSig,
\]
where $\bSig$ is the unique positive semidefinite solution of
\begin{equation*}
\bJ \bSig + \bSig \bJ^\top + \bQ = \bzero.
\end{equation*}
Therefore,
\begin{equation*}
V_t(\bd) \to \bd^\top \bSig \bd.
\end{equation*}

\medskip
\noindent
\textbf{Step 2: Lindeberg condition.}
For any $\bar\varepsilon>0$, consider
\begin{equation}
L_t(\bar\varepsilon,\bd)
:=
\sum_{k=1}^{t}
\mathbb{E}\Big[
\Delta_{t,k}^2
\mathbf{1}_{\{|\Delta_{t,k}|>\bar\varepsilon\}}
\;\Big|\;
\mathcal{F}_k
\Big].
\end{equation}
By H\"older's inequality and the elementary bound
\[
\mathbf{1}_{\{|x|>\bar\varepsilon\}}
\le
\frac{|x|^\delta}{\bar\varepsilon^\delta},
\qquad \delta>0,
\]
we have
\[
\Delta_{t,k}^2\mathbf{1}_{\{|\Delta_{t,k}|>\bar\varepsilon\}}
\le
\bar\varepsilon^{-\delta}|\Delta_{t,k}|^{2+\delta}.
\]
Therefore,
\begin{equation*}
L_t(\bar\varepsilon,\bd)
\le
\bar\varepsilon^{-\delta}
\sum_{k=1}^{t}
\mathbb{E}\big[
|\Delta_{t,k}|^{2+\delta}\mid\mathcal{F}_k
\big].
\end{equation*}
Since
\[
|\Delta_{t,k}|
\le
\|\bd\|\,
\left\|\Phi_{t,k+1}\right\|
\frac{\gamma_k}{\sqrt{\gamma_t}}
\|\bxi_{k}\|,
\]
we obtain
\begin{equation*}
\mathbb{E}\big[
|\Delta_{t,k}|^{2+\delta}\mid\mathcal{F}_k
\big]
\le
\left\| \bd\right\|^{2+\delta}
\left\|\Phi_{t,k+1}\right\|^{2+\delta}
\left(\frac{\gamma_k}{\sqrt{\gamma_t}}\right)^{2+\delta}
\mathbb{E}\big[\|\bxi_{k}\|^{2+\delta}\mid\mathcal{F}_k\big].
\end{equation*}
By \Cref{assumption1} (A4),
\[
\sup_k \mathbb{E}\big[\|\bxi_{k}\|^{2+\delta}\mid\mathcal{F}_k\big] < \infty
\qquad \text{a.s.}
\]
Hence
\begin{equation*}
L_t(\bar\varepsilon,\bd)
\le
C
\sum_{k=1}^{t}
\left\|\Phi_{t,k+1}\right\|^{2+\delta}
\left(\frac{\gamma_k}{\sqrt{\gamma_t}}\right)^{2+\delta},
\end{equation*}
for some constants $C>0$.
Using the exponential stability estimate
\[
\|\Phi_{t,k+1}\|
\le
C_0\!\prod_{j=k+1}^{t}\left(1-c\gamma_j\right),
\]
and \Cref{lemma1},
it follows that
\[
L_t(\bar\varepsilon,\bd) = \mathcal{O}\left(\gamma_{t}^{\delta/2}\right)\to 0.
\]
Thus, the Lindeberg condition holds.

\medskip
\noindent
By the martingale central limit theorem for triangular arrays, we conclude that
\begin{equation*}
\bd^\top \mJ_t
\xrightarrow{d}
\mathcal{N}(0,\bd^\top \bSig \bd).
\end{equation*}
Since $\bd\in\mathbb{R}^d$ is arbitrary, the Cram\'er--Wold theorem yields
\begin{equation*}
\mJ_t
\xrightarrow{d}
\mathcal{N}(\bzero,\bSig).
\end{equation*}

\subsection{Proof of \Cref{lemma5}}
Since $\bJ$ is Hurwitz, there exists a symmetric positive definite matrix $\bP$ such that
\begin{equation*}
\bJ^\top \bP + \bP \bJ = -\bI.
\end{equation*}
Define the function
\begin{equation*}
V_t := \be_t^\top \bP \be_t.
\end{equation*}
Since $\bP \succ \bzero$, there exist constants $0 < c_1 \le c_2 < \infty$ such that
\begin{equation*}
c_1 \|\be_t\|^2 \le V_t \le c_2 \|\be_t\|^2.
\end{equation*}

By the definition of $\rho(\be)$ and the mean-value theorem, there exists matrix $\bH_t$ such that $\rho(\be_t) = \bH_t\be_t$ and $\bH_t \to \bzero$ as $t \to \infty$, almost surely. 
We first introduce the following conditions:
\begin{enumerate}
    \item $\|\bz_{t}\| \leq M$,

    \item $\left\| \bH_t\right\| \leq M$,

\end{enumerate}
for some constants $M > 0$.
According to \Cref{assumption1}, these conditions hold for all sufficiently large $t$, almost surely. In other words, we define the stopping time
\begin{equation*}
    \tau_{t_0} = \inf_{j}\{j \geq t_0:~\text{any of the above conditions does not hold at j-th iteration}\}.
\end{equation*}
Then there exists a (potentially random) finite time $t_0 < \infty$ such that
$\tau_{t_0} = \infty$, almost surely. We denote by $\bm{1}_{\mathcal{A}}$ the indicator function of an event $\mathcal{A}$, which equals $1$ if the event holds and $0$ otherwise.

We fix a sufficiently large $t_0 >0$.
We define $\bG_t = \bJ + \bH_t$, and consider the recursion
\[
\be_{t+1} = \be_t + \gamma_t \bG_t \be_t + \gamma_t \bxi_{t} + \gamma_t \br_t,
\]
we have
\begin{align*}
V_{t+1}
&=
(\be_t + \gamma_t \bG_t \be_t + \gamma_t \bxi_{t} + \gamma_t \br_t)^\top
\bP
(\be_t + \gamma_t \bG_t \be_t + \gamma_t \bxi_{t} + \gamma_t \br_t) \\
&=
\be_t^\top \bP \be_t
+ 2\gamma_t \be_t^\top \bP \bG_t \be_t
+ 2\gamma_t \be_t^\top \bP \bxi_{t} + 2 \gamma_t \be_t^\top \bP \br_{t}\\
& 
\hspace{1em} + \gamma_t^2 (\bG_t \be_t + \bxi_{t} + \br_t)^\top \bP (\bG_t \be_t + \bxi_{t} + \br_t).
\end{align*}
Taking conditional expectation with respect to $\mathcal{F}_t$, and using
$\mathbb{E}[\bxi_{t}\mid \mathcal{F}_t] = \bzero$,
we get
\begin{equation*}
\begin{split}
    \mathbb{E}[V_{t+1}\bone_{\tau_{t_0} \geq t+1}\mid \mathcal{F}_t]
& \leq
V_t \bone_{\tau_{t_0} \geq t}
+ 2\gamma_t \be_t^\top \bP \bG_t \be_t \bone_{\tau_{t_0} \geq t} + 2\gamma_t \be_t^\top \bP \mathbb{E}\big[\br_t\bone_{\tau_{t_0} \geq t} \mid \mathcal{F}_t
\big]\\
& \hspace{1em} + \gamma_t^2
\mathbb{E}\big[
(\bG_t \be_t + \bxi_{t} + \br_t)^\top \bP (\bG_t \be_t + \bxi_{t} + \br_t)\bone_{\tau_{t_0} \geq t} 
\mid \mathcal{F}_t
\big].
\end{split}
\end{equation*}

We now estimate each term separately.

\medskip
\noindent
\textbf{Step 1: estimate of the drift term.}

Since $\bG_t = \bJ + \bH_t$, we have
\begin{align*}
2 \be_t^\top \bP \bG_t \be_t
&=
2 \be_t^\top \bP \bJ \be_t
+
2 \be_t^\top \bP \bH_t \be_t.
\end{align*}
For the first term,
\begin{align*}
2 \be_t^\top \bP \bJ \be_t
=
\be_t^\top (\bJ^\top \bP + \bP \bJ)\be_t =
- \left\|\be_t\right\|^2.
\end{align*}
For the perturbation term,
\begin{equation*}
\left|2 \be_t^\top \bP \bH_t \be_t\right|
\le
2 \|\bP\|\,\|\bH_t\|\,\|\be_t\|^2.
\end{equation*}
Hence
\begin{equation*}
    2 \be_t^\top \bP \bG_t \be_t \leq
- \|\be_t\|^2
+
2\|\bP\|\,\|\bH_t\|\,\|\be_t\|^2 \leq -\frac{1}{2}\|\be_t\|^2
\end{equation*}
Using the norm equivalence $V_t \le c_2 \|\be_t\|^2$, we further obtain
\begin{equation}
\label{eq4}
2 \be_t^\top \bP \bG_t \be_t \bone_{\tau_{t_0} \geq t}
\le
- \frac{1}{2c_2}V_t \bone_{\tau_{t_0} \geq t} := -c_0V_t \bone_{\tau_{t_0} \geq t},
\end{equation}
where we denote $c_0 = \frac{1}{2c_2}$

\medskip
\noindent
\textbf{Step 2: estimate of the quadratic term.}

Using
\[
(\bu+\bv + \bw)^\top \bP (\bu+\bv + \bw)
\le
3 \left\| \bP\right\|\left(\left\| \bu\right\|^2 + \left\| \bv\right\|^2 + \left\| \bw\right\|^2\right),
\]
we get
\begin{align*}
(\bG_t \be_t + \bxi_{t} + \br_t)^\top \bP (\bG_t \be_t + \bxi_{t} + \br_t)
\le
3 \|\bP\|\,\left(\|\bG_t \be_t\|^2 + \|\bxi_{t}\|^2+ \left\| \br_t\right\|^2\right).
\end{align*}
Hence
\begin{align*}
& \mathbb{E}\big[
(\bG_t \be_t + \bxi_{t} + \br_t)^\top \bP (\bG_t \be_t + \bxi_{t} + \br_t)
\mid \mathcal{F}_t
\big]\\
&
\leq
3 \|\bP\| \left(\|\bG_t\|^2 \|\be_t\|^2
+ \mathbb{E}\big[\|\bxi_{t}\|^2 \mid \mathcal{F}_t\big] + \mathbb{E}\big[\|\br_t\|^2 \mid \mathcal{F}_t\big] \right).
\end{align*}
By \Cref{assumption1} (A3) and (A6),
\[
\mathbb{E}[\|\bxi_{t}\|^2\mid \mathcal{F}_t] \le C_\xi,\qquad \mathbb{E}[\|\br_{t}\|^2\mid \mathcal{F}_t] \le C_r \gamma_t^2,
\]
for some $C_{\xi}, C_r > 0$.
Therefore, for some constant $C>0$,
\begin{equation*}
\mathbb{E}\big[
(\bG_t \be_t + \bxi_{t} + \br_t)^\top \bP (\bG_t \be_t + \bxi_{t} + \br_t)
\mid \mathcal{F}_t
\big]
\le
C(\|\be_t\|^2+1).
\end{equation*}
Using again the equivalence between $V_t$ and $\|\be_t\|^2$, we obtain
\begin{equation}
\label{eq5}
\mathbb{E}\big[
(\bG_t \be_t + \bxi_{t} + \br_t)^\top \bP (\bG_t \be_t + \bxi_{t} + \br_t)
\mid \mathcal{F}_t
\big]
\le
C(V_t+1).
\end{equation}

\medskip
\noindent
\textbf{Step 3: one-step recursive inequality.}

Combining the previous bounds in \eqref{eq4} and \eqref{eq5} yields
\begin{align*}
\mathbb{E}[V_{t+1} \bone_{\tau_{t_0} \geq t+1}\mid \mathcal{F}_t]
&\le
V_t \bone_{\tau_{t_0} \geq t} - c_0 \gamma_t V_t\bone_{\tau_{t_0} \geq t} + C \gamma_t^2 (V_t+1) \bone_{\tau_{t_0} \geq t}.
\end{align*}
Since $\gamma_t \to 0$, for all sufficiently large $t$,
\[
C\gamma_t^2 V_t \le \frac{c_0}{2}\gamma_t V_t.
\]
Thus,
\begin{equation*}
\mathbb{E}[V_{t+1}\bone_{\tau_{t_0} \geq t+1}\mid \mathcal{F}_t]
\le
(1-c_1\gamma_t)V_t \bone_{\tau_{t_0} \geq t} + C\gamma_t^2,
\end{equation*}
for some constant $c_1>0$.

Taking full expectation, we obtain
\begin{equation*}
\mathbb{E}[V_{t+1} \bone_{\tau_{t_0} \geq t+1}]
\le
(1-c_1\gamma_t)\mathbb{E}[V_t \bone_{\tau_{t_0} \geq t}] + C\gamma_t^2.
\end{equation*}

\medskip
\noindent
\textbf{Step 4: the convergence rates for $\be_t$ and $\rho(\be_t)$.}

The scalar recursion
\[
u_{t+1}\le (1-c_1\gamma_t)u_t + C\gamma_t^2
\]
implies
\[
u_t = \mathcal{O}(\gamma_t).
\]
Applying this with $u_t=\mathbb{E}[V_t \bone_{\tau_{t_0} \geq t}]$, we conclude that
\begin{equation*}
\mathbb{E}[V_t \bone_{\tau_{t_0} \geq t}] = \mathcal{O}(\gamma_t).
\end{equation*}
Finally, since $c_1 \|\be_t\|^2 \le V_t$, we obtain
\begin{equation*}
\mathbb{E}[\|\be_t\|^2 \bone_{\tau_{t_0} \geq t}] = \mathcal{O}(\gamma_t).
\end{equation*}
Therefore, $\|\be_t\| \bone_{\tau_{t_0} \geq t} = \mathcal{O}_{p}\left(\gamma_t^{1/2}\right)$ and $\left\|\rho(\be_t)\right\| \bone_{\tau_{t_0} \geq t} = o_{p}\left(\gamma_t^{1/2}\right)$. 

\medskip
\noindent
\textbf{Step 5: the convergence rate for the weighted term.}

Finally, we are ready to bound
\begin{equation*}
\begin{split}
        & \left\|\sum_{k=1}^{t} \Phi_{t,k+1} \gamma_k \rho(\be_k) \right\|\\
        & \leq \sum_{k=0}^{t} \left\| \Phi_{t,k+1}\right\| \gamma_k \left\| \rho(\be_k) \right\| \\
        & \leq C_0 \sum_{k=0}^{t} \prod_{j=k+1}^{t}\left(1-c\gamma_j\right) \gamma_k \left\| \rho(\be_k) \right\|.
\end{split}
\end{equation*}
Since $\left\|\rho(\be_t)\right\| \bone_{\tau_{t_0} \geq t} = o_{p}\left(\gamma_t^{1/2}\right)$, we have $\left\|\sum_{k=1}^{t} \Phi_{t,k+1} \gamma_k \rho(\be_k) \right\| = o_{p}\left(\gamma_t^{1/2}\right)$, by Lemma C.3 in \cite{na2025derivative}. We also note that $\left\|\br_t\right\| = \mathcal{O}_{p}(\gamma_t) = o_p(\gamma_t^{1/2})$ implies $\left\|\sum_{k=1}^{t} \Phi_{t,k+1} \gamma_k \br_k \right\| = o_{p}\left(\gamma_t^{1/2}\right)$, which completes the proof.

\subsection{Proof of \Cref{proposition2}}
\label{appendix_prop2}

We prove the two statements separately.

\paragraph{Characterization of the limiting point.}
Let $\bz^\star=(\bx^\star,\bmm^\star,\bv^\star)$ be a solution point of the AdamW mean-field dynamics. By definition,
\[
F(\bz^\star)=\bzero,
\]
that is,
\begin{align}
-\frac{\bmm^\star}{\sqrt{\bv^\star}+\varepsilon}-\lambda \bx^\star &= \bzero, \label{eq:adamw_eq_block1}\\
\alpha\bigl(\nabla f(\bx^\star)-\bmm^\star\bigr) &= \bzero, \label{eq:adamw_eq_block2}\\
\beta\bigl(\bq(\bx^\star)-\bv^\star\bigr) &= \bzero. \label{eq:adamw_eq_block3}
\end{align}
Since $\alpha,\beta>0$, \eqref{eq:adamw_eq_block2} and \eqref{eq:adamw_eq_block3} imply
\[
\bmm^\star=\nabla f(\bx^\star),
\qquad
\bv^\star=\bq(\bx^\star)=\bq^\star.
\]
Substituting these identities into \eqref{eq:adamw_eq_block1}, we obtain
\[
-\frac{\nabla f(\bx^\star)}{\sqrt{\bv^\star}+\varepsilon}-\lambda \bx^\star=\bzero,
\]
or equivalently,
\[
\frac{\nabla f(\bx^\star)}{\sqrt{\bv^\star}+\varepsilon}+\lambda \bx^\star=\bzero.
\]
This proves the first statement.

\paragraph{Computation of the Jacobian matrix.}
Write
\[
F(\bz)=
\begin{pmatrix}
F_1(\bx,\bmm,\bv)\\[1mm]
F_2(\bx,\bmm,\bv)\\[1mm]
F_3(\bx,\bmm,\bv)
\end{pmatrix},
\]
where
\begin{align}
F_1(\bx,\bmm,\bv) &= -\frac{\bmm}{\sqrt{\bv}+\varepsilon}-\lambda \bx,\\
F_2(\bx,\bmm,\bv) &= \alpha\bigl(\nabla f(\bx)-\bmm\bigr),\\
F_3(\bx,\bmm,\bv) &= \beta\bigl(\bq(\bx)-\bv\bigr).
\end{align}

We compute the block derivatives of $F$ at $\bz^\star=(\bx^\star,\bmm^\star,\bv^\star)$.

\medskip
\noindent
\emph{Derivative of the first block.}
Since
\[
F_1(\bx,\bmm,\bv)
=
-\lambda \bx
-
\operatorname{diag}\!\left(\frac{1}{\sqrt{\bv}+\varepsilon}\right)\bmm,
\]
we have
\[
\frac{\partial F_1}{\partial \bx}(\bz^\star)=-\lambda \bI_d.
\]
Differentiating with respect to $\bmm$ gives
\[
\frac{\partial F_1}{\partial \bmm}(\bz^\star)
=
-\operatorname{diag}\!\left(
\frac{1}{\sqrt{\bv^\star}+\varepsilon}
\right)
=: -\bD^\star.
\]
Next, for each component,
\[
[F_1]_i(\bx,\bmm,\bv)
=
-\frac{m_i}{\sqrt{v_i}+\varepsilon}-\lambda x_i.
\]
Hence, for $v_i^\star>0$,
\[
\frac{\partial [F_1]_i}{\partial v_i}(\bz^\star)
=
\frac{m_i^\star}{2\sqrt{v_i^\star}(\sqrt{v_i^\star}+\varepsilon)^2}.
\]
Therefore,
\[
\frac{\partial F_1}{\partial \bv}(\bz^\star)
=
\operatorname{diag}\!\left(
\frac{\bmm^\star}{2\sqrt{\bv^\star}(\sqrt{\bv^\star}+\varepsilon)^{\odot 2}}
\right)
=: \bC^\star,
\]
where the division and multiplication are conducted elementwise for vectors in $\frac{\bmm^\star}{2\sqrt{\bv^\star}(\sqrt{\bv^\star}+\varepsilon)^{\odot 2}}$, i.e., $\left[\frac{\bmm^\star}{2\sqrt{\bv^\star}(\sqrt{\bv^\star}+\varepsilon)^{\odot 2}}\right]_{i} = \frac{m_i^\star}{2 \sqrt{v_i^\star}(\sqrt{v_i^\star} + \varepsilon)^2}$.

\medskip
\noindent
\emph{Derivative of the second block.}
Since
\[
F_2(\bx,\bmm,\bv)=\alpha\bigl(\nabla f(\bx)-\bmm\bigr),
\]
we immediately obtain
\[
\frac{\partial F_2}{\partial \bx}(\bz^\star)=\alpha \nabla^2 f(\bx^\star),
\qquad
\frac{\partial F_2}{\partial \bmm}(\bz^\star)=-\alpha \bI_d,
\qquad
\frac{\partial F_2}{\partial \bv}(\bz^\star)=\bzero.
\]

\medskip
\noindent
\emph{Derivative of the third block.}
For
\[
F_3(\bx,\bmm,\bv)=\beta\bigl(\bq(\bx)-\bv\bigr),
\]
we have
\[
\frac{\partial F_3}{\partial \bmm}(\bz^\star)=\bzero,
\qquad
\frac{\partial F_3}{\partial \bv}(\bz^\star)=-\beta \bI_d,
\]
and
\[
\frac{\partial F_3}{\partial \bx}(\bz^\star) = \beta \nabla \bq(\bx^{\star}).
\]

Collecting all block derivatives, we conclude that
\[
\bJ=DF(\bz^\star)=
\begin{pmatrix}
-\lambda \bI_d & -\bD^\star & \bC^\star\\[2mm]
\alpha \nabla^2 f(\bx^\star) & -\alpha \bI_d & \bzero\\[2mm]
\beta\nabla \bq(\bx^\star)
& \bzero & -\beta \bI_d
\end{pmatrix}.
\]

\paragraph{Hurwitz property.}
Now we prove the Hurwitzness of the Jacobian matrix $\bJ$, under certain conditions.
We decompose $\bJ$ as
\[
\bJ=\bJ_0+\bE,
\]
where
\begin{equation}
\label{eq:J0_adamw}
\bJ_0=
\begin{pmatrix}
-\lambda \bI_d & -\bD^\star & \bzero\\[1mm]
\alpha \nabla^2 f(\bx^\star) & -\alpha \bI_d & \bzero\\[1mm]
\beta \nabla \bq(\bx^\star) & \bzero & -\beta \bI_d
\end{pmatrix},
\end{equation}
and
\begin{equation}
\label{eq:E_adamw}
\bE=
\begin{pmatrix}
\bzero & \bzero & \bC^\star\\[1mm]
\bzero & \bzero & \bzero\\[1mm]
\bzero & \bzero & \bzero
\end{pmatrix}.
\end{equation}

We first show that $\bJ_0$ is Hurwitz. Observe that $\bJ_0$ is block lower triangular, so its eigenvalues are the union of those of
\[
\bM=
\begin{pmatrix}
-\lambda \bI_d & -\bD^\star\\
\alpha \nabla^2 f(\bx^\star) & -\alpha \bI_d
\end{pmatrix}
\]
and those of $-\beta \bI_d$. Since $\beta>0$, the block $-\beta \bI_d$ is clearly Hurwitz. It therefore suffices to show that $\bM$ is Hurwitz. The proof is similar to that for \Cref{proposition1}.

Let $\mu\in\mathbb C$ be an eigenvalue of $\bM$ with corresponding eigenvector $(\bu^\top,\bv^\top)^\top\neq \bzero$. Then
\[
\begin{pmatrix}
-\lambda \bI_d & -\bD^\star\\
\alpha \nabla^2 f(\bx^\star) & -\alpha \bI_d
\end{pmatrix}
\begin{pmatrix}
\bu\\
\bv
\end{pmatrix}
=
\mu
\begin{pmatrix}
\bu\\
\bv
\end{pmatrix},
\]
which is equivalent to
\begin{align}
-(\lambda+\mu)\bu-\bD^\star\bv&=\bzero, \label{eq:adamw_hurwitz_1}\\
\alpha \nabla^2 f(\bx^\star)\bu-(\alpha+\mu)\bv&=\bzero. \label{eq:adamw_hurwitz_2}
\end{align}
From \eqref{eq:adamw_hurwitz_2}, we obtain
\[
\bv=\frac{\alpha}{\alpha+\mu}\nabla^2 f(\bx^\star)\bu.
\]
Substituting this into \eqref{eq:adamw_hurwitz_1}, we get
\[
(\mu+\lambda)\bu+\frac{\alpha}{\alpha+\mu}\bD^\star \nabla^2 f(\bx^\star)\bu=\bzero.
\]
Equivalently,
\[
(\mu+\lambda)(\mu+\alpha)\bu+\alpha \bD^\star \nabla^2 f(\bx^\star)\bu=\bzero.
\]

Now define
\[
\bH^\star:=\nabla^2 f(\bx^\star),
\qquad
\bB:=(\bD^\star)^{1/2}\bH^\star(\bD^\star)^{1/2}.
\]
Since $\bD^\star\succ \bzero$ and $\bH^\star\succ \bzero$, we have $\bB\succ \bzero$. Hence every eigenvalue $\theta$ of $\bB$ satisfies $\theta>0$. It follows that every eigenvalue $\mu$ of $\bM$ must satisfy
\[
(\mu+\lambda)(\mu+\alpha)+\alpha\theta=0,
\]
that is,
\[
\mu^2+(\alpha+\lambda)\mu+\alpha(\lambda+\theta)=0,
\qquad \theta>0.
\]
Since all coefficients are strictly positive, both roots have strictly negative real parts. Therefore, every eigenvalue of $\bM$ lies in the open left half-plane, and thus $\bM$ is Hurwitz. Consequently, $\bJ_0$ is Hurwitz.

Since $\bJ_0$ is Hurwitz, by the Lyapunov theorem there exists a symmetric positive definite matrix $\bP\succ \bzero$ such that
\[
\bJ_0^\top \bP+\bP\bJ_0=-\bI.
\]
For the full matrix $\bJ=\bJ_0+\bE$, we have
\[
\bJ^\top \bP+\bP\bJ
=
-\bI+\bE^\top \bP+\bP\bE.
\]
Thus, if
\[
\|\bE^\top \bP+\bP\bE\|<1,
\]
then $\bJ^\top \bP+\bP\bJ\prec \bzero$, and therefore $\bJ$ is Hurwitz.

Using the submultiplicative property of the operator norm,
\[
\|\bE^\top \bP+\bP\bE\|
\le
2\|\bP\|\,\|\bE\|
=
2\|\bP\|\,\|\bC^\star\|.
\]
Hence it suffices to require
\[
2\|\bP\|\,\|\bC^\star\|<1.
\]

It remains to estimate $\|\bC^\star\|$. Since
\[
\bmm^\star=\nabla f(\bx^\star)
\]
and $v_i^\star\ge \underline v>0$, we have
\[
|C_{ii}^\star|
=
\left|
\frac{m_i^\star}{2\sqrt{v_i^\star}(\sqrt{v_i^\star}+\varepsilon)^2}
\right|
\le
\frac{\|\nabla f(\bx^\star)\|_\infty}{2\sqrt{\underline v}(\sqrt{\underline v}+\varepsilon)^2}.
\]
Therefore,
\[
\|\bC^\star\|
\le
\frac{\|\nabla f(\bx^\star)\|_\infty}{2\sqrt{\underline v}(\sqrt{\underline v}+\varepsilon)^2}.
\]
It follows that if
\[
\|\nabla f(\bx^\star)\|_\infty
<
\frac{\sqrt{\underline v}(\sqrt{\underline v}+\varepsilon)^2}{\|\bP\|},
\]
then
\[
2\|\bP\|\,\|\bC^\star\|<1,
\]
and hence $\bJ$ is Hurwitz.

\subsection{Proof of \Cref{prop_batch_mean}}
\label{appendix_batch_mean}

We provide a proof of the consistency of the batch-means estimator used in
\Cref{prop_batch_mean}. The argument follows the standard batch-means
variance-estimation principle for dependent stochastic processes. We state the
regularity conditions explicitly for completeness.

Let $s_t=s_{\ell,j}(t)$ be the empirical score process for a fixed LoRA
component and let $s^\star=s_{\ell,j}^{\star}$. Define
\[
    X_t := s_t-s^\star .
\]
We assume that the centered score process satisfies a functional central limit
theorem with long-run variance $\bar\sigma_{\ell,j}^{2}$, that is,
\[
    \frac{1}{\sqrt{t}}\sum_{\tau=1}^{\lfloor ct\rfloor} X_\tau
    \Rightarrow
    \bar\sigma_{\ell,j} B(c),
    \qquad c\in[0,1],
\]
where $B(\cdot)$ is a standard Brownian motion. In addition, we assume the
usual moment and weak-dependence conditions for non-overlapping batch-means
consistency. Equivalently, for batch length $b\to\infty$, number of batches
$M\to\infty$, and $t=Mb$ with $b/t\to0$, the block variables
\[
    Y_{m,b}
    :=
    \frac{1}{\sqrt b}
    \sum_{\tau=(m-1)b+1}^{mb} X_\tau,
    \qquad m=1,\ldots,M,
\]
satisfy
\[
    \frac1M\sum_{m=1}^M Y_{m,b}^2
    \xrightarrow{p}
    \bar\sigma_{\ell,j}^{2},
    \qquad
    \frac1M\sum_{m=1}^M Y_{m,b}
    \xrightarrow{p}
    0 .
\]
These conditions hold under standard mixing and moment assumptions used for
batch-means estimators of long-run variances.

Recall that
\[
    \bar s_{m,b}
    =
    \frac1b\sum_{\tau=(m-1)b+1}^{mb}s_\tau,
    \qquad
    \bar s_t
    =
    \frac1t\sum_{\tau=1}^{t}s_\tau .
\]
Since subtracting the constant $s^\star$ does not change centered batch
differences, we have
\[
    \bar s_{m,b}-\bar s_t
    =
    \bar X_{m,b}-\bar X_t,
\]
where
\[
    \bar X_{m,b}
    =
    \frac1b\sum_{\tau=(m-1)b+1}^{mb}X_\tau,
    \qquad
    \bar X_t
    =
    \frac1t\sum_{\tau=1}^{t}X_\tau .
\]
By the definition of $Y_{m,b}$,
\[
    \bar X_{m,b}
    =
    \frac{1}{\sqrt b}Y_{m,b}.
\]
Moreover,
\[
    \bar X_t
    =
    \frac1M\sum_{m=1}^M \bar X_{m,b}
    =
    \frac{1}{\sqrt b}
    \left(
        \frac1M\sum_{m=1}^M Y_{m,b}
    \right).
\]
Therefore,
\[
    \bar s_{m,b}-\bar s_t
    =
    \frac{1}{\sqrt b}
    \left(
        Y_{m,b}
        -
        \bar Y_{M,b}
    \right),
    \qquad
    \bar Y_{M,b}
    :=
    \frac1M\sum_{m=1}^M Y_{m,b}.
\]
Substituting this expression into the batch-means estimator gives
\[
\begin{aligned}
    \widehat{\bar\sigma}_{\ell,j}^{2}
    &=
    \frac{b}{M-1}
    \sum_{m=1}^M
    \left(
        \bar s_{m,b}-\bar s_t
    \right)^2                                                \\
    &=
    \frac{1}{M-1}
    \sum_{m=1}^M
    \left(
        Y_{m,b}-\bar Y_{M,b}
    \right)^2 .
\end{aligned}
\]
Expanding the centered empirical second moment, we obtain
\[
\begin{aligned}
    \widehat{\bar\sigma}_{\ell,j}^{2}
    &=
    \frac{M}{M-1}
    \left[
        \frac1M\sum_{m=1}^M Y_{m,b}^2
        -
        \left(
            \frac1M\sum_{m=1}^M Y_{m,b}
        \right)^2
    \right].
\end{aligned}
\]
By the batch-means regularity conditions,
\[
    \frac1M\sum_{m=1}^M Y_{m,b}^2
    \xrightarrow{p}
    \bar\sigma_{\ell,j}^{2},
    \qquad
    \left(
        \frac1M\sum_{m=1}^M Y_{m,b}
    \right)^2
    \xrightarrow{p}
    0 .
\]
Since $M/(M-1)\to1$, Slutsky's theorem yields
\[
    \widehat{\bar\sigma}_{\ell,j}^{2}
    \xrightarrow{p}
    \bar\sigma_{\ell,j}^{2}.
\]
This proves the consistency of the batch-means variance estimator.

\subsection{Proof for \Cref{proposition1}}

We prove the two statements separately.

\paragraph{Characterization of the limiting point.}
Recall that the mean-field mapping for Adam is given by
\begin{equation*}
F(\bz)=
\begin{pmatrix}
-\dfrac{\bmm}{\sqrt{\bv}+\varepsilon}\\[1mm]
\alpha\bigl(\nabla f(\bx)-\bmm\bigr)\\[1mm]
\beta\bigl(\bq(\bx)-\bv\bigr)
\end{pmatrix},
\qquad
\bz=(\bx,\bmm,\bv)\in\mathbb{R}^{3d},
\end{equation*}
where the division is understood componentwise.

A solution $\bz^\star=(\bx^\star,\bmm^\star,\bv^\star)$ of the mean-field function satisfies
\[
F(\bz^\star)=\bzero.
\]
Hence its three blocks must satisfy
\begin{align}
-\frac{\bmm^\star}{\sqrt{\bv^\star}+\varepsilon} &= \bzero, \label{eq:eq1}\\
\alpha\bigl(\nabla f(\bx^\star)-\bmm^\star\bigr) &= \bzero, \label{eq:eq2}\\
\beta\bigl(\bq(\bx^{\star})-\bv^\star\bigr) &= \bzero. \label{eq:eq3}
\end{align}
Since $\alpha,\beta>0$ and $\sqrt{\bv^\star}+\varepsilon$ is strictly positive componentwise, \eqref{eq:eq1} implies
\[
\bmm^\star=\bzero.
\]
Substituting this into \eqref{eq:eq2} yields
\[
\nabla f(\bx^\star)=\bzero,
\]
so that $\bx^\star$ is a stationary point of the objective function. Finally, \eqref{eq:eq3} gives
\[
\bv^\star=\bq(\bx^\star) = \bq^{\star}.
\]
Therefore, every solution is of the form
\[
\bz^\star=(\bx^\star,\bzero,\bq^{\star}),
\qquad
\nabla f(\bx^\star)=\bzero.
\]

\paragraph{Jacobian matrix at the limiting point.}
We now compute the Jacobian matrix $\bJ=DF(\bz^\star)$ at the limiting point
\[
\bz^\star=(\bx^\star,\bzero,\bq^{\star}),
\qquad
\nabla f(\bx^\star)=\bzero.
\]
Write
\[
F(\bz)=
\begin{pmatrix}
F_1(\bx,\bmm,\bv)\\[1mm]
F_2(\bx,\bmm,\bv)\\[1mm]
F_3(\bx,\bmm,\bv)
\end{pmatrix},
\]
where
\begin{align}
F_1(\bx,\bmm,\bv) &= -\frac{\bmm}{\sqrt{\bv}+\varepsilon},\\
F_2(\bx,\bmm,\bv) &= \alpha\bigl(\nabla f(\bx)-\bmm\bigr),\\
F_3(\bx,\bmm,\bv) &= \beta\bigl(\bq(\bx)-\bv\bigr).
\end{align}

We compute each block derivative.

\medskip
\noindent
\emph{Derivative of the first block.}
Since $F_1$ does not depend explicitly on $\bx$, we have
\[
\frac{\partial F_1}{\partial \bx}(\bz^\star)=\bzero.
\]
Differentiating with respect to $\bmm$ gives
\[
\frac{\partial F_1}{\partial \bmm}(\bz^\star)
=
-\operatorname{diag}\!\left(\frac{1}{\sqrt{\bq^\star}+\varepsilon}\right).
\]
Finally, differentiating $F_1$ with respect to $\bv$ produces terms proportional to $\bmm$, and therefore
\[
\frac{\partial F_1}{\partial \bv}(\bz^\star)=\bzero
\]
since $\bmm^\star=\bzero$.

\medskip
\noindent
\emph{Derivative of the second block.}
For
\[
F_2(\bx,\bmm,\bv)=\alpha\bigl(\nabla f(\bx)-\bmm\bigr),
\]
we immediately obtain
\[
\frac{\partial F_2}{\partial \bx}(\bz^\star)=\alpha \nabla^2 f(\bx^\star),
\qquad
\frac{\partial F_2}{\partial \bmm}(\bz^\star)=-\alpha \bI_d,
\qquad
\frac{\partial F_2}{\partial \bv}(\bz^\star)=\bzero.
\]

\medskip
\noindent
\emph{Derivative of the third block.}
For
\[
F_3(\bx,\bmm,\bv)=\beta\bigl(\bq(\bx)-\bv\bigr),
\]
we have
\[
\frac{\partial F_3}{\partial \bmm}(\bz^\star)=\bzero,
\qquad
\frac{\partial F_3}{\partial \bv}(\bz^\star)=-\beta \bI_d.
\]
Moreover, the derivative of $\bq(\bx)$ with respect to $\bx$ is
$\beta\nabla \bq(\bx^{\star})$.

Collecting all the block derivatives, we conclude that
\[
\bJ = DF(\bz^\star)
=
\begin{pmatrix}
\bzero & - \operatorname{diag}\!\bigl(\frac{1}{\sqrt{\bq^{\star}} + \varepsilon}\bigr) & \bzero \\
\alpha \nabla^2 f(\bx^\star) & -\alpha \bI_d & \bzero\\
\beta \nabla \bq(\bx^{\star}) & \bzero & -\beta \bI_d
\end{pmatrix},
\]

\paragraph{Hurwitz property.}
Write $\bJ$ in the block form
\[
\bJ=
\begin{pmatrix}
\bM & \bzero\\
\bN & -\beta \bI_d
\end{pmatrix},
\]
where
\[
\bM=
\begin{pmatrix}
\bzero & -\bD^\star\\
\alpha \nabla^2 f(\bx^\star) & -\alpha \bI_d
\end{pmatrix},
\qquad
\bN=
\begin{pmatrix}
\beta \nabla \bq(\bx^\star) & \bzero
\end{pmatrix}.
\]
Since $\bJ$ is block lower triangular, its eigenvalues are the union of those of $\bM$ and those of $-\beta \bI_d$. The latter are all equal to $-\beta<0$. Therefore, it remains to show that $\bM$ is Hurwitz.

Let $\mu\in\mathbb C$ be an eigenvalue of $\bM$ with corresponding eigenvector $(\bu^\top,\bv^\top)^\top\neq \bzero$. Then
\[
\begin{pmatrix}
\bzero & -\bD^\star\\
\alpha \nabla^2 f(\bx^\star) & -\alpha \bI_d
\end{pmatrix}
\begin{pmatrix}
\bu\\
\bv
\end{pmatrix}
=
\mu
\begin{pmatrix}
\bu\\
\bv
\end{pmatrix},
\]
which yields
\begin{align}
-\bD^\star \bv &= \mu \bu, \label{eq:adam_eig1}\\
\alpha \nabla^2 f(\bx^\star)\bu - \alpha \bv &= \mu \bv. \label{eq:adam_eig2}
\end{align}
From \eqref{eq:adam_eig1}, we obtain
\[
\bv=-\mu (\bD^\star)^{-1}\bu.
\]
Substituting this into \eqref{eq:adam_eig2} gives
\[
\alpha \nabla^2 f(\bx^\star)\bu + \alpha\mu (\bD^\star)^{-1}\bu
=
-\mu^2 (\bD^\star)^{-1}\bu.
\]
Equivalently,
\[
\alpha \nabla^2 f(\bx^\star)\bu + (\alpha\mu+\mu^2)(\bD^\star)^{-1}\bu=\bzero.
\]

Now define
\[
\bw:=(\bD^\star)^{-1/2}\bu,
\qquad
\bB:=(\bD^\star)^{1/2}\nabla^2 f(\bx^\star)(\bD^\star)^{1/2}.
\]
Then the above equation becomes
\[
\alpha \bB \bw + (\alpha\mu+\mu^2)\bw=\bzero.
\]
Since $\nabla^2 f(\bx^\star)\succ \bzero$ and $\bD^\star\succ \bzero$, it follows that $\bB\succ \bzero$. Hence every eigenvalue $\theta$ of $\bB$ is strictly positive, and $\mu$ must satisfy
\[
\mu^2+\alpha\mu+\alpha\theta=0,
\qquad \theta>0.
\]
Because both coefficients $\alpha$ and $\alpha\theta$ are strictly positive, both roots of this quadratic have strictly negative real parts. Therefore, every eigenvalue of $\bM$ lies in the open left half-plane, and hence $\bM$ is Hurwitz.

Combining this with the fact that $-\beta \bI_d$ is Hurwitz, we conclude that $\bJ$ is Hurwitz.

\subsection{Proof for \Cref{proposition3}}

We prove the two statements separately.

\paragraph{Characterization of the limiting point.}
Recall that the mean-field mapping for AdaFactor is
\begin{equation}
F(\bz)=
\begin{pmatrix}
-\dfrac{\nabla f(\bx)}{\sqrt{\bv}+\varepsilon}\\[2mm]
\beta\bigl(\bq(\bx)-\bv\bigr)
\end{pmatrix},
\qquad
\bz=(\bx,\bv)\in\mathbb R^{2d},
\end{equation}
where the division is understood componentwise.

Let $\bz^\star=(\bx^\star,\bv^\star)$ be a solution of the mean-field function. Then
\[
F(\bz^\star)=\bzero,
\]
which is equivalent to
\begin{align}
-\frac{\nabla f(\bx^\star)}{\sqrt{\bv^\star}+\varepsilon} &= \bzero, \label{eq:adafactor_eq1}\\
\beta\bigl(\bq(\bx^\star)-\bv^\star\bigr) &= \bzero. \label{eq:adafactor_eq2}
\end{align}
Since $\varepsilon>0$, the vector $\sqrt{\bv^\star}+\varepsilon$ is strictly positive componentwise. Therefore, \eqref{eq:adafactor_eq1} implies
\[
\nabla f(\bx^\star)=\bzero.
\]
Substituting this identity into \eqref{eq:adafactor_eq2}, and using $\beta>0$, we obtain
\[
\bv^\star=\bq(\bx^\star) = \bq^\star.
\]
Hence, every solution is of the form
\[
\bz^\star=(\bx^\star,\bv^\star),
\quad
\nabla f(\bx^\star)=\bzero, \quad \bv^\star = \bq^\star.
\]

\paragraph{Computation of the Jacobian matrix.}
We now compute the Jacobian matrix $\bJ=DF(\bz^\star)$ at the limiting point
\[
\bz^\star=(\bx^\star,\bv^\star),
\qquad
\nabla f(\bx^\star)=\bzero.
\]
Write
\[
F(\bz)=
\begin{pmatrix}
F_1(\bx,\bv)\\[1mm]
F_2(\bx,\bv)
\end{pmatrix},
\]
where
\begin{align}
F_1(\bx,\bv) &= -\frac{\nabla f(\bx)}{\sqrt{\bv}+\varepsilon},\\
F_2(\bx,\bv) &= \beta\bigl(\bq(\bx)-\bv\bigr).
\end{align}

We compute each block derivative.

\medskip
\noindent
\emph{Derivative of the first block.}
Since
\[
F_1(\bx,\bv)
=
-\operatorname{diag}\!\left(\frac{1}{\sqrt{\bv}+\varepsilon}\right)\nabla f(\bx),
\]
we have
\[
\frac{\partial F_1}{\partial \bx}(\bz^\star)
=
-\operatorname{diag}\!\left(\frac{1}{\sqrt{\bv^\star}+\varepsilon}\right)
\nabla^2 f(\bx^\star).
\]

Next, we compute the derivative with respect to $\bv$. Componentwise,
\[
[F_1]_i(\bx,\bv)
=
-\frac{\partial_i f(\bx)}{\sqrt{v_i}+\varepsilon}.
\]
Hence
\[
\frac{\partial [F_1]_i}{\partial v_i}(\bz^\star)
=
\frac{\partial_i f(\bx^\star)}{2\sqrt{v_i^\star}(\sqrt{v_i^\star}+\varepsilon)^2}.
\]
Since $\nabla f(\bx^\star)=\bzero$, we conclude that
\[
\frac{\partial F_1}{\partial \bv}(\bz^\star)=\bzero.
\]

\medskip
\noindent
\emph{Derivative of the second block.}
For
\[
F_2(\bx,\bv)=\beta\bigl(\bq(\bx)-\bv\bigr),
\]
we have
\begin{equation*}
    \frac{\partial F_2}{\partial \bx}(\bz^\star) = \beta \nabla \bq(\bx^\star),
\end{equation*}
and
\begin{equation*}
    \frac{\partial F_2}{\partial \bv}(\bz^\star) = -\beta\bI_d.
\end{equation*}

Collecting all block derivatives, we obtain
\[
\bJ=DF(\bz^\star)
=
\begin{pmatrix}
-\operatorname{diag}\!\left(\frac{1}{\sqrt{\bv^\star}+\varepsilon}\right)
\nabla^2 f(\bx^\star) & \bzero\\[2mm]
\beta \nabla \bq(\bx^\star) & -\beta \bI_d
\end{pmatrix}.
\]

\paragraph{Hurwitz property.}
The matrix $\bJ$ is block lower triangular. Therefore, its eigenvalues are the union of those of
\[
-\operatorname{diag}\!\left(\frac{1}{\sqrt{\bv^\star}+\varepsilon}\right)\nabla^2 f(\bx^\star)
\]
and those of
\[
-\beta \bI_d.
\]
Since $\beta>0$, the block $-\beta \bI_d$ is Hurwitz.

Now assume that $\nabla^2 f(\bx^\star)\succ \bzero$. Since $\varepsilon>0$, the diagonal matrix
\[
\bD^\star:=\operatorname{diag}\!\left(\frac{1}{\sqrt{\bv^\star}+\varepsilon}\right)
\]
is positive definite. Then
\[
\bD^\star \nabla^2 f(\bx^\star)
\]
has strictly positive eigenvalues, because it is similar to
\[
(\bD^\star)^{1/2}\nabla^2 f(\bx^\star)(\bD^\star)^{1/2}\succ \bzero.
\]
Hence
\[
-\bD^\star \nabla^2 f(\bx^\star)
\]
has strictly negative eigenvalues. Therefore, the upper-left block is Hurwitz. Since both diagonal blocks are Hurwitz, $\bJ$ is Hurwitz.

\newpage

\section{Proofs for Some Technical Lemmas}
\label{appendix_proof_technical_lemmas}
\subsection{Proof for \Cref{lemma2}}
\begin{proof}
Since $\bJ$ is Hurwitz, by the Lyapunov theorem, there exists a symmetric positive definite matrix $\bP\succ 0$ such that
\begin{equation*}
\bJ^\top \bP + \bP \bJ = -\bI.
\end{equation*}
Define the vector norm
\begin{equation*}
\|\bx\|_* := (\bx^\top \bP \bx)^{1/2},
\qquad \bx\in\mathbb{R}^d,
\end{equation*}
and let $\|\cdot\|_*$ denote the associated induced matrix norm:
\begin{equation*}
\|\bA\|_* := \sup_{\bx\neq \bzero}\frac{\|\bA\bx\|_*}{\|\bx\|_*}.
\end{equation*}
Since $\|\cdot\|_*$ is an induced matrix norm, it is submultiplicative:
\begin{equation}
\label{eq6}
\|\bA\bB\|_* \le \|\bA\|_*\,\|\bB\|_*.
\end{equation}

Now fix $\gamma>0$. For any $\bx\in\mathbb{R}^d$,
\[
\|(\bI+\gamma \bJ)\bx\|_*^2
=
\bx^\top (\bI+\gamma \bJ)^\top \bP (\bI+\gamma \bJ)\bx.
\]
Expanding,
\[
(\bI+\gamma \bJ)^\top \bP (\bI+\gamma \bJ)
=
\bP+\gamma(\bJ^\top \bP+\bP \bJ)+\gamma^2 \bJ^\top \bP \bJ.
\]
Using $\bJ^\top \bP+\bP\bJ=-\bI$, we get
\[
(\bI+\gamma \bJ)^\top \bP (\bI+\gamma \bJ)
=
\bP-\gamma \bI+\gamma^2 \bJ^\top \bP \bJ.
\]
Since $\bP\succ \bzero$, there exists $b>0$ such that
\[
\bJ^\top \bP \bJ \preceq b \bP.
\]
Also, since $\bP\succ \bzero$, there exists $a>0$ such that
\[
\bI \succeq a \bP.
\]
Therefore
\[
(\bI+\gamma \bJ)^\top \bP (\bI+\gamma \bJ)
\preceq
\bP-a\gamma \bP+b\gamma^2 \bP
=
(1-a\gamma+b\gamma^2)\bP.
\]
Hence
\[
\|\bI+\gamma \bJ\|_*^2 \le 1-a\gamma+b\gamma^2.
\]
Choose $\gamma_*>0$ small enough so that for all $0<\gamma\le \gamma_*$,
\[
1-a\gamma+b\gamma^2 \le (1-c\gamma)^2
\]
for some $c\in(0,a/2)$. Then
\[
\|\bI+\gamma \bJ\|_* \le 1-c\gamma,
\qquad 0<\gamma\le \gamma_*.
\]
Since $\gamma_k\to 0$, there exists $N$ such that $\gamma_k\le \gamma_*$ for all $k\ge N$. Thus for all $k\ge N$,
\begin{equation*}
\|\bI+\gamma_k \bJ\|_* \le 1-c\gamma_k.
\end{equation*}

By submultiplicativity in \eqref{eq6},
\[
\|\Phi_{t,k}\|_*
=
\left\|
\prod_{j=k}^t (\bI+\gamma_j \bJ)
\right\|_*
\le
\prod_{j=k}^t \|\bI+\gamma_j \bJ\|_*
\le
\prod_{j=k}^t (1-c\gamma_j).
\]

Finally, since all matrix norms on $\mathbb{R}^{d\times d}$ are equivalent, there exists $C_0>0$ such that
\[
\|\bA\|_2 \le C_0 \|\bA\|_*
\]
for all matrices $\bA$, which yields
\[
\|\Phi_{t,k}\|_2
\le
C_0\prod_{j=k}^n (1-c\gamma_j).
\]
\end{proof}

\subsection{Proof for \Cref{lemma3}}
Since $\bJ$ is Hurwitz, the Lyapunov equation
\begin{equation*}
\bJ\bSig+\bSig \bJ^\top + \bQ = \bzero
\end{equation*}
admits a unique positive semidefinite solution $\bSig$.
We show that $\bU_t\to \bSig$.

\medskip
\noindent
\textbf{Step 1: a recursion for $\bU_t$.}

By definition,
\[
\bU_{t+1}
=
\sum_{k=1}^{t+1}
\frac{\gamma_k^2}{\gamma_{t+1}}\,
\Phi_{t+1,k+1}\bQ\Phi_{t+1,k+1}^\top.
\]
Separating the last term $k=t+1$, and noting that $\Phi_{t+1,t+2}=\bI$, we obtain
\[
\bU_{t+1}
=
\frac{\gamma_t}{\gamma_{t+1}}
(\bI+\gamma_{t+1}\bJ)
\left(
\sum_{k=1}^{t}
\frac{\gamma_k^2}{\gamma_t}\,
\Phi_{t,k+1}\bQ\Phi_{t,k+1}^\top
\right)
(\bI+\gamma_{t+1}\bJ)^\top
+
\gamma_{t+1}\bQ.
\]
Hence
\begin{equation}
\label{eq7}
\bU_{t+1}
=
\frac{\gamma_t}{\gamma_{t+1}}
(\bI+\gamma_{t+1}\bJ)\bU_t(\bI+\gamma_{t+1}\bJ)^\top
+
\gamma_{t+1}\bQ.
\end{equation}

\medskip
\noindent
\textbf{Step 2: expand the ratio $\gamma_t/\gamma_{t+1}$.}

Since
\[
\gamma_t=\gamma_0 t^{-\kappa},
\qquad
\kappa\in(1/2,1),
\]
we have
\[
\frac{\gamma_t}{\gamma_{t+1}}
=
\left(\frac{t+1}{t}\right)^\kappa
=
1+\frac{\kappa}{t}+o(t^{-1}).
\]
Because $\kappa<1$, we have
\[
\frac{1}{t}=o(\gamma_t),
\]
and thus
\begin{equation*}
\frac{\gamma_t}{\gamma_{t+1}} = 1+o(\gamma_t).
\end{equation*}

Therefore, \eqref{eq7} implies that
\begin{align*}
\bU_{t+1}
&=
(1+o(\gamma_t))
(\bI+\gamma_{t+1}\bJ)\bU_t(\bI+\gamma_{t+1}\bJ)^\top
+\gamma_{t+1}\bQ.
\end{align*}
and expanding the product yields
\begin{align*}
\bU_{t+1}
&=
\bU_t
+\gamma_t(\bJ\bU_t+\bU_t\bJ^\top + \bQ)
+\bR_t,
\end{align*}
where the residual term satisfies
\begin{equation*}
\|\bR_t\|
\le
o(\gamma_t)(1+\|\bU_t\|).
\end{equation*}
Note that 
\begin{equation*}
    \left\|\bU_t \right\| \leq C\gamma_t^{-1} \sum_{k=1}^{t} \prod_{j=k+1}^{t}\left(1 - c\gamma_j\right)^2 \gamma_k^2 = \mathcal{O}\left(1\right),
\end{equation*}
we have 
\begin{equation*}
\|\bR_t\|
\le
o(\gamma_t).
\end{equation*}

\medskip
\noindent
\textbf{Step 3: compare $\bU_t$ with the Lyapunov solution $\bSig$.}

Define
\[
\bE_t:=\bU_t-\bSig.
\]
Since $\bSig$ solves
\[
\bJ\bSig+\bSig \bJ^\top+\bQ=\bzero,
\]
we get
\begin{equation}
\label{eq8}
    \begin{split}
        \bE_{t+1}
&=
\bE_t+\gamma_t(\bJ\bE_t+\bE_t\bJ^\top)+\bR_t\\
& = \left( \bI + \gamma_t\bJ\right)\bE_t\left( \bI + \gamma_t\bJ^{\top}\right) + \widetilde{\bR}_{t},
    \end{split}
\end{equation}
where $\widetilde{\bR}_{t} = \bR_t - \gamma_t^2 \bJ \bE_t\bJ^{\top}$ and thus $\left\|\widetilde{\bR}_{t} \right\| = o(\gamma_t)$.

\medskip
\noindent
\textbf{Step 4: stability of the homogeneous part.}

Applying the recursion for $\bE_t$ in \eqref{eq8} multiple times, we obtain
\begin{equation*}
\bE_t
=
\Phi_{t,1}\bE_1\Phi_{t,1}^{\top}
+
\sum_{k=1}^{t-1}\Phi_{t,k+1}\widetilde{\bR}_k\Phi_{t,k+1}^{\top}.
\end{equation*}
The first term vanishes because of the exponential stability:
\[
\Phi_{t,1}\bE_1\Phi_{t,1}^{\top} \to \bzero.
\]
For the second term, since $\left\|\widetilde{\bR}_k\right\|=o(\gamma_k)$, we have
\begin{equation*}
\begin{split}
        & \left\| \sum_{k=1}^{t-1}\Phi_{t,k+1}\widetilde{\bR}_k\Phi_{t,k+1}^{\top}\right\|\\
        & \leq \sum_{k=1}^{t-1} \left\|\Phi_{t,k+1} \right\|^2\left\| \widetilde{\bR}_k \right\|\\
        & \leq C \sum_{k=1}^{t-1} \prod_{j=k+1}^{t-1} \left(1-c\gamma_j\right)^2 \left\|\widetilde{\bR}_k\right\| = o(1).
\end{split}
\end{equation*}

Hence
\[
\bE_t\to \bzero,
\]
that is,
\[
\bU_t\to \bSig.
\]
This completes the proof.

\end{document}